\theoremstyle{plain}
\newtheorem{theorem}{Theorem}
\newtheorem{lemma}{Lemma}
\theoremstyle{definition}
\newtheorem{assumption}{Assumption}
\theoremstyle{remark}
\newtheorem{remark}{Remark}
\definecolor{rose}{rgb}{1.0, 0.33, 0.64}
\renewcommand{\thefootnote}{\fnsymbol{footnote}}
\title{Learning Shortest Paths When Data is Scarce}
\author{Dmytro Matsypura, Yu Pan, Hanzhao Wang}
\date{\small Discipline of Business Analytics, The University of Sydney\\
January 2026}
\begin{document}
\maketitle
\onehalfspacing
\def\thefootnote{}\relax\footnotetext{ Correspondence to 
yu.pan@sydney.edu.au
}
\begin{abstract}
Digital twins and other simulators are increasingly used to support routing decisions in large-scale networks. However, simulator outputs often exhibit systematic bias, while ground-truth measurements are costly and scarce. We study a stochastic shortest-path problem in which a planner has access to abundant synthetic samples, limited real-world observations, and an edge-similarity structure capturing expected behavioral similarity across links. We model the simulator-to-reality discrepancy as an unknown, edge-specific bias that varies smoothly over the similarity graph, and estimate it using Laplacian-regularized least squares. This approach yields calibrated edge cost estimates even in data-scarce regimes. We establish finite-sample error bounds, translate estimation error into path-level suboptimality guarantees, and propose a computable, data-driven certificate that verifies near-optimality of a candidate route. For cold-start settings without initial real data, we develop a bias-aware active learning algorithm that leverages the simulator and adaptively selects edges to measure until a prescribed accuracy is met. Numerical experiments on multiple road networks and traffic graphs further demonstrate the effectiveness of our methods.
\end{abstract}

\section{Introduction}
\label{sec:intro}




Shortest-path models are a workhorse of operations management, underpinning routing, logistics, and service-network decisions in freight transportation, last-mile delivery, ride-hailing, and emergency response. In many of these applications, firms repeatedly solve shortest-path problems on large infrastructure networks while rarely observing the relevant edge costs under the operational conditions of interest. To fill this gap, organizations increasingly rely on simulators and digital twins to generate synthetic edge cost data at scale and under counterfactual scenarios \citep{simdata_greasley_owen_2018,simdata_wang_et_al_2022,simdata_mahmoodi_et_al_2024}. The appeal is clear: synthetic data are abundant, cheap to generate, and easy to tailor to new planning scenarios. The challenge is that simulation models are inevitably misspecified—inputs are estimated from finite or outdated data, model structure is simplified, and exogenous shocks or behavioral shifts are hard to capture \citep{simdata_johnson_mollaghasemi_1994,simdata_goeva_lam_qian_zhang_2019}. As a result, a path that is optimal in the simulator can be substantially suboptimal in the real system, even when the simulator appears accurate on average.

This paper asks how an operations planner should combine a biased-but-cheap simulator with scarce-but-reliable real measurements when the goal is to find a shortest path in the real network. We study a stochastic shortest-path problem on a graph where each edge has an unknown mean cost. The planner observes (i) a small number of noisy real measurements on some edges, (ii) a large number of synthetic samples on every edge from a simulator whose mean costs may be systematically biased, and (iii) an edge-similarity structure that captures domain knowledge about which links are expected to behave similarly (for example, adjacent road segments, or edges with comparable physical or operational attributes). Real measurements from sensing, field experiments, or controlled operational trials are costly and often incomplete, whereas synthetic samples from a digital twin are essentially free. The central challenge is therefore to leverage simulation outputs without trusting them: scarce real data must be used to calibrate and interrogate the simulator in a way that is tailored to the downstream shortest-path decision.

Our approach treats the simulator as a structured, noisy prior for real edge costs. We model the difference between real and simulated mean costs on each edge as an unknown \emph{bias} term, and assume this bias varies smoothly over an auxiliary similarity graph on edges. We then estimate the bias by solving a convex Laplacian-regularized least-squares problem that anchors to the simulator while pooling information across similar edges. This estimator (i) provides calibrated mean-cost estimates even when some edges have no real observations, (ii) admits principled weighting based on sampling variances, and (iii) is computationally tractable on large networks. We further connect calibration accuracy to decision quality. Using the calibrated edge means as plug-in costs, we analyze the resulting estimated shortest path and derive path-level suboptimality bounds in terms of edgewise estimation errors. 

Finally, we address a cold-start setting in which no real data are initially available and the planner must decide what to measure. We propose an \emph{Active Estimated Shortest Path} (\textsc{A-ESP}) procedure that interleaves (i) bias-aware edge cost estimation with (ii) adaptive data collection guided by pathwise confidence intervals. \textsc{A-ESP} exploits the similarity graph to share information across edges and uses a simple variance-balancing sampling rule to prioritize informative measurements. We prove that \textsc{A-ESP} identifies the true shortest path with high probability and provide sample-complexity guarantees that quantify how problem difficulty depends on network size, path length, optimality gaps, simulator-bias smoothness, and noise levels.

Empirically, we evaluate our framework on multiple road-network datasets, including urban road graphs and sensor-based traffic graphs. Across these settings, our methods improve edge cost calibration and shortest-path performance when real data are scarce and simulator bias exhibits exploitable structure, while naturally converging to real-data performance when abundant real measurements make calibration unnecessary. Taken together, our results provide actionable guidance on when and how simulation-based digital twins can be safely and effectively used for routing decisions.

The remainder of the paper is organized as follows.
In Section~\ref{sec:review}, we review the related literature.
In Section~\ref{sec:setup}, we introduce the stochastic shortest-path model, the data structure, and the edge-similarity graph.
In Section~\ref{sec:edge_cost_est}, we present our estimator and its finite-sample analysis, and in Section~\ref{sec:sp_greedy_edgewise}, we analyze the suboptimality of the resulting estimated shortest path.
In Section~\ref{sec:active}, we develop \textsc{A-ESP} and establish its sample-complexity guarantees for the cold-start setting.
In Section~\ref{subsec:experiments}, we report our empirical study, and the Appendix collects additional proofs and implementation details.

\section{Related Work}
\label{sec:review}
\subsection{Graph-Based Learning}
\label{subsec:graph-learning}

Our modeling of edgewise simulator bias as a smooth function over an edge-similarity graph is closely related to the literature on graph-based learning.
One key component of our method is leveraging potentially biased synthetic data through Laplacian regularization.
A large body of work uses the graph Laplacian to encode a smoothness prior and regularize prediction or representation learning problems.
Early work on Laplacian eigenmaps shows how to embed high-dimensional data by constructing a neighborhood graph and using the graph Laplacian to preserve local topology in a low-dimensional representation \citep{belkin2003laplacian}.
In parallel, \citet{smola2003kernels} developed kernels and regularization operators directly on graphs, generalizing classical Tikhonov regularization to graph-structured domains.
Semi-supervised learning methods such as Gaussian fields and harmonic functions \citep{zhu2003gaussian} and local-global consistency algorithms \citep{zhou2004learning} treat unknown labels as a graph signal and learn them by minimizing a quadratic form involving the Laplacian.
Manifold regularization formalizes this idea by adding an intrinsic Laplacian penalty to standard kernel methods, i.e., learning in a reproducing kernel Hilbert space (RKHS), thereby exploiting the geometry of the marginal data distribution \citep{belkin2006manifold}.
\citet{ando2006learning} analyze transductive classification on graph vertices using a Laplacian penalty, with careful attention to normalization and spectral dimensionality reduction.
In contrast to these previous studies, we focus on adapting synthetic data to real data and analyze how key factors (e.g., the degree of bias) affect estimation of the unknown parameter; this analysis then informs our choice of a shortest path.
In our framework, the Laplacian penalty pools information and calibrates the synthetic and real datasets, yielding an estimation-with-optimization pipeline that is distinct from previous works.

Graph signal processing (GSP) provides a complementary viewpoint by interpreting unknown quantities on a network as signals defined on the vertices of a graph and designing filters in the spectral domain of the Laplacian \citep{shuman2013spg,ortega2018gsp}. Within this framework, piecewise-smooth signals are denoised or interpolated by penalizing graph-Laplacian energy, which is conceptually similar to the regularization we impose on edge biases. Building on these ideas, graph neural networks (GNNs) implement learnable, localized filters on graphs via spectral and spatial convolutions \citep{defferrard2016cnn,kipf2017gcn}. Subsequent architectures such as GraphSAGE \citep{hamilton2017inductive} and graph attention networks \citep{velickovic2018gat} extend this message-passing paradigm to large, dynamic, or inductive settings, while the graph network framework of \citet{battaglia2018relational} emphasizes relational inductive biases for reasoning over entities and their interactions. Compared with these expressive parametric models, our approach adopts a simpler convex optimization framework tailored to operations on a fixed network: rather than learning high-dimensional node representations, we regularize a scalar correction on each edge of a known similarity graph and analyze how this impacts downstream shortest-path decisions under real and synthetic data.

There is also a line of work on graph-structured bandits and online learning with feedback graphs that is related to our active querying of paths. Spectral bandit algorithms assume that rewards are smooth over a known graph and use the eigenstructure of the Laplacian to design exploration strategies and regret bounds \citep{valko2014spectral}. More general feedback-graph models allow the learner’s observation structure to be specified by a graph, capturing settings where pulling one arm reveals information about others \citep{alon2015feedback}. Networked and collaborative bandit formulations leverage a graph over users or tasks to share information and speed up learning \citep{cesa-bianchi2013gang}. Our active procedure similarly exploits smoothness on an edge-similarity graph, but differs in two key respects: (i) the reward-relevant quantity is a path cost induced by edgewise corrections to a biased simulator rather than a stand-alone arm reward, and (ii) the graph structure comes from domain knowledge about edge similarity rather than from observed feedback alone. This combination of graph-regularized estimation, path-centric objectives, and real-synthetic data integration appears distinct from existing graph-learning and graph-bandit formulations.

\subsection{Operations Management with Simulation}
\label{subsec:simdata-om}

Simulation has long been a core methodology in operations management (OM). Early survey work documents how simulation models have been used to study supply chains, production systems, and service operations, typically as stand-alone ``virtual laboratories'' fed by expert-elicited parameters and limited empirical data \citep{kleijnen2005supplyChain,henderson2006simulationHandbook}. \cite{shafer2004empirical} review empirically grounded simulation studies in OM and highlight that most models are calibrated to case-specific data but then treated as ground truth for evaluating alternative policies. In supply-chain contexts, \cite{kleijnen2005supplyChain} emphasizes the role of discrete-event simulation among other paradigms and discusses methodological issues such as validation, sensitivity analysis, and robustness when simulation outputs are used to support managerial decisions. In all of these settings, simulation output effectively becomes synthetic performance data on which OM researchers and practitioners optimize schedules, capacity levels, or routing policies, yet the statistical properties of this synthetic data are rarely modeled explicitly.

Optimization via simulation provides a systematic framework for turning such synthetic data into prescriptive decisions. Classic reviews by \cite{fu1994optimization} and \cite{fu2005simulationOptimization} synthesize algorithms for simulation optimization, ranking-and-selection, and metaheuristic search when the system can only be evaluated through noisy simulation experiments. Within OM, these methods underpin applications such as call-center staffing, inventory control, and network design, where the simulator is assumed to be a high-fidelity representation of the real system and the main challenge is to allocate computational budget across candidate designs. More recent work pushes towards explicitly data-driven simulation: \cite{tannock2007dataDriven} model a supply chain using data-driven simulation to capture empirical demand and lead-time distributions, while \cite{desantis2023edCalibration} use simulation-based optimization to calibrate service-time parameters of an emergency-department discrete-event simulation so that key performance indicators match those observed in hospital data. In parallel,  \cite{hu2023dynamicDataDriven,hu2024dataAssimilation} argue that streaming observational data should continually update simulation state and parameters, treating the simulator as a digital twin whose fidelity is maintained online rather than through static, off-line calibration.

Digital-twin thinking has also begun to reshape simulation practice in logistics and production systems. \cite{agalianos2020desDigitalTwin} review the integration of simulation with sensor-driven digital twins for warehouse and logistics operations, identifying the need for architectures that keep simulation models synchronized with real-time operational data while still supporting fast scenario analysis. Across these lines of work, simulation models generate abundant synthetic observations that are used to evaluate routing, capacity, or scheduling decisions, but the synthetic data are rarely combined formally with sparse ground-truth measurements when optimizing over network-structured decisions. The framework developed in this paper contributes to this OM-focused simulation literature by treating the simulator as a potentially biased but inexpensive generator of synthetic edge cost data on a network. We explicitly model simulator bias as a smooth latent field over the edge-similarity graph, estimate it from limited real edge observations using Laplacian regularization, and propagate the resulting uncertainty to shortest-path decisions.

\subsection{Data-Driven Optimization}
\label{subsec:data-driven-optimization}

Our work also contributes to the broader literature on data-driven and prescriptive optimization, which uses observed data as the primary input to optimization models rather than assuming a fully specified stochastic structure. \citet{bertsimas_kallus_2020_prescriptive} develop a general prescriptive analytics framework that learns decision policies as functions of contextual covariates, establishing asymptotic optimality relative to an oracle with full distributional knowledge. \citet{bertsimas_koduri_2022_rkhs} propose a reproducing-kernel-Hilbert-space approach that directly approximates optimal objective values and decision rules from data, with finite-sample guarantees. In retail and supply chain settings, \citet{qi_mak_shen_2020_retail_review} survey data-driven methods for assortment, fulfillment, and inventory decisions, while \citet{ban_rudin_2019_big_data_newsvendor} work on the data-driven newsvendor problem to illustrate how machine learning can be embedded into classical inventory models. More recently, \citet{gupta_kallus_2022_data_pooling} show that pooling data across many seemingly unrelated stochastic optimization problems can strictly improve performance relative to solving each problem in isolation. These papers share our goal of using data to directly inform operational decisions, but they typically assume that available data faithfully reflect the target environment (for example, all relevant outcomes are observed and measurement bias is limited), whereas our setting explicitly starts from a potentially biased simulator (and observed real data if available).

A complementary line of work embeds statistical uncertainty about the data-generating process directly into the optimization formulation via distributionally robust or data-driven stochastic optimization. \citet{bertsimas_gupta_kallus_2018_ddro} design uncertainty sets for robust optimization from hypothesis tests, yielding solutions with finite-sample probabilistic guarantees. A rich stream of papers uses Wasserstein balls or related ambiguity sets to construct data-driven distributionally robust models, including the seminal work of \citet{esfahani_kuhn_2018_wasserstein_dro} and subsequent extensions to risk-averse formulations such as \citet{zhao_guan_2018_wasserstein_risk_averse}. \citet{rahimian_mehrotra_2022_dro_review} survey these frameworks and clarify their connections to classical robust optimization, chance constraints, and statistical learning. Conceptually, our use of an imperfect simulator together with limited real data is closest in spirit to this literature: both aim to guard against model misspecification and sampling error. 

A third related stream couples prediction and optimization even more tightly by training predictive models to be directly tailored to the downstream decision problem. The Smart ``Predict, then Optimize'' framework of \citet{elmachtoub_grigas_2022_spo} introduces a regret-based loss that measures the decision error induced by a prediction, and \citet{elbalghiti_elmachtoub_grigas_tewari_2023_pto_bounds} provide generalization guarantees for this predict-then-optimize paradigm. In machine-learning and AI applications, \citet{wilder_dilkina_tambe_2019_decision_focused} advocate \emph{decision-focused learning}, in which combinatorial optimization problems are embedded inside the training loop so that models are trained to perform well on the ultimate decision task rather than on a generic prediction metric. In parallel, the policy-learning literature in econometrics and statistics, exemplified by \citet{athey_wager_2021_policy_learning}, seeks to learn treatment-assignment or pricing policies directly from observational data with causal guarantees. Our approach is complementary to these decision-focused frameworks: we likewise care about downstream decision quality rather than pure predictive accuracy, but instead of treating the predictive model as a black box, we exploit the structure of the underlying graph and show how to calibrate the potentially biased simulation data with real data so that it yields provably accurate predictions for the shortest path problem.

\section{Problem Setup}
\label{sec:setup}
We consider the (stochastic) shortest-path problem on a graph \(G=(V,E)\), where $V$ denotes the set of nodes and $E$ denotes the set of edges. We denote a source node \(v_\text{src}\in V\) and a sink node \(v_\text{sink} \in V\) of the graph $G$ as the start node and the terminal node we are interested in. Each edge \(e\in E\) has a random cost
\[
C_e \;=\; \mu_e + \varepsilon_e,
\]
where \(\mu_e\) is an unknown mean cost and \(\varepsilon_e\) is a zero-mean noise term. Let \(\mathcal{P}\) denote the set of all simple \(v_\text{src}\text{--}v_\text{sink}\) paths (no repeated vertices). The goal is to find a path with the smallest expected cost:
\[
P^\star \in \arg\min_{P\in\mathcal{P}} \mu(P),
\quad \text{where } \mu(P) \coloneqq \sum_{e\in P} \mu_e .
\]

\subsection{Calibrating Shortest Paths with Real and Synthetic Data}
Although the means \(\mu_e\) are unknown, we may have observed real samples on each edge (e.g., measured travel times collected from vehicles or deliveries between two cities). Specifically, for each \(e\in E\) we have a set of samples
\[
\mathcal{S}_e=\{c_{e,i}\}_{i=1}^{n_e}, \qquad n_e\ge 0,
\]
where each observation satisfies \(c_{e,i}=\mu_e+\varepsilon_{e,i}\) and the \(\{\varepsilon_{e,i}\}\) are zero-mean noise terms. The case \(n_e=0\) indicates no real data are available for edge \(e\).

In addition, we have access to a set of synthetic samples produced by a simulator (e.g., a digital twin). For each \(e\in E\),
\[
\mathcal{S}'_e=\{c'_{e,i}\}_{i=1}^{n'_e}, \qquad
c'_{e,i}=\mu'_e+\varepsilon'_{e,i},
\]
where \(\mu'_e\) is the simulator’s (unknown) mean and \(\varepsilon'_{e,i}\) is a zero-mean noise term. In general, the simulator may be biased so that \(\mu'_e \neq \mu_e\). Because synthetic data are typically cheaper to generate, we commonly have \(n'_e \gg n_e\).

This work studies how to leverage the limited real data \(\{\mathcal{S}_e\}_{e\in E}\) together with the abundant synthetic data \(\{\mathcal{S}'_e\}_{e\in E}\) to (approximately) recover the shortest path \(P^\star\).

\subsection{Edge Similarity}
We assume access to an edge-similarity matrix \(W\in \mathbb{R}^{|E|\times |E|}\) for the graph \(G\), where \(W_{e,e'}\) quantifies the similarity between edges \(e\in E\) and \(e'\in E\). In settings such as workflow graphs with semantic node labels or spatial road networks, one may compute similarities from edge feature vectors \(x_e\) (e.g., traffic intensity, length). A simple choice is a kernel-valued similarity \(W_{e,e'}=\mathrm{k}(x_e,x_{e'})\) with, for example, the Gaussian kernel \(\mathrm{k}(x_e,x_{e'})=\exp(-\| x_e-x_{e'}\|_2^2)\). As an alternative, \(W\) can be derived from structural heuristics (e.g., inverse-degree-based weights). We will assume \(W\) is entrywise nonnegative and (without loss of generality) symmetric.

\section{Edge Cost Estimation using Laplacian Regularization}
\label{sec:edge_cost_est}
In this section, we develop an algorithm and accompanying analysis to estimate the simulator bias from the available data (equivalently, the unknown edge means \(\mu_e\)).

\subsection{Parameter Estimation}
We estimate the \emph{real} edge means \(\mu_e\) by calibrating abundant synthetic information with real observations. For each edge \(e\), let
\[
\bar c_e := \frac{1}{n_e}\sum_{i=1}^{n_e} c_{e,i}
\qquad\text{and}\qquad
\bar c'_e := \frac{1}{n'_e}\sum_{i=1}^{n'_e} c'_{e,i}
\]
denote the empirical means of the real and synthetic samples, respectively (when \(n_e=0\), \(\bar c_e\) is undefined and the corresponding data-fidelity term in \eqref{eqn:b_hat} below is dropped by setting \(w_e=0\)).


A naive estimator uses only real data, i.e., \(\bar c_e\) as an estimate of \(\mu_e\). However, limited or missing real samples (\(n_e\) small or zero) can yield high error. 

\paragraph{Leveraging synthetic data.}
When synthetic samples are plentiful (e.g., inexpensive to generate), the law of large numbers implies \(\bar c'_e \to \mu'_e\), providing an accurate estimate of the simulator mean. Directly using \(\bar c'_e\) to estimate \(\mu_e\) can be biased because, in general, \(\mu'_e \neq \mu_e\). Let the (unknown) bias for edge $e\in E$ be
\[
b^\star_e := \mu_e - \mu'_e .
\]
Our goal is to estimate \(b^\star_e\) through an estimator $\hat b_e$ and then form the calibrated estimator
\[
\hat\mu_e \;:=\; \bar c'_e + \hat b_e .
\]

If \(n_e\) is sufficiently large, an unbiased plug-in estimate of the bias is \(\hat b_e=\bar c_e-\bar c'_e\) (equivalently, \(\bar c_e\) itself is already a good estimator of \(\mu_e\)). However, for edges with small \(n_e\), this direct estimate is unreliable. 
To share information across edges, we use the similarity matrix \(W\) and estimate the bias vector \(b=(b_e)_{e\in E}\) by solving the Laplacian-regularized least-squares problem
\begin{equation}
\label{eqn:b_hat}
\hat{b}\;\in\;\arg\min_{b\in\mathbb{R}^{|E|}}
\;\sum_{e\in E} w_e\,\bigl(b_e-(\bar c_e-\bar c'_e)\bigr)^2
\;+\;\lambda\, b^\top L b,
\end{equation}
where \(L=\mathrm{diag}\!\bigl(\sum_{e'} W_{e,e'}\bigr)-W\) is the graph Laplacian associated with \(W\), and
\(
b^\top L b=\tfrac12\sum_{e,e'} W_{e,e'}(b_e-b_{e'})^2.
\)
The weights \(w_e\ge 0\) encode the reliability of the real-data estimate on edge \(e\): we take \(w_e=0\) when \(n_e=0\) and typically let \(w_e\) increase with \(n_e\).
We elaborate on the choice of \(w_e\) below.
The calibrated edge-mean estimates are then \(\hat\mu_e=\bar c'_e+\hat b_e\), which can subsequently be used as edge weights to compute an (approximately) shortest path.



\subsubsection{Discussion}
We highlight three practical aspects of \eqref{eqn:b_hat}.

\begin{itemize}
\item \textbf{Choice of the weights \(w_e\).}
A statistically natural choice is to weight each data-fit term by the inverse variance of the plug-in target \(\bar c_e-\bar c'_e\). Under the standard independence assumption for the real and synthetic noises,
\[
\mathrm{Var}(\bar c_e-\bar c'_e)\;=\;\frac{\mathrm{Var}(\varepsilon_{e,i})}{n_e}\;+\;\frac{\mathrm{Var}(\varepsilon'_{e,i})}{n'_e},
\]
so we set
\[
w_e \;\approx\; \frac{1}{\,\widehat{\mathrm{Var}}(\bar c_e-\bar c'_e)\,}
\;\approx\;
\frac{1}{\,\widehat{\mathrm{Var}}(\varepsilon_{e,i})/n_e \;+\; \widehat{\mathrm{Var}}(\varepsilon'_{e,i})/n'_e\,},
\]
with the empirical variances $\widehat{\mathrm{Var}}(\cdot)$ estimated from \(\mathcal{S}_e\) and \(\mathcal{S}'_e\) by the usual sample-variance formulas. Intuitively, larger variance means we trust the empirical difference \(\bar c_e-\bar c'_e\) less as an estimate of the true bias \(\mu_e-\mu'_e\), so we place less weight on that edge. In the extreme case \(n_e=0\), the variance of \(\bar c_e-\bar c'_e\) is effectively infinite and we take \(w_e=0\), i.e., the real-data term for edge \(e\) is dropped. To avoid unstable, overly large weights when \(n_e\) is very small, it is helpful to cap \(w_e\) from above (equivalently, impose a small positive uncertainty on the estimated variance).

\item \textbf{Choice of the regularization strength \(\lambda\).}
The parameter \(\lambda\) controls how aggressively we enforce that similar edges (large \(W_{e,e'}\)) have similar biases \(b_e\). When \(\lambda=0\), the problem decouples across edges: for edges with \(n_e>0\), \(\hat b_e\) reduces to the plug-in estimate \(\bar c_e-\bar c'_e\) and \(\hat\mu_e=\bar c_e\); for edges with \(n_e=0\), the objective contains no information and \(\hat\mu_e\) is essentially unidentifiable without additional structure. As \(\lambda\) increases, the solution is shrunk toward vectors that are nearly constant on each connected component of \(W\); if \(W\) is connected, this corresponds to learning (approximately) a single global simulator-to-real shift shared across edges. An equivalent constrained-form view is
\[
\min_{b \in\mathbb{R}^{|E|}} \;\sum_{e\in E} w_e\,\bigl(b_e-(\bar c_e-\bar c'_e)\bigr)^2 
\quad\text{s.t.}\quad b^\top L b \le B^2,
\]
where \(\lambda\) corresponds to a Lagrange multiplier for a constraint and \(B^2\) decreases with \(\lambda\). The constant $B^2$ reflects the smoothness of the bias distribution, 
which we discuss in detail in Section~\ref{subsec:edge_estimate_analysis}. This further indicates that \(\lambda\) trades off variance (from noisy \(\bar c_e-\bar c'_e\)) and bias (from enforcing smoothness). We analyze the effect of \(\lambda\) on prediction error in the next subsection, and provide practical tuning strategies in the Appendix \ref{appx:lambda_tuning}.

\item \textbf{Role of the regularizer and the synthetic mean.}
Writing \(\hat\mu_e=\bar c'_e+\hat b_e\) and optimizing directly over \(\tilde\mu_e=\bar c'_e+b_e\), \eqref{eqn:b_hat} is equivalent to
\[
\hat{\mu}\;\in\;\arg\min_{\tilde{\mu}\in\mathbb{R}^{|E|}}
\;\sum_{e\in E} w_e\,\bigl(\tilde{\mu}_e-\bar c_e\bigr)^2
\;+\;\lambda\,(\tilde{\mu}-\bar c')^\top L\,(\tilde{\mu}-\bar c').
\]
Thus, the regularizer explicitly pulls \(\tilde\mu\) toward the simulator mean \(\bar c'\) along directions deemed smooth by \(W\). If \(\bar c'\) carries no meaningful information (e.g., treat \(\bar c'=0\)), the procedure reduces to Laplacian smoothing of the real-data means across edges, enabling imputations on edges with \(n_e=0\) via their neighbors. At the other extreme, if the simulator is perfectly calibrated (\(\bar c'=\mu\)), then \((\tilde{\mu}-\bar c')^\top L(\tilde{\mu}-\bar c')\) is minimized when \(\tilde\mu\) is aligned with \(\mu\); any positive data weights \(w_e\) anchor these constants so that \(\hat\mu\) collapses to \(\bar c'=\mu\). The analysis below further quantifies how the (unknown) bias \(b^\star=\mu-\mu'\) and the geometry of \(W\) (or $L$) together determine the estimation error.
\end{itemize}
\subsubsection{Analysis}
\label{subsec:edge_estimate_analysis}
In this subsection, we analyze the solution to \eqref{eqn:b_hat} and derive estimation-error bounds as a function of the bias, the similarity matrix, the sample sizes, and the hyperparameter \(\lambda\). All proofs are deferred to Appendix~\ref{appx:proofs}.

We begin with compact notation. Define $y\in\mathbb{R}^{|E|}$ with entries $y_e\coloneqq \bar c_e-\bar c'_e$ as the empirical bias vector (we can set $\bar{c}_e$ as an arbitrary number if $n_e=0$; this case will not influence the estimation since we will add a zero weight, i.e., $w_e=0$), and let
$M\coloneqq\mathrm{diag}(w_e)\in\mathbb{R}^{|E|\times|E|}$ denote the diagonal weight matrix. With this notation, \eqref{eqn:b_hat} becomes
\[
\hat b \in \arg\min_{b\in\mathbb{R}^{|E|}} \ (b-y)^\top M(b-y) + \lambda\, b^\top L b.
\tag{\ref{eqn:b_hat} revisited}
\]

The following lemma shows that, under a mild observability condition, this objective is strictly convex and admits a unique minimizer in closed form.

\begin{lemma}[Uniqueness and closed form]\label{lem:uniqueM}
For any $\lambda>0$, if each connected component of the edge‑adjacency graph (which induces $W$) contains at least one edge with $w_e>0$ and $W$ is symmetric with nonnegative entries, the objective is strictly convex and admits the unique minimizer
\[
\hat b \;=\; (M+\lambda L)^{-1} M\,y.
\]
\end{lemma}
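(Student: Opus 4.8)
The plan is to reduce the statement to a single linear-algebra fact: that the symmetric matrix $A \coloneqq M + \lambda L$ is positive definite under the stated hypotheses. Once this is in hand, everything else is routine. Expanding the objective gives $f(b) = b^\top A b - 2\,b^\top M y + y^\top M y$, a quadratic with Hessian $2A$; strict convexity is then equivalent to $A \succ 0$, and the unique stationary point solves $\nabla f(b) = 2Ab - 2My = 0$, i.e. $\hat b = A^{-1} M y$, with $A^{-1}$ well defined precisely because $A$ is strictly positive definite. So the real content is the positive-definiteness claim.

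First I would record that both summands of $A$ are positive semidefinite: $M = \mathrm{diag}(w_e) \succeq 0$ since each $w_e \ge 0$, and $b^\top L b = \tfrac12 \sum_{e,e'} W_{e,e'}(b_e - b_{e'})^2 \ge 0$ since $W$ is symmetric with nonnegative entries. Hence $A = M + \lambda L \succeq 0$ for every $\lambda \ge 0$, and it remains only to rule out nontrivial null vectors. Suppose $b^\top A b = 0$. Since $A$ is a sum of two PSD matrices, this forces $b^\top M b = 0$ and $\lambda\, b^\top L b = 0$ simultaneously. The first gives $w_e b_e^2 = 0$ for every $e$, so $b_e = 0$ on the support $\{e : w_e > 0\}$. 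Because $\lambda > 0$, the second gives $\sum_{e,e'} W_{e,e'}(b_e - b_{e'})^2 = 0$, hence $b_e = b_{e'}$ whenever $W_{e,e'} > 0$; equivalently, $b$ is constant on each connected component of the edge-adjacency graph induced by $W$.

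The key step — and the only place the observability hypothesis enters — is combining these two conclusions. Fix a connected component; by assumption it contains some edge $e_0$ with $w_{e_0} > 0$, so $b_{e_0} = 0$, and since $b$ is constant on that component, $b \equiv 0$ there. Ranging over all components yields $b = 0$, so $\ker A = \{0\}$ and $A \succ 0$. I would then conclude: $f$ has positive definite Hessian $2A$, hence is strictly convex on $\mathbb{R}^{|E|}$ and admits a unique global minimizer, obtained by setting the gradient to zero and inverting $A$, which gives $\hat b = (M + \lambda L)^{-1} M y$. I do not expect a genuine obstacle here — this is the standard ``Laplacian plus anchored diagonal'' positive-definiteness argument — the only point requiring care is stating cleanly the equivalence between ``connected component of the edge-adjacency graph that induces $W$'' and connectivity via positive entries of $W$, so that the per-component anchoring step is airtight.
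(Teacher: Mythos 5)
Your proposal is correct and follows essentially the same argument as the paper: establish $M+\lambda L\succ 0$ by decomposing the quadratic form into its two PSD summands, show any null vector must vanish on $\{e:w_e>0\}$ and be constant on each connected component of the graph induced by $W$, invoke the per-component anchoring hypothesis to conclude the null vector is zero, and then read off the unique minimizer from the first-order condition. No gaps; the reasoning matches the paper's proof step for step.
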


Lemma \ref{lem:uniqueM} implies that \eqref{eqn:b_hat} can be solved efficiently via the closed form expression above. We now state the assumptions used to analyze the estimation error of $\hat b$ relative to $b^\star$.




\begin{assumption}[Noise and weights]\label{ass:noiseM}
For each edge $e\in E$, the aggregated measurement noise
$\xi_e\coloneqq y_e-(\mu_e-\mu'_e)$ is independent and centered sub-Gaussian with variance proxy $\nu_e^2$.
The weights satisfy
\[
\kappa_-\,\nu_e^{-2}\ \le\ w_e\ \le\ \kappa_+\,\nu_e^{-2},\qquad \text{and }w_e=0 \text{ if } n_e=0,
\]
for constants $0<\kappa_-\le \kappa_+<\infty$.
\end{assumption}

\noindent
Assumption~\ref{ass:noiseM} requires the weights $w_e$ to reflect the inverse noise level. Specifically, suppose that for edge $e$ we observe $n_e$ i.i.d.\ real samples $c_{e,i}=\mu_e+\varepsilon_{e,i}$ and $n'_e$ i.i.d.\ synthetic samples $c'_{e,i}=\mu'_e+\varepsilon'_{e,i}$ with centered sub-Gaussian noise proxies $\sigma_e^2$ and $\sigma_e'^2$. Then $\xi_e$ is centered sub-Gaussian with proxy
$\nu_e^2 \asymp \sigma_e^2/n_e+\sigma_e'^2/n'_e$. Hence larger sample sizes justify larger weights $w_e$. Intuitively, more observations for edge $e$ yield a more reliable estimate, so the corresponding weight should be larger.

\begin{assumption}[Bias smoothness]\label{ass:smoothM}
Let $b^\star\in\mathbb{R}^{|E|}$ denote the unknown bias vector, with entries $b^\star_e=\mu_e-\mu'_e$.
We assume Laplacian smoothness:
\[
\|b^\star\|_{L}^2 \;\coloneqq\; {b^\star}^\top L b^\star \ \le\ B^2,
\]
for some finite $B>0$.
\end{assumption}

\noindent
Assumption~\ref{ass:smoothM} bounds the true bias under the Laplacian seminorm induced by $L$. We later show how the level $B$ influences the estimation error for $\mu_e$. In the special case where $b^\star$ is constant across edges, we have $\|b^\star\|_{L}=0$. In particular, if $b^\star\equiv 0$, the bias vanishes. Since the Laplacian seminorm is invariant to global shifts, adding a constant to all entries of $b^\star$ does not change $\|b^\star\|_{L}$. In our setting, $b^\star$ collects edgewise biases. The Laplacian penalty measures how these biases vary across the similarity graph, not their absolute level. A uniform offset represents the same bias everywhere and therefore has zero ``roughness.'' The absolute level is instead anchored by the data-fit term $(b-y)^\top M(b-y)$ in our estimation, i.e., matching bias $b$ with the observed $y$.

\paragraph{Main results.}
We now present our main estimation error bounds. We begin with notation:
\[
H\coloneqq M^{-1/2}LM^{-1/2}\succeq 0,
\qquad
S_\lambda\coloneqq (I+\lambda H)^{-1},
\]
and, for each edge $e$ with positive weight ($E^+\coloneqq\{e\in E:\ w_e>0\}$),
\[
\alpha_e(\lambda)\coloneqq \|S_\lambda M^{-1/2}e_e\|_2,
\qquad
\alpha_\infty(\lambda)\coloneqq \max_{e\in E^+}\alpha_e(\lambda),
\qquad
w_{\min}\coloneqq \min_{e\in E^+} w_e,
\]
where $e_e$ denotes the $e$th standard basis vector. The matrix $H=M^{-1/2}LM^{-1/2}$ is the graph Laplacian $L$ expressed in \emph{noise units}: directions that are rough on the similarity graph and poorly supported by data (small $w_e$) have larger eigenvalues. The operator $S_\lambda=(I+\lambda H)^{-1}$ acts as a graph smoother; it diffuses information across adjacent edges while damping high-frequency (nonsmooth) components in proportion to $\lambda$. Consequently, $\alpha_e(\lambda)=\|S_\lambda M^{-1/2}e_e\|_2$ measures the \emph{leverage} of noise on edge $e$ after smoothing; smaller values indicate that perturbations at $e$ have weaker global impact on $\hat b$. Note here we assume $M$ invertible, i.e., $w_e>0$ for all $e\in E$. If not, we can use its Moore-Penrose inverse \citep{penrose1955generalized} of $M^{1/2}$ or add a small scalar $\delta>0$ to its diagonal to keep $w_e>0$ for all $e \in E$.

\begin{theorem}[High-probability estimation bound]\label{thm:edgewise}
Under Assumptions~\ref{ass:noiseM} and \ref{ass:smoothM}, for any $e\in E^+$ and any $\delta\in(0,1)$,
\[
\mathbb{P}\!\left(
|\hat b_e - b^\star_e|
\;\le\;
\underbrace{\tfrac{\sqrt{\lambda}}{2}\,B\,w_e^{-1/2}}_{\text{bias}}
\;+\;
\underbrace{\,\sqrt{\kappa_+}\,\alpha_e(\lambda)\,\sqrt{2\log(2/\delta)}}_{\text{variance}}
\right)\;\ge\;1-\delta.
\]
Furthermore, we have a worst-case bound
\[
\mathbb{P}\!\left(
\max_{e\in E^+}|\hat b_e - b^\star_e|
\;\le\;
\tfrac{\sqrt{\lambda}}{2}\,B\,w_{\min}^{-1/2}
\;+\;
\,\sqrt{\kappa_+}\,\alpha_\infty(\lambda)\,\sqrt{2\log\!\bigl(2|E|/\delta\bigr)}
\right)\;\ge\;1-\delta.
\]
\end{theorem}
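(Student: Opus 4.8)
The plan is to decompose the estimation error $\hat b - b^\star$ into a deterministic (bias) part and a stochastic (variance) part, control each separately, and then combine them, using a union bound for the second (worst-case) statement. Starting from the closed form $\hat b = (M+\lambda L)^{-1} M y$ in Lemma~\ref{lem:uniqueM} and writing $y = b^\star + \xi$ with $\xi$ the aggregated noise vector from Assumption~\ref{ass:noiseM}, I would write
\[
\hat b - b^\star \;=\; \underbrace{\bigl[(M+\lambda L)^{-1}M - I\bigr]b^\star}_{=:\,-\lambda (M+\lambda L)^{-1} L\, b^\star \;=\; \text{bias}} \;+\; \underbrace{(M+\lambda L)^{-1}M\,\xi}_{=\,\text{variance}}.
\]
The algebraic identity $(M+\lambda L)^{-1}M - I = -\lambda(M+\lambda L)^{-1}L$ is the first small step. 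Then I would conjugate everything by $M^{1/2}$ to pass to the ``noise-units'' coordinates: with $H = M^{-1/2}LM^{-1/2}$ and $S_\lambda = (I+\lambda H)^{-1}$, one checks $(M+\lambda L)^{-1} = M^{-1/2} S_\lambda M^{-1/2}$, so the bias term becomes $-\lambda\, M^{-1/2} S_\lambda H M^{1/2} b^\star = -\sqrt{\lambda}\, M^{-1/2} S_\lambda (\sqrt{\lambda} H^{1/2}) (H^{1/2} M^{1/2} b^\star)$, and the variance term becomes $M^{-1/2} S_\lambda M^{1/2}\xi$. Extracting the $e$th coordinate means taking an inner product with $e_e$.

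For the bias term, the $e$th entry is $-\sqrt{\lambda}\, e_e^\top M^{-1/2} S_\lambda \sqrt{\lambda}H^{1/2}\cdot H^{1/2}M^{1/2}b^\star$; I would bound its absolute value by Cauchy--Schwarz as $\sqrt{\lambda}\,\|\sqrt{\lambda}H^{1/2}S_\lambda M^{-1/2}e_e\|_2 \cdot \|H^{1/2}M^{1/2}b^\star\|_2$. The second factor is exactly $\sqrt{{b^\star}^\top L b^\star} \le B$ by Assumption~\ref{ass:smoothM}. The first factor is controlled by the scalar spectral bound $\sup_{t\ge 0} \sqrt{\lambda t}/(1+\lambda t) \le \tfrac12$ applied to the eigenvalues of $H$ (i.e., $\|\sqrt{\lambda}H^{1/2}S_\lambda\|_{op}\le \tfrac12$), together with $\|M^{-1/2}e_e\|_2 = w_e^{-1/2}$; this yields the $\tfrac{\sqrt\lambda}{2}B\,w_e^{-1/2}$ term. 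For the variance term, $e_e^\top M^{-1/2}S_\lambda M^{1/2}\xi = \sum_{e'} (S_\lambda M^{-1/2}e_e)_{e'}\, \sqrt{w_{e'}}\,\xi_{e'}$ (using symmetry of $S_\lambda$ and $M^{1/2}$), which is a linear combination of independent centered sub-Gaussians. Since $\xi_{e'}$ has variance proxy $\nu_{e'}^2$ and $w_{e'}\le \kappa_+ \nu_{e'}^{-2}$ gives $w_{e'}\nu_{e'}^2 \le \kappa_+$, the sub-Gaussian proxy of this sum is at most $\kappa_+ \sum_{e'}(S_\lambda M^{-1/2}e_e)_{e'}^2 = \kappa_+\,\alpha_e(\lambda)^2$. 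A standard sub-Gaussian tail bound then gives $|{\rm variance}_e| \le \sqrt{\kappa_+}\,\alpha_e(\lambda)\sqrt{2\log(2/\delta)}$ with probability at least $1-\delta$, and adding the deterministic bias bound proves the first claim. The second claim follows by taking a union bound over the (at most $|E|$) edges in $E^+$, replacing $\delta$ by $\delta/|E|$ inside the log, and bounding $w_e^{-1/2}\le w_{\min}^{-1/2}$ and $\alpha_e(\lambda)\le\alpha_\infty(\lambda)$.

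The main obstacle is getting the sub-Gaussian bookkeeping exactly right: I need the variance proxy of a weighted sum of independent sub-Gaussians to be the sum of squared coefficients times individual proxies, and I must be careful that the ``coefficients'' picked up by $\xi_{e'}$ are $(S_\lambda M^{-1/2}e_e)_{e'}\sqrt{w_{e'}}$ rather than $(S_\lambda M^{-1/2}e_e)_{e'}$, so that the $w_{e'}\nu_{e'}^2\le\kappa_+$ cancellation is what produces the clean $\sqrt{\kappa_+}\,\alpha_e(\lambda)$ and not something depending on the $\nu_e$'s individually. A secondary technical point is justifying the identity $(M+\lambda L)^{-1}=M^{-1/2}S_\lambda M^{-1/2}$ and the operator-norm bound $\|\sqrt\lambda H^{1/2}S_\lambda\|_{op}\le\tfrac12$ when $M$ may fail to be invertible; here I would invoke the remark in the text that one works on $E^+$ (equivalently restricts to the invertible block, or adds $\delta>0$ to the diagonal and passes to the limit), so that $M^{-1/2}$ and hence $H$ and $S_\lambda$ are well defined on the relevant subspace. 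Everything else is routine linear algebra and a single scalar calculus bound.
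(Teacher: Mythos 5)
Your proposal is correct and follows essentially the same route as the paper's proof: both decompose $\hat b - b^\star$ via $y=b^\star+\xi$ into the same bias and variance terms (your $-\lambda(M+\lambda L)^{-1}Lb^\star$ and the paper's $-\lambda M^{-1/2}S_\lambda H M^{1/2}b^\star$ are the same object under the conjugation $(M+\lambda L)^{-1}=M^{-1/2}S_\lambda M^{-1/2}$, $L=M^{1/2}HM^{1/2}$), bound the bias by Cauchy--Schwarz plus the scalar spectral inequality $\sup_{x\ge 0}\sqrt{\lambda x}/(1+\lambda x)\le\tfrac12$, handle the variance by the sub-Gaussian weighted-sum proxy $\sum_{e'}(a_{e,e'}\sqrt{w_{e'}})^2\nu_{e'}^2\le\kappa_+\alpha_e(\lambda)^2$, and finish with a union bound. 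Your noted worry about the $\sqrt{w_{e'}}$ bookkeeping is exactly the point the paper's proof handles, and your handling matches it.
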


Because
\[
\hat\mu_e-\mu_e
\;=\;
\bigl(\hat b_e-b^\star_e\bigr)\;+\;\bigl(\bar c'_e-\mu'_e\bigr),
\]
Theorem~\ref{thm:edgewise} directly controls the first term, while the second typically contributes an additional $\mathcal{O}\!\bigl(\sigma'_e/\sqrt{n'_e}\bigr)$ that is negligible when synthetic samples are plentiful. Therefore, Theorem~\ref{thm:edgewise} yields a bound for the cost estimation error $|\hat\mu_e-\mu_e|$.

To interpret these terms, the ``bias'' component $\sqrt{\lambda}\,B/(2\sqrt{w_e})$ captures the effect of the bias level $B$ (that is, the quality of the synthetic data): smoother $b^\star$ (smaller $B$) reduces this contribution. Under Assumption~\ref{ass:noiseM} with i.i.d.\ sampling, $w_e\asymp (\sigma_e^2/n_e+\sigma_e'^2/n'_e)^{-1}$, increasing either $n_e$ or $n'_e$ enlarges $w_e$. This both decreases the bias coefficient $w_e^{-1/2}$ and tightens the variance through $w_{\min}$ and $\alpha_\infty(\lambda)$. In the common regime with abundant synthetic data, $w_e\propto n_e$, so well-sampled edges strongly anchor their neighborhoods.

In addition, writing the spectral decomposition $H=U\mathrm{diag}(\eta_k)U^\top$ with $k$-th eigenvalue $\eta_k\ge 0$ gives $S_\lambda=U\mathrm{diag}\bigl((1+\lambda\eta_k)^{-1}\bigr)U^\top$ and motivates the effective dimension
\[
d_{\mathrm{eff}}(\lambda)\;\coloneqq\;\mathrm{tr}\bigl(S_\lambda^2\bigr)\;=\;\sum_k (1+\lambda\eta_k)^{-2},
\]
for which, by definition,  the worst-case leverage satisfies
\[
\alpha_\infty(\lambda)^2 \;\le\; \mathrm{tr}\!\bigl(S_\lambda^2 M^{-1}\bigr)\;\le\; \frac{d_{\mathrm{eff}}(\lambda)}{\,w_{\min}\,}.
\]
Hence the variance terms in Theorem~\ref{thm:edgewise} shrink as $d_{\mathrm{eff}}(\lambda)$ decreases (stronger smoothing or a tighter graph) and as $w_{\min}$ grows (more data on at least one edge per component). In contrast, the bias terms scale as $\sqrt{\lambda}\,B/\sqrt{w_e}$ (edgewise) or $\sqrt{\lambda}\,B/\sqrt{w_{\min}}$ (uniform), which reveals the classical bias-variance trade-off: larger $\lambda$ enforces smoother estimates when the true bias is smooth (small $B$), whereas excessive smoothing inflates the bias when $B$ is large.

\textbf{Choice and influence of $\lambda$.} Balancing the uniform terms in Theorem~\ref{thm:edgewise} suggests selecting $\lambda$ that solves
\[
\frac{\sqrt{\lambda}}{2}\,B\,w_{\min}^{-1/2}
\;\approx\;
\sqrt{\frac{\kappa_+\,d_{\mathrm{eff}}(\lambda)}{w_{\min}}}\,\sqrt{\log\!\bigl(2|E|/\delta\bigr)}.
\]
However, because this is a (conservative) upper bound, a data-driven choice can be preferable in practice (see Appendix~\ref{appx:lambda_tuning}).

\subsection{Comparison with real-only error bounds.}
Compare our high-probability edgewise bound for $\hat\mu_e$,\[
|\hat\mu_e-\mu_e|
\;\le\;
\underbrace{\frac{\sqrt{\lambda}}{2\sqrt{w_e}}\,B}_{\text{smoothness bias}}
\;+\;
\underbrace{\sqrt{\kappa_+}\,\alpha_e(\lambda)\,\sqrt{2\log(2/\delta)}}_{\text{variance after graph smoothing}}
,
\]
with the standard plug-in bound for the \emph{real-only} estimator $\tilde\mu_e:=\bar c_e$ (e.g., from standard concentration inequality for sub-Gaussian 
random variables \citep{vershynin2018high}):
\[
|\tilde\mu_e-\mu_e|
\;\le\;
\underbrace{\sqrt{\kappa_+}\,w_e^{-1/2}\,\sqrt{2\log(2/\delta)}}_{\text{variance without smoothing}},\qquad
\text{where }~w_e\asymp\Big(\sigma_e^2/n_e\Big)^{-1}.
\]
Since $S_\lambda=(I+\lambda H)^{-1}$ is a contraction, $\alpha_e(\lambda)=\|S_\lambda M^{-1/2}e_e\|\le w_e^{-1/2}$, so smoothing never \emph{increases} the variance factor. Our estimator strictly improves the bound over real-only whenever
\[
\frac{\sqrt{\lambda}}{2\sqrt{w_e}}\,B
\;\le\;
\sqrt{\kappa_+}\,\sqrt{2\log(2/\delta)}\Big(w_e^{-1/2}-\alpha_e(\lambda)\Big),
\]
i.e., when the simulator bias is \emph{smooth} (small $B$) and the similarity graph with regularization strength $\lambda$ meaningfully reduces leverage $\alpha_e(\lambda)$ below $w_e^{-1/2}$. Intuitively, the Laplacian couples edges so that poorly observed ones ``borrow strength'' from well-observed neighbors, shrinking the variance term, and the cost is a controllable smoothness bias that grows only like $\sqrt{\lambda}\,B/\sqrt{w_e}$. Thus, a small bias cap $B$ can benefit the bias term and the variance term. Conversely, if real data are abundant on every edge and $b^\star$ is highly irregular across $W$, then $\alpha_e(\lambda)\approx w_e^{-1/2}$ and the real-only bound becomes competitive, suggesting a small $\lambda$.

\section{From Edge Cost Estimates to Shortest Paths}
\label{sec:sp_greedy_edgewise}

Having obtained calibrated edge-mean estimates $\hat{\mu}_e$, we now turn to our goal: finding the shortest $v_{\text{src}}$--$v_{\text{sink}}$ path. The most natural and straightforward approach is to treat our estimates as the true edge costs and apply a standard shortest-path algorithm. In this section, we formalize this simple plug-in procedure, analyze its performance, and develop a practical, data-driven certificate to quantify the quality of the returned path.

Specifically, given the calibrated bias \(\hat b_e\), we set the estimated weight of each edge to \(\hat \mu_e = \bar c'_e + \hat b_e\). We then run a standard shortest path routine on these estimates to obtain an estimated shortest path, as detailed in Algorithm \ref{alg:esp_real}. Thus, we treat \(\{\hat \mu_e\}_{e\in E}\) as the true costs and apply any standard algorithm, such as Dijkstra's algorithm \citep{dijkstra2022note} for nonnegative weights or the Bellman--Ford algorithm \citep{bellman1958routing} otherwise, to obtain the shortest path.
\begin{algorithm}[H]
\caption{Estimated Shortest Path (ESP)}
 \label{alg:esp_real}
 \begin{algorithmic}[1]
 \Require Graph \(G=(V,E)\), source \(v_\text{src}\), sink \(v_\text{sink}\); real samples \(\{\mathcal S_e\}_{e\in E}\); synthetic samples \(\{\mathcal S'_e\}_{e\in E}\); similarity matrix \(W\) (with Laplacian \(L\)); regularization \(\lambda>0\).
 \State Compute edge-mean estimates $\hat{\mu}_e$ using Laplacian regularization.
 \State Compute a shortest path \(\widehat P\) from \(v_\text{src}\) to \(v_\text{sink}\) using the estimated means \(\hat \mu_e\) as edge weights. For example, use Dijkstra's algorithm if all \(\hat \mu_e \ge 0\) or Bellman--Ford otherwise.
 \State \textbf{return} \(\widehat P\).
 \end{algorithmic}
 \end{algorithm}

\subsection{Performance Analysis}
In this subsection, we analyze the suboptimality gap of Algorithm \ref{alg:esp_real}, defined as the difference between the true cost of the estimated path, \(\mu(\widehat P)=\sum_{e\in \widehat P}\mu_e\), and the true cost of the optimal path, \(\mu(P^\star)=\sum_{e\in P^\star}\mu_e\).

Throughout the analysis we assume access to abundant synthetic data, so \(n'_e \to \infty\). Consequently \(\bar c'_e \to \mu'_e\). The same arguments extend to the finite synthetic data setting by adding an explicit bound on the sampling error \(|\bar c'_e - \mu'_e|\), for example via standard concentration results \citep{vershynin2018high}. Under \(n'_e \to \infty\) we have
\[
\hat\mu_e - \mu_e \;=\; (\hat b_e - b^\star_e) + (\bar c'_e - \mu'_e) \;=\; \hat b_e - b^\star_e.
\]
Hence the high probability bound on the per edge estimation error \(|\hat b_e - b^\star_e|\) from Theorem \ref{thm:edgewise} provides the basis for a path-level guarantee.

Specifically, suppose we have edgewise radii \(\beta_e\ge 0\) such that
\begin{equation}
\label{eq:edgewise_bound_generic}
|\hat\mu_e - \mu_e| \;\le\; \beta_e \qquad \text{for all } e\in E.
\end{equation}
In our setting, Theorem~\ref{thm:edgewise} and a union bound imply that the choice
\[
\beta_e \;\coloneqq\; \tfrac{\sqrt{\lambda}}{2}\,B\,w_e^{-1/2}
\;+\;
\sqrt{\kappa_+}\,\alpha_e(\lambda)\,\sqrt{2\log\!\bigl(2|E|/\delta\bigr)}
\]
satisfies \eqref{eq:edgewise_bound_generic} simultaneously for all edges \(e\in E\) with probability at least \(1-\delta\).

Let \(L_{\max} = \max_{P \in \mathcal P} |E(P)|\) denote the maximum number of edges in any path from $v_\text{src}$ to $v_\text{sink}$ in the original graph \(G\), where $E(P) \subseteq E$ represents the set of the edges on path $P$. The following result bounds the suboptimality gap.

\begin{theorem}[Path Suboptimality from Edgewise Errors]
\label{thm:two_path_bound}
Assume \eqref{eq:edgewise_bound_generic} holds. The path \(\widehat P\) returned by Algorithm \ref{alg:esp_real} satisfies
\[
\mu(\widehat P)-\mu(P^\star)
\;\le\;
\sum_{e\in E(P^\star)} \beta_e \;+\; \sum_{e\in E(\widehat P)} \beta_e.
\]
In particular, if there exists $\bar\beta$ such that \(\beta_e\le \bar\beta\) for all \(e\), then
\(\mu(\widehat P)-\mu(P^\star) \le 2\,\bar\beta\,L_{\max}\).
\end{theorem}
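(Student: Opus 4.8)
The plan is to transfer the edgewise accuracy of \eqref{eq:edgewise_bound_generic} to the path level by combining it with the optimality of $\widehat P$ under the estimated costs. For a path $P$ write $\hat\mu(P):=\sum_{e\in E(P)}\hat\mu_e$ and recall $\mu(P)=\sum_{e\in E(P)}\mu_e$. The starting point is the defining property of Algorithm~\ref{alg:esp_real}: since $\widehat P$ minimizes $\hat\mu(\cdot)$ over $\mathcal P$ and $P^\star\in\mathcal P$, we have $\hat\mu(\widehat P)\le \hat\mu(P^\star)$. This is the only place the algorithm enters; no sign condition on the $\hat\mu_e$ is needed beyond the routine returning such a minimizer.

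Next I would bound the true cost of each of the two relevant paths by its estimated cost plus accumulated estimation error. Summing $\mu_e\le\hat\mu_e+\beta_e$ over $e\in E(\widehat P)$ gives $\mu(\widehat P)\le \hat\mu(\widehat P)+\sum_{e\in E(\widehat P)}\beta_e$, and summing $\hat\mu_e\le\mu_e+\beta_e$ over $e\in E(P^\star)$ gives $\hat\mu(P^\star)\le \mu(P^\star)+\sum_{e\in E(P^\star)}\beta_e$. Chaining these with $\hat\mu(\widehat P)\le\hat\mu(P^\star)$ yields
\[
\mu(\widehat P)\ \le\ \hat\mu(\widehat P)+\sum_{e\in E(\widehat P)}\beta_e\ \le\ \hat\mu(P^\star)+\sum_{e\in E(\widehat P)}\beta_e\ \le\ \mu(P^\star)+\sum_{e\in E(P^\star)}\beta_e+\sum_{e\in E(\widehat P)}\beta_e,
\]
which is exactly the claimed bound after moving $\mu(P^\star)$ to the left. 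The uniform statement follows immediately: $\widehat P$ and $P^\star$ are simple $v_\text{src}$--$v_\text{sink}$ paths, hence each uses at most $L_{\max}$ edges, so bounding every $\beta_e$ by $\bar\beta$ gives $\mu(\widehat P)-\mu(P^\star)\le 2\bar\beta L_{\max}$.

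There is no real obstacle here — the argument is a two-sided triangle inequality wrapped around the optimality of $\widehat P$, and the bound is tight in the worst case when the errors on $P^\star$ point up and those on $\widehat P$ point down. The only points deserving a sentence of care are (i) that \eqref{eq:edgewise_bound_generic} must hold \emph{simultaneously} for all $e\in E$, which is precisely why the stated choice of $\beta_e$ uses the union-bound radius $\sqrt{\kappa_+}\,\alpha_e(\lambda)\sqrt{2\log(2|E|/\delta)}$ from Theorem~\ref{thm:edgewise}; the path-level conclusion then inherits the same $1-\delta$ confidence, and (ii) that $\widehat P\in\mathcal P$ so that the edgewise bounds indeed apply to its edges, which holds by construction of the shortest-path routine. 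Everything else is routine summation.
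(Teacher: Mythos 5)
Your proof is correct and essentially mirrors the paper's argument: both rely on the optimality $\hat\mu(\widehat P)\le\hat\mu(P^\star)$ and then absorb the estimation error edge by edge using $|\hat\mu_e-\mu_e|\le\beta_e$, the only cosmetic difference being that you chain one-sided inequalities while the paper writes the same cancellation as an explicit algebraic decomposition before applying the triangle inequality. The uniform bound via $|E(P)|\le L_{\max}$ is handled identically.
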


Theorem \ref{thm:two_path_bound} provides a performance guarantee for Algorithm \ref{alg:esp_real}. First, it translates edgewise estimation errors \(\{\beta_e\}\) into a bound on the decision error of the shortest path, which formalizes the intuition that better estimation yields better decisions. In the extreme case \(\beta_e \equiv 0\) for all edges, the estimated path \(\widehat P\) coincides with the true optimal path. Second, the guarantee shows that the quality of the estimated shortest path is not equally sensitive to errors on all edges. Instead, it depends mainly on the errors along the edges that lie on \(P^\star\) and on \(\widehat P\). For instance, if the edges on \(P^\star\) have large weights \(w_e\) that reflect abundant, high quality real data, the associated error bounds \(\beta_e\) are small, which in turn yields a tight suboptimality bound. In practice, this suggests that data collection is most valuable when directed toward the most critical or frequently traversed segments of the graph, which motivates the next section on active learning for shortest paths with synthetic data.
\section{Active Shortest Path Learning with a Biased Simulator}
\label{sec:active}

We now study a \emph{cold-start} setting in which there are initially no real observations on any edge,
\[
n_e=0 \quad \forall\, e\in E,
\]
while a (possibly biased) simulator provides abundant synthetic samples \(\{c'_{e,i}\}_{i=1}^{n'_e}\) with \(n'_e \gg 1\).
This situation arises, for example, when (i) a digital twin exists for an infrastructure network but field measurements have not yet been collected; (ii) a severe distribution shift (e.g., work zones, seasonal effects, or demand changes) renders historical logs stale and necessitates re-measurement; or (iii) real queries correspond to expensive interventions or measurements (e.g., field tests or controlled experiments) and must therefore be used sparingly.
Our goal is to \emph{adaptively} combine the biased but cheap simulator with a small number of real measurements to quickly identify an optimal (or near-optimal) $v_\text{src}$--$v_\text{sink}$ path under the \emph{true} mean costs.

\subsection{Setup}

The setting mirrors Section~\ref{sec:setup} except that at time \(t=0\) there is no real data. We use $n_e(t)$ to denote the number of collected real observations for edge $e\in E$ at time $t$. Initially, \(n_e(0)=0\) for all \(e\in E\).
For each edge \(e\), a simulator supplies a (large) batch of synthetic samples
\[
\mathcal{S}'_e=\{c'_{e,i}\}_{i=1}^{n'_e},
\]
which we use both to estimate the simulator mean and to calibrate its bias as in Section~\ref{sec:edge_cost_est}.
At each round \(t\ge1\), an adaptive algorithm chooses an edge \(e_t\in E\) based on the filtration generated by all past real observations, the synthetic samples \(\{\mathcal{S}'_e\}_{e\in E}\), and the similarity matrix \(W\), and then observes a fresh real sample
\[
c_{e_t,\,n_{e_t}(t)+1} \;=\; \mu_{e_t} + \varepsilon_{e_t,\,n_{e_t}(t)+1},
\]
where \(\varepsilon_{e_t,\cdot}\) is the real noise.
We write
\[
n_e(t)\;=\;\sum_{s=1}^t \bm 1\{e_s=e\},\qquad
\bar c_e(t)\;=\;\frac{1}{n_e(t)}\sum_{i=1}^{n_e(t)} c_{e,i},
\]
for the real sample count and running real-sample mean on edge \(e\) (with the convention that \(\bar c_e(t)\) is undefined when \(n_e(t)=0\)).
Let $T$ denote either a fixed budget of real queries or a (data-dependent) stopping time, and let \(\widehat{P}_T\in\mathcal{P}\) be the path returned by the algorithm.
The target is to guarantee, for a prescribed \(\delta\in(0,1)\),
\[
\Pr\!\big[\widehat{P}_T \in \arg\min_{P\in\mathcal{P}} \mu(P)\big] \;\ge\; 1-\delta,
\]
while exploiting the synthetic data (despite bias) and the similarity information in \(W\) to guide both estimation and sampling.

\subsection{Algorithm}

We first extend the ``static'' notation of Section~\ref{sec:edge_cost_est} to the adaptive setting.
At round $t$, we form a diagonal matrix of \emph{fidelity weights} \(M_t=\mathrm{diag}(w_e(t))\) as in Section~\ref{sec:edge_cost_est}, with \(w_e(t)=0\) if \(n_e(t)=0\).
Let \(\bar c'_e\) denote the (accurate) synthetic mean for edge \(e\) obtained from the simulator, and let \(\bar c(t)\) be the vector of real sample means with entries \(\bar c_e(t)\) when \(n_e(t)>0\) and an arbitrary value (e.g., zero) when \(n_e(t)=0\).
We maintain the real-minus-synthetic discrepancy
\[
y_t \coloneqq \bar c(t)-\bar c',
\]
and estimate the bias vector via the regularized smoother
\[
\hat b_t \coloneqq (M_t+\lambda L)^{-1} M_t\, y_t,
\]
where \(L\) is the Laplacian from similarity \(W\) and \(\lambda>0\) controls the amount of smoothing across edges.
The bias-calibrated edge means are
\[
\hat\mu_e(t)=\bar c'_e+\hat b_{t,e},
\qquad 
\hat\mu(P,t)=\sum_{e\in E(P)}\hat\mu_e(t)\quad\text{for any path }P.
\]

To quantify uncertainty, we reuse the pathwise confidence construction from Section~\ref{sec:edge_cost_est}, now in a time-uniform form.
For confidence level \(\delta\in(0,1)\) and each round $t$, let
\[
S_\lambda(M_t)\coloneqq\bigl(I+\lambda\,M_t^{-1/2} L M_t^{-1/2}\bigr)^{-1},\qquad
\alpha_e(\lambda;M_t)\coloneqq\bigl\|S_\lambda(M_t)M_t^{-1/2}e_e\bigr\|_2,
\]
where \(e_e\) denotes the \(e\)-th standard basis vector in \(\mathbb{R}^{|E|}\).
With problem constants \(B,\kappa_+\) from Section~\ref{subsec:edge_estimate_analysis}, we define the per-edge confidence radius
\begin{equation}
\beta_e(t)=\sqrt{\frac{\lambda}{2}}\,\frac{B}{\sqrt{w_e(t)}}\;+\;
\sqrt{\kappa_+}\,\alpha_e(\lambda;M_t)\,\sqrt{\,2\log\!\Bigl(\frac{2|E|\,\pi^2 t^2}{3\delta}\Bigr)} ,
\label{eq:edge-ci-active}
\end{equation}
with the convention that \(\beta_e(t)=+\infty\) whenever \(w_e(t)=0\).
We extend this to paths by
\[
\beta(P,t)\coloneqq \sum_{e\in E(P)}\beta_e(t).
\]
The resulting pathwise lower and upper confidence bounds are
\[
\mathrm{LCB}(P,t)\coloneqq \hat\mu(P,t)-\beta(P,t),\qquad
\mathrm{UCB}(P,t)\coloneqq \hat\mu(P,t)+\beta(P,t).
\]

The \emph{Active Estimated Shortest Path} algorithm (\textsc{A-ESP}) uses these confidence intervals to maintain a high-probability certificate for the optimal path and to decide which edge to query next.
At each round we compare the current empirical best path
\[
\widehat P_t\in\arg\min_{P\in\mathcal P} \hat\mu(P,t)
\]
with the most optimistic challenger
\[
\widetilde P_t\in\arg\min_{P\in\mathcal P\setminus\{\widehat P_t\}} \mathrm{LCB}(P,t).
\]
If the upper confidence bound of \(\widehat P_t\) already lies below the lower confidence bound of \(\widetilde P_t\),
then the ordering between all paths is certified and we stop; otherwise we take another real measurement.

The key algorithmic design choice is how to select the next edge to query when the certificate is not yet tight.
For our theoretical analysis we consider a simple global \emph{variance-balancing} rule.
We keep track of sub-Gaussian variance proxies \(\{\sigma_e^2\}_{e\in E}\) for the real noise on each edge and, at each round, query the edge with the largest ratio
\[
\frac{\sigma_e^2}{n_e(t)},
\]
with the convention \(\sigma_e^2/0:=+\infty\) (so edges that have never been queried are sampled first). Intuitively, this sampling rule focuses on the most uncertain edge. The complete algorithm is summarized below.

\begin{algorithm}[H]
\caption{\textsc{A-ESP}: Active Estimated Shortest Path}
\label{alg:a-esp}
\begin{algorithmic}[1]
\Require Graph \(G=(V,E)\), source \(v_\text{src}\), sink \(v_\text{sink}\); simulator for synthetic costs; Laplacian \(L\); regularization \(\lambda>0\); confidence \(\delta\in(0,1)\); variance proxies \(\{\sigma_e^2\}_{e\in E}\) for the real-data noise.
\State \textbf{Initialize:} For each edge \(e\in E\), obtain a very accurate synthetic mean \(\bar c'_e\) (by running the simulator many times).
Compute a simulator-best path \(P^{\mathrm{sim}}\in\arg\min_{P\in\mathcal P}\sum_{e\in E(P)}\bar c'_e\), and query one real sample on each edge \(e\in E(P^{\mathrm{sim}})\).
Initialize the real sample counts \(n_e(1)\in\{0,1\}\) and empirical means \(\bar c_e(1)\) accordingly, and set \(t\gets 1\).
\While{true}
  \Statex {\itshape \# Bias estimation (Laplacian calibration).}
  \State Form the diagonal weight matrix \(M_t=\mathrm{diag}(w_e(t))\) and the vector \(y_t=\bar c(t)-\bar c'\).
  \State Compute \(\hat b_t=(M_t+\lambda L)^{-1}M_t y_t\) and set \(\hat\mu_e(t)=\bar c'_e+\hat b_{t,e}\).

  \Statex {\itshape \# Edgewise confidence radii.}
  \State Compute \(\beta_e(t)\) for all edges via~\eqref{eq:edge-ci-active}.

  \Statex {\itshape \# Candidate paths (using the definitions of \(\hat\mu(\cdot,t)\), \(\mathrm{LCB}(\cdot,t)\), and \(\mathrm{UCB}(\cdot,t)\) above).}
  \State \(\widehat P_t\in\arg\min_{P\in\mathcal P} \hat\mu(P,t)\) \hfill (empirical best)
  \State \(\widetilde P_t\in\arg\min_{P\in\mathcal P\setminus\{\widehat P_t\}} \mathrm{LCB}(P,t)\) \hfill (most optimistic challenger)

  \Statex {\itshape \# Stopping rule.}
  \If{\(\mathrm{UCB}(\widehat P_t,t)\le \mathrm{LCB}(\widetilde P_t,t)\)}
    \State \textbf{return} \(\widehat P_t\).
  \EndIf

  \Statex {\itshape \# Query selection (global greedy variance balancing).}
  \State Choose \(e_t \in \arg\max_{e\in E} \sigma_e^2/n_e(t)\) \emph{(with the convention \(\sigma_e^2/0=+\infty\); break ties arbitrarily)}.
  \State Query one real sample on \(e_t\); update \(\bar c_{e_t}(t{+}1)\), \(n_{e_t}(t{+}1)=n_{e_t}(t)+1\), and \(w_{e_t}(t{+}1)\) as required by Assumption~\ref{ass:noiseM_new}.
  Keep \(n_e(t{+}1)=n_e(t)\) and \(w_e(t{+}1)=w_e(t)\) for all \(e\neq e_t\).
  \State Set \(t\gets t+1\).
\EndWhile
\end{algorithmic}
\end{algorithm}

For intuition, it is useful to note that the unknown true-cost difference between \(\widehat P_t\) and \(\widetilde P_t\) is completely determined by the edges in their symmetric difference
\(D_t = E(\widehat P_t)\triangle E(\widetilde P_t)\), since contributions from edges shared by both paths cancel.
The certificate gap
\[
\mathrm{UCB}(\widehat P_t,t)-\mathrm{LCB}(\widetilde P_t,t)
\;=\; \bigl[\hat\mu(\widehat P_t,t)-\hat\mu(\widetilde P_t,t)\bigr] \;+\; \beta(\widehat P_t,t)+\beta(\widetilde P_t,t)
\]
is therefore controlled by the uncertainty on edges that participate in such disagreements.
The global greedy rule balances variance across all edges and, in particular, across those that repeatedly appear in disagreement sets; the guarantees below show that this suffices to drive the certificate gap to zero at a near-optimal rate.

\paragraph{On unknown noise proxies.}
The sampling rule uses sub-Gaussian proxies \(\{\sigma_e^2\}_{e\in E}\) for the real-data noise.
In our analysis we assume that \(\sigma_e^2\) are known upper bounds on the (conditional) variances.
In practice these proxies can be obtained from a short preliminary phase, from domain knowledge, or by maintaining conservative empirical estimates.
Over-estimating the variances only makes the confidence radii larger and the allocation more conservative, so the correctness guarantees below remain valid (up to constants), while the resulting sample-complexity bound continues to scale with the aggregate variance \(\Sigma^2=\sum_{e\in E}\sigma_e^2\).

\subsection{Guarantees and Sample Complexity}
\label{subsec:active-guarantees}



In this section, we analyze Algorithm~\ref{alg:a-esp} in the adaptive regime where the real counts \(n_e(t)\) evolve over time while the simulator sample sizes \(n'_e\) are fixed. All proofs are deferred to Appendix \ref{appx:proofs}. We work with the following adaptive analogue of Assumption~\ref{ass:noiseM}. 

\begin{assumption}[Adaptive noise and weight calibration]
\label{ass:noiseM_new}
At round $t$, the maintained edge statistic is the real--minus--synthetic mean
\(y_{t,e}\) with conditional sub-Gaussian proxy
\[
\nu_e(t)^2\ \le\ \frac{\sigma_e^2}{n_e(t)}\;+\;\frac{\sigma_e'^2}{n'_e},
\qquad n_e(t)=\sum_{s=1}^t \bm 1\{e_s=e\}.
\]
The \emph{predictable} fidelity weights \(w_e(t)\) satisfy, for fixed constants \(1\le\kappa_-\le\kappa_+<\infty\),
\[
\kappa_-\,\frac{1}{\nu_e(t)^2}\ \le\ w_e(t)\ \le\ \kappa_+\,\frac{1}{\nu_e(t)^2},
\]
with \(w_e(t)=0\) if \(n_e(t)=0\).
\end{assumption}

\noindent
Assumption~\ref{ass:noiseM_new} requires that (i) the real edge costs and simulator discrepancy are uniformly sub-Gaussian with proxies \(\sigma_e^2\) and \(\sigma_e'^2\); and (ii) the weights \(w_e(t)\) track the inverse conditional variance up to universal factors.
In words, edges that are measured more accurately (either because they have more real samples or because the simulator is more reliable on them) carry larger weight in the smoother.

Our first result shows that the edgewise confidence intervals from \eqref{eq:edge-ci-active} yield valid pathwise certificates and a correct stopping rule, regardless of how the edges are selected.

\begin{theorem}[Anytime pathwise confidence and correctness]
\label{thm:aesp-correctness}
Under Assumptions~\ref{ass:smoothM}--\ref{ass:noiseM_new} and $n'_e$ is large enough that we ignore simulator mean error, the event
\[
\mathcal E_\delta\ :=\ \Bigl\{\forall\, t\ge1,\ \forall\, P\in\mathcal P:\ \mu(P)\in\bigl[\mathrm{LCB}(P,t),\mathrm{UCB}(P,t)\bigr]\Bigr\}
\]
occurs with probability at least \(1-\delta\).
Consequently, on \(\mathcal E_\delta\), if Algorithm~\ref{alg:a-esp} 
stops at time $t$ because \(\mathrm{UCB}(\widehat P_t,t)\le \mathrm{LCB}(\widetilde P_t,t)\), then \(\widehat P_t\) equals the true optimum \(P^\star\).
\end{theorem}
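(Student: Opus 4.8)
The plan is to establish the anytime confidence event $\mathcal E_\delta$ by a union bound over rounds and edges, then deduce correctness of the stopping rule as an immediate consequence. First I would fix an edge $e$ and a round $t$ and apply the edgewise high-probability bound from Theorem~\ref{thm:edgewise}, but with the failure probability $\delta$ replaced by $\delta_t := 3\delta/(\pi^2 t^2 |E|)$, so that the event $\{|\hat b_{t,e} - b^\star_e| \le \beta_e(t)\}$ holds with probability at least $1-\delta_t$. The key point is that the per-edge radius in \eqref{eq:edge-ci-active} is exactly the Theorem~\ref{thm:edgewise} radius with $2/\delta$ replaced by $2|E|\pi^2 t^2/(3\delta)$; under Assumption~\ref{ass:noiseM_new} the conditional sub-Gaussian proxy $\nu_e(t)^2$ and the predictable weights $w_e(t)$ satisfy the same two-sided calibration as in the static Assumption~\ref{ass:noiseM}, so the static derivation (bias term $\tfrac{\sqrt\lambda}{2}Bw_e(t)^{-1/2}$ from Laplacian smoothness, variance term $\sqrt{\kappa_+}\,\alpha_e(\lambda;M_t)\sqrt{2\log(1/\delta_t)}$ from sub-Gaussian concentration) carries over verbatim at each fixed $t$. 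Summing $\delta_t$ over $e\in E$ and $t\ge 1$ gives total failure probability at most $\sum_{t\ge1}\sum_{e\in E}\delta_t = |E|\cdot \tfrac{3\delta}{\pi^2|E|}\sum_{t\ge1}t^{-2} = \delta$, which is the purpose of the $\pi^2 t^2$ inflation.

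Next I would pass from edgewise bounds to pathwise bounds: on the intersection event (probability $\ge 1-\delta$), for every $t$ and every $P\in\mathcal P$ we have, since $\bar c'_e - \mu'_e$ is negligible by the large-$n'_e$ hypothesis, $|\hat\mu(P,t)-\mu(P)| \le \sum_{e\in E(P)} |\hat\mu_e(t)-\mu_e| \le \sum_{e\in E(P)}\beta_e(t) = \beta(P,t)$, which is precisely $\mu(P)\in[\mathrm{LCB}(P,t),\mathrm{UCB}(P,t)]$. This establishes $\mathcal E_\delta$. For the correctness claim, work on $\mathcal E_\delta$ and suppose the algorithm stops at round $t$, i.e. $\mathrm{UCB}(\widehat P_t,t)\le\mathrm{LCB}(\widetilde P_t,t)$. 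For any $P\neq\widehat P_t$, either $P=\widetilde P_t$ or $\mathrm{LCB}(P,t)\ge\mathrm{LCB}(\widetilde P_t,t)$ by the definition of $\widetilde P_t$ as the LCB-minimizer over $\mathcal P\setminus\{\widehat P_t\}$; in both cases $\mu(P)\ge\mathrm{LCB}(P,t)\ge\mathrm{LCB}(\widetilde P_t,t)\ge\mathrm{UCB}(\widehat P_t,t)\ge\mu(\widehat P_t)$. Hence $\widehat P_t$ minimizes $\mu(\cdot)$ over $\mathcal P$, i.e. $\widehat P_t = P^\star$ (up to ties, $\widehat P_t\in\arg\min_P\mu(P)$).

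I expect the main obstacle to be the \emph{adaptivity} of the confidence construction: the static Theorem~\ref{thm:edgewise} is stated for fixed, non-random weights and a fixed sampling design, whereas here both $n_e(t)$ and $M_t$ are determined by the history (the query rule depends on past real observations). I would address this either by invoking the ``predictable'' structure emphasized in Assumption~\ref{ass:noiseM_new}---$w_e(t)$ and the event $\{e_t=e\}$ are $\mathcal F_{t-1}$-measurable, so conditionally on $\mathcal F_{t-1}$ the fresh noise is still centered sub-Gaussian with the stated proxy and the static bound applies round-by-round before the union bound---or, more robustly, by replacing the fixed-time concentration with a time-uniform (self-normalized / martingale) deviation inequality so that the $\log t$ terms in $\beta_e(t)$ already absorb the adaptivity without needing the crude union over $t$. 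In either route the arithmetic is routine; the care is in making the conditioning argument precise, and in checking that $\alpha_e(\lambda;M_t)$ and the bias term are evaluated at the realized (random but predictable) $M_t$, which is exactly how $\beta_e(t)$ in \eqref{eq:edge-ci-active} is defined. A minor additional point is handling the negligible simulator-mean error $|\bar c'_e-\mu'_e|$ cleanly: under the stated ``$n'_e$ large enough'' hypothesis this is simply dropped, but if one wanted a finite-$n'_e$ version one would inflate each $\beta_e(t)$ by an $\mathcal O(\sigma'_e/\sqrt{n'_e})$ term and the same union bound goes through.
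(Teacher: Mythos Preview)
Your proposal is correct and follows essentially the same route as the paper: reuse the static bias--variance decomposition from Theorem~\ref{thm:edgewise} at each round, handle adaptivity by conditioning on $\mathcal F_{t-1}$ so that the predictable $M_t$ is fixed and the fresh noise is conditionally sub-Gaussian (the paper packages this step as a separate anytime sub-Gaussian lemma), and then union-bound over $(t,e)$ with weights $\delta_{t,e}=3\delta/(\pi^2 t^2|E|)$ before summing edgewise radii to obtain the pathwise intervals and the correctness argument. Two trivial slips: the sum $\sum_{t\ge1}\sum_{e}\delta_{t,e}$ equals $\delta/2\le\delta$, not $\delta$; and the sub-Gaussian tail constant should read $\sqrt{2\log(2/\delta_t)}$ rather than $\sqrt{2\log(1/\delta_t)}$ to match the definition of $\beta_e(t)$ in~\eqref{eq:edge-ci-active}.
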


Theorem~\ref{thm:aesp-correctness} certifies that \textsc{A-ESP} is correct whenever it stops and that its pathwise certificates are valid uniformly over all rounds and all paths.

We now provide a sample-complexity guarantee for the global greedy variant in Algorithm~\ref{alg:a-esp}. For the sample-complexity result we further work in a regime where the residual simulator variance is negligible compared to the real-data noise, so that the Monte Carlo noise from the simulator is dominated by the variability in real costs.

\begin{assumption}[Negligible simulator variance]
\label{ass:negligible-sim}
Under Assumption~\ref{ass:noiseM_new}, the simulator sample sizes \(n'_e\) are large enough that, for all edges \(e\) and all rounds $t$ with \(n_e(t)\ge1\),
\[
\nu_e(t)^2\ \le\ \frac{\sigma_e^2}{n_e(t)},
\qquad\text{equivalently}\qquad
\frac{\sigma_e'^2}{n'_e}\ \ll\ \frac{\sigma_e^2}{n_e(t)}.
\]
In particular, we may drop the floor term \(\sigma_e'^2/n'_e\) in the bound of Assumption~\ref{ass:noiseM_new} and regard the weights as satisfying \(w_e(t)\asymp n_e(t)/\sigma_e^2\).
This corresponds to running the simulator many times per edge before the active phase so that its Monte Carlo noise is dominated by real-data variability.
\end{assumption}

Let \(P^\star\) be an optimal path and \(P^{(2)}\) a second-best path (i.e., the path with second smallest real expected cost) and define the optimality gap
\[
\Delta\;\coloneqq\;\min_{P\neq P^\star}\bigl[\mu(P)-\mu(P^\star)\bigr]\;=\;\mu(P^{(2)})-\mu(P^\star)\;>0.
\]
Write \(L_{\max} = \max_{P\in\mathcal P} |E(P)|\) for the maximum number of edges on any $v_\text{src}$--$v_\text{sink}$ path, \(|E|\) for the total number of edges, and \(\Sigma^2=\sum_{e\in E}\sigma_e^2\) for the aggregate real-data variance.

Under the negligible-simulator regime of Assumption~\ref{ass:negligible-sim}, we obtain the following bound.

\begin{theorem}[Sample complexity for A-ESP with global greedy sampling]
\label{thm:aesp-simple-sc}
Under Assumptions~\ref{ass:smoothM}, \ref{ass:noiseM_new}, and~\ref{ass:negligible-sim}, there exists a universal constant \(C>0\) such that, with probability at least \(1-\delta\), Algorithm~\ref{alg:a-esp} with the global greedy sampling rule returns the optimal path \(P^\star\) by or before
\begin{equation}
T_\delta\ \le\
C\cdot
\frac{|E|\,L_{\max}}{\kappa_-\,\Delta^2}\,
\Bigl(\lambda B^2 + \kappa_+ \log\frac{2|E|}{\delta}\Bigr)\,
\Sigma^2.
\end{equation}
\end{theorem}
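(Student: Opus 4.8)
The plan is to work on the anytime confidence event
\[
\mathcal E_\delta=\bigl\{\forall t\ge1,\ \forall P\in\mathcal P:\ \mu(P)\in[\mathrm{LCB}(P,t),\mathrm{UCB}(P,t)]\bigr\}
\]
from Theorem~\ref{thm:aesp-correctness}, which holds with probability at least $1-\delta$ and on which the path returned upon stopping equals $P^\star$; it therefore suffices to upper bound the stopping time on $\mathcal E_\delta$. The first step is a deterministic implication: on $\mathcal E_\delta$, if $\beta_e(t)\le \Delta/(4L_{\max})$ for every $e\in E$, then the stopping test fires at round $t$ and $\widehat P_t=P^\star$. Indeed, $\beta(P,t)=\sum_{e\in E(P)}\beta_e(t)\le L_{\max}\max_e\beta_e(t)\le \Delta/4$ for every path, so on $\mathcal E_\delta$ we get $|\hat\mu(P,t)-\mu(P)|\le \Delta/4$ for all $P$; since each $P\neq P^\star$ has $\mu(P)\ge\mu(P^\star)+\Delta$, its empirical cost strictly exceeds that of $P^\star$, hence $\widehat P_t=P^\star$; and then $\mathrm{UCB}(\widehat P_t,t)\le\mu(P^\star)+\Delta/2\le\mu(\widetilde P_t)-\Delta/2\le\mathrm{LCB}(\widetilde P_t,t)$ because $\widetilde P_t\neq P^\star$. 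So it remains only to bound the first round $t$ at which $\max_e\beta_e(t)\le \Delta/(4L_{\max})$.

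Next I would convert this edgewise requirement into a lower bound on sample counts. Under Assumption~\ref{ass:negligible-sim} we may take $w_e(t)\asymp n_e(t)/\sigma_e^2$, and since $S_\lambda(M_t)=(I+\lambda M_t^{-1/2}LM_t^{-1/2})^{-1}$ is a contraction we have $\alpha_e(\lambda;M_t)\le w_e(t)^{-1/2}$; substituting both into~\eqref{eq:edge-ci-active} and using $(a+b)^2\le2a^2+2b^2$ gives, up to universal constants,
\[
\beta_e(t)^2\ \lesssim\ \frac{\sigma_e^2}{\kappa_-\,n_e(t)}\Bigl(\lambda B^2+\kappa_+\log\tfrac{|E|\,t^2}{\delta}\Bigr).
\]
Hence $\max_e\beta_e(t)\le\Delta/(4L_{\max})$ is implied by the per-edge condition $n_e(t)\gtrsim \frac{L_{\max}^2\sigma_e^2}{\kappa_-\Delta^2}\bigl(\lambda B^2+\kappa_+\log(|E|t^2/\delta)\bigr)$ for all $e$.

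The remaining ingredient is a pigeonhole property of the global greedy rule. Because unqueried edges carry infinite priority $\sigma_e^2/0=+\infty$, every edge is sampled at least once within the first $O(|E|)$ queries; thereafter max-ratio sampling keeps the ratios $\{\sigma_e^2/n_e(t)\}_e$ within a constant factor of one another (at an edge's last query round its ratio equals the running maximum, and it can at most double afterward). Combined with $\sum_e n_e(t)\ge t-O(|E|)$ and $\sum_e\sigma_e^2=\Sigma^2$, this forces $\sigma_e^2/n_e(t)\le O(\Sigma^2/t)$ uniformly in $e$, i.e.\ $n_e(t)\gtrsim \sigma_e^2 t/\Sigma^2$. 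Comparing with the per-edge condition above, the stopping test is met once $t\gtrsim \frac{L_{\max}^2\Sigma^2}{\kappa_-\Delta^2}\bigl(\lambda B^2+\kappa_+\log(|E|t^2/\delta)\bigr)$; solving this transcendental inequality via the standard lemma ($x\le a+b\log x\Rightarrow x=O(a+b\log b)$), using $L_{\max}\le|E|$ to rewrite $L_{\max}^2$ as $|E|L_{\max}$, and folding the resulting $\log$-of-problem-parameters corrections and the additive $O(|E|)$ warm-up into the universal constant $C$, yields the claimed bound on $T_\delta$.

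I expect the two main obstacles to be: (i) making the water-filling step fully rigorous in the adaptive regime — precisely quantifying how global max-ratio sampling equalizes the edgewise variances $\sigma_e^2/n_e(t)$ and carefully handling edges that have been sampled only once or twice and the initialization phase; and (ii) threading the time-uniform logarithm $\log(|E|t^2/\delta)$ through the final transcendental inequality so that the induced $\log\log$-type corrections are genuinely dominated by the stated $\log(2|E|/\delta)$ term (equivalently, legitimately absorbed into $C$). The rest is bookkeeping to land exactly on the $|E|L_{\max}/(\kappa_-\Delta^2)\cdot(\lambda B^2+\kappa_+\log(2|E|/\delta))\cdot\Sigma^2$ form.
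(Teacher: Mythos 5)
Your proposal is correct in its essentials and reaches the theorem via a genuinely different decomposition than the paper's. The paper requires $\sum_{e\in S_t}\beta_e(t)^2 \le \Delta^2/(32 L_{\max})$, where $S_t = E(\widehat P_t)\cup E(\widetilde P_t)$, obtained by Cauchy--Schwarz with $|S_t|\le 2L_{\max}$; it then shows (via a contradiction argument involving the challenger) that this forces $\widehat P_t = P^\star$ and stopping, and bounds $\sum_{e\in S_t}\sigma_e^2/n_e(t) \le \sum_{e\in E}\sigma_e^2/n_e(t) \le 2|E|\Sigma^2/t$ using the \emph{sum} form of its greedy-balancing lemma (Lemma~\ref{lem:balance-Dt}). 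You instead impose the uniform $\ell_\infty$ condition $\max_{e\in E}\beta_e(t)\le \Delta/(4L_{\max})$, which makes the stopping argument cleaner (every path's empirical cost is within $\Delta/4$ of its true cost, so the optimal path is empirically best and the UCB/LCB gap closes directly), and then use the \emph{max} form $\max_e \sigma_e^2/n_e(t)\lesssim \Sigma^2/t$, which is exactly Step~2 of the paper's Lemma~\ref{lem:balance-Dt}. Your route yields an intermediate factor $L_{\max}^2$ rather than $|E|L_{\max}$; since $L_{\max}\le |E|$ this is sharper (or equal), and you correctly relax it to match the stated form. What your approach buys is a simpler termination argument and a potentially tighter constant; what it costs is requiring $\beta_e(t)$ small on \emph{all} edges rather than only on $S_t$, which happens to be harmless here because the global greedy rule samples every edge anyway (and the theorem's $|E|$ factor reflects this).

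Two remarks on the steps you flag as needing care. First, your informal ``each edge's ratio stays within a constant factor of the running maximum'' pigeonhole is precisely what the paper's Lemma~\ref{lem:balance-Dt} makes rigorous: at the round $s_f$ an edge $f$ was last queried, $\sigma_f^2/(n_f(t)-1) \ge \max_e \sigma_e^2/n_e(t)$ by the greedy rule and monotonicity of counts; summing $n_f(t)-1 \le \sigma_f^2/\lambda_t$ over $f$ gives $\lambda_t \le \Sigma^2/(T_E(t)-|E|)$. Second, the self-consistency step that clears the time-uniform $\log t^2$ is handled at the same level of informality in the paper's proof, so you are not missing anything the paper makes precise; both treatments absorb the $\log\log$-scale corrections into the universal constant.
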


At a high level, the proof combines (i) the pathwise confidence guarantee from Theorem~\ref{thm:aesp-correctness}, (ii) the decomposition of the edgewise confidence radii \(\beta_e(t)\) into a smoothing bias term of order \(\sqrt{\lambda}\,B/\sqrt{w_e(t)}\) and a variance term controlled by \(\alpha_e(\lambda;M_t)\), and (iii) a greedy variance-balancing argument showing that the global rule \(e_t\in\arg\max_e \sigma_e^2/n_e(t)\) drives the average variance \(\sum_{e\in E}\sigma_e^2/n_e(t)\) down at the rate \(O(|E|\Sigma^2/t)\).
A detailed proof is given in Appendix~\ref{appx:active-proofs}.

Similarly to Theorem~\ref{thm:edgewise}, Theorem~\ref{thm:aesp-simple-sc} reveals several interpretable trade-offs:
\begin{itemize}
\item \textbf{Optimality gap \(\Delta\).}
Larger gaps reduce the required number of real queries: distinguishing a clearly superior path from its runner-up is easier than separating many nearly tied alternatives.
\item \textbf{Smoothness and regularization \((B,\lambda)\).}
The parameter \(B\) captures how smoothly the simulator-to-real bias varies across the edge-similarity graph, while \(\lambda\) controls how aggressively we smooth.
Stronger smoothing (larger \(\lambda\)) reduces the variance term but increases the bias term, and the combination \(\lambda B^2\) in the bound reflects this bias-variance trade-off.
\item \textbf{Real-data variance \(\Sigma^2\).}
Lower real-data noise leads to fewer required queries.
The factor \(\Sigma^2\) shows that \textsc{A-ESP} automatically adapts to heterogeneous edge variances: harder edges (with larger \(\sigma_e^2\)) attract more samples, but the overall difficulty still scales with the aggregate variance of the network.
\item \textbf{Graph and path complexity \((|E|,L_{\max})\).}
The dependence on both the total number of edges \(|E|\) and the maximum path length \(L_{\max}\) reflects the fact that each certificate concerns an entire path, yet sampling is performed at the edge level.
Longer paths and denser graphs require more samples to control the uncertainty on all relevant edges.
\item \textbf{Weight calibration \((\kappa_-,\kappa_+)\).}
Tighter calibration (i.e., smaller spread between \(\kappa_-\) and \(\kappa_+\)) leads to sharper bounds.
The constant \(\kappa_-\) controls how faithfully the weights \(w_e(t)\) approximate the inverse noise level, while \(\kappa_+\) scales the variance term inside \(\beta_e(t)\).
\end{itemize}

\begin{remark}[Non-unique optimal paths]
\label{rem:nonunique-opt}
The definition of the ``optimality gap''
\[
\Delta\coloneqq \min_{P\neq P^\star}\bigl[\mu(P)-\mu(P^\star)\bigr]
\]
implicitly assumes that the optimal path is unique. Indeed, if there exist multiple optimal paths, then there is some \(P\neq P^\star\) with \(\mu(P)=\mu(P^\star)\), and hence \(\Delta=0\).
In this case, any guarantee (or stopping rule) that requires \emph{strict separation} between \(P^\star\) and all other paths cannot hold with finite samples: no confidence intervals can certify that one optimal path is strictly better than another when their means coincide.

\smallskip
\noindent
\textbf{Implication for Algorithm~\ref{alg:a-esp}.}
The stopping rule in Algorithm~\ref{alg:a-esp} compares \(\widehat P_t\) to a single challenger \(\widetilde P_t\) and certifies a strict ordering. When the optimal path is not unique, the challenger \(\widetilde P_t\) may itself be optimal, so the condition
\(\mathrm{UCB}(\widehat P_t,t)\le \mathrm{LCB}(\widetilde P_t,t)\)
need not ever be satisfied (even though returning \emph{any} element of \(\mathcal P^\star\) is acceptable).

A standard remedy is to adopt an \((\varepsilon,\delta)\)-PAC objective: return a path whose expected cost is within \(\varepsilon\) of optimal.
This can be achieved by relaxing the stopping rule to
\[
\mathrm{UCB}(\widehat P_t,t)\ \le\ \mathrm{LCB}(\widetilde P_t,t)\ +\ \varepsilon,
\]
in which case, in the event \(\mathcal E_\delta\) from Theorem~\ref{thm:aesp-correctness}, the returned path satisfies
\(\mu(\widehat P_t)\le \mu^*+\varepsilon\)
without requiring the algorithm to distinguish between multiple optimal solutions, where $\mu^*$ is the optimal shortest-path expected cost.
\end{remark}

\section{Experiments}
\label{subsec:experiments}

We now evaluate our proposed edge cost estimation algorithm and active shortest-path learning algorithm through numerical experiments. Our goal is to compare our methods with natural baselines and to understand how performance varies with the noise level in real observations, the number of real observations (for Algorithm~\ref{alg:esp_real}), and the smoothness and magnitude of simulator bias.

\subsection{Experimental Setup}

\paragraph{Dataset Description.}
Our experiments are based on road networks derived from real traffic data.
\begin{itemize}
    \item 
\textbf{Urban road networks (\texttt{netzschleuder}) \citep{tiago_p_peixoto_2020_7839981}}.  We first select three representative urban road networks from \texttt{netzschleuder}: \texttt{Barcelona}, \texttt{Brasilia}, and \texttt{Irvine2}. As illustrated in Figure~\ref{fig:simul-topo}, these graphs exhibit markedly different topological patterns, which allow us to test the robustness of our methods across network structures. These datasets do not have edge costs and we generate a set of them (for both the ``real'' data and the synthetic data) as detailed below.
\item \textbf{Traffic graphs (\texttt{METR-LA} and \texttt{PEMS-BAY}) \citep{j49q-ch56-25}}.
To evaluate performance on truly observed traffic, we additionally use the \texttt{METR-LA} and \texttt{PEMS-BAY} datasets, whose road graphs are shown in Figure~\ref{fig:real-topo}. Each edge corresponds to a traffic sensor that reports a time series of speeds as the edge cost in the dataset. We defer the detailed description of this dataset to Appendix \ref{appx:experiment}.
\end{itemize}

\begin{figure}[!htbp]
    \centering
    \begin{subfigure}{0.32\textwidth}
        \centering
        \includegraphics[width=\linewidth]{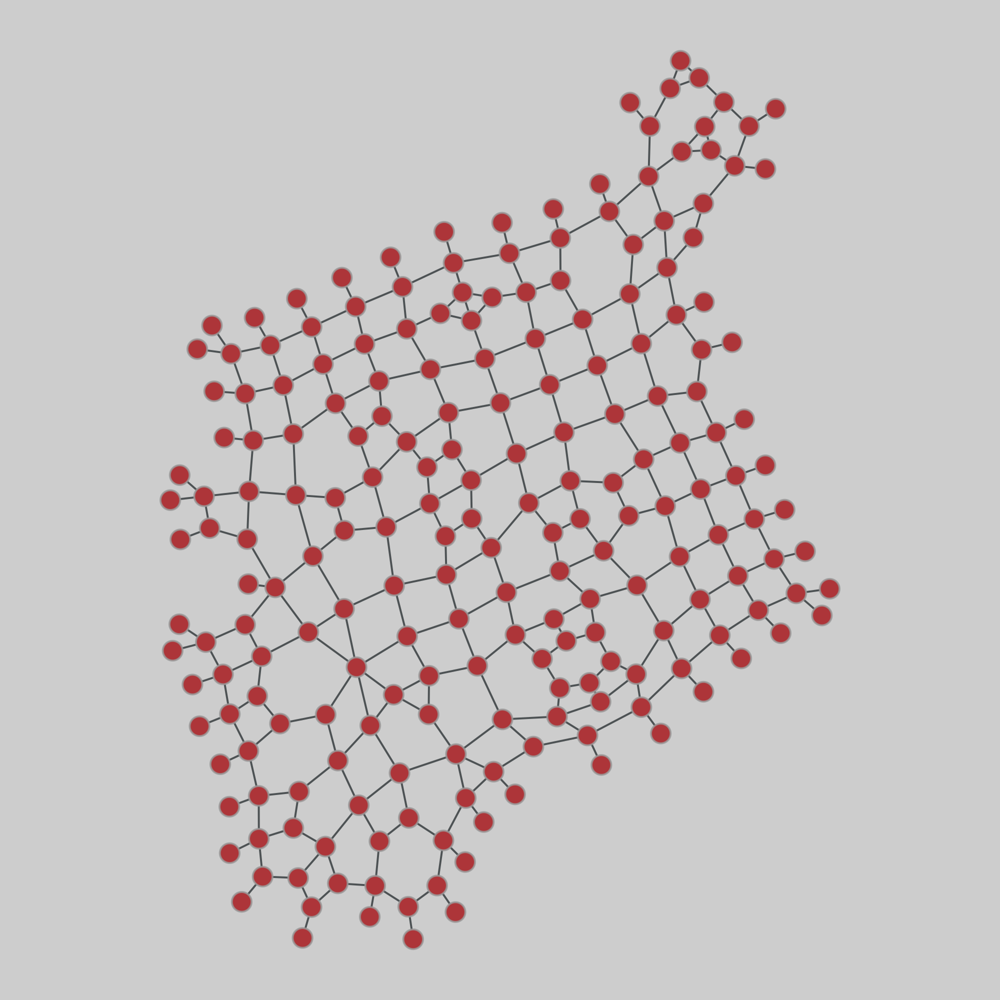}
        \caption{Barcelona}
    \end{subfigure}
    \begin{subfigure}{0.32\textwidth}
        \centering
        \includegraphics[width=\linewidth]{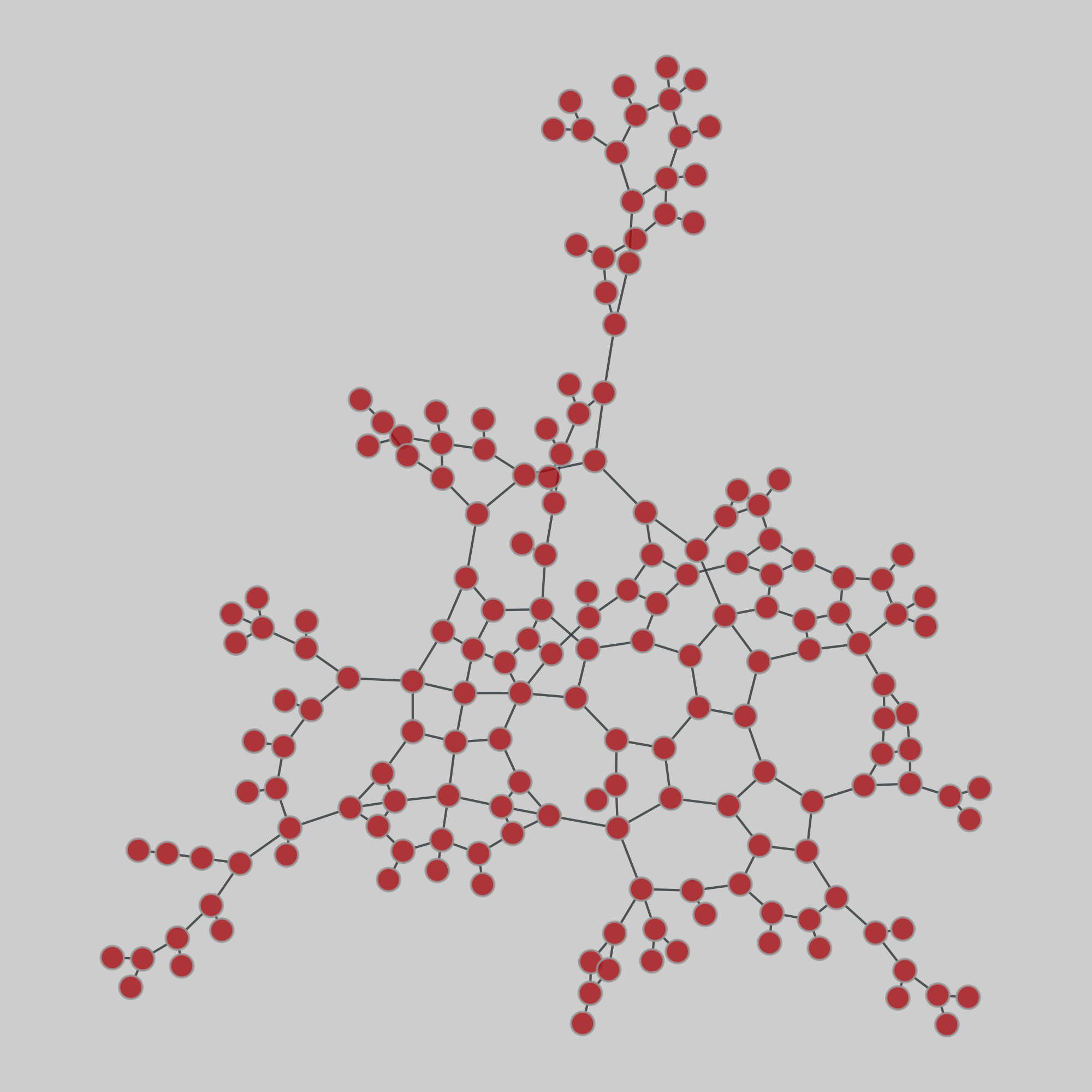}
        \caption{Brasilia}
    \end{subfigure}
    \begin{subfigure}{0.32\textwidth}
        \centering
        \includegraphics[width=\linewidth]{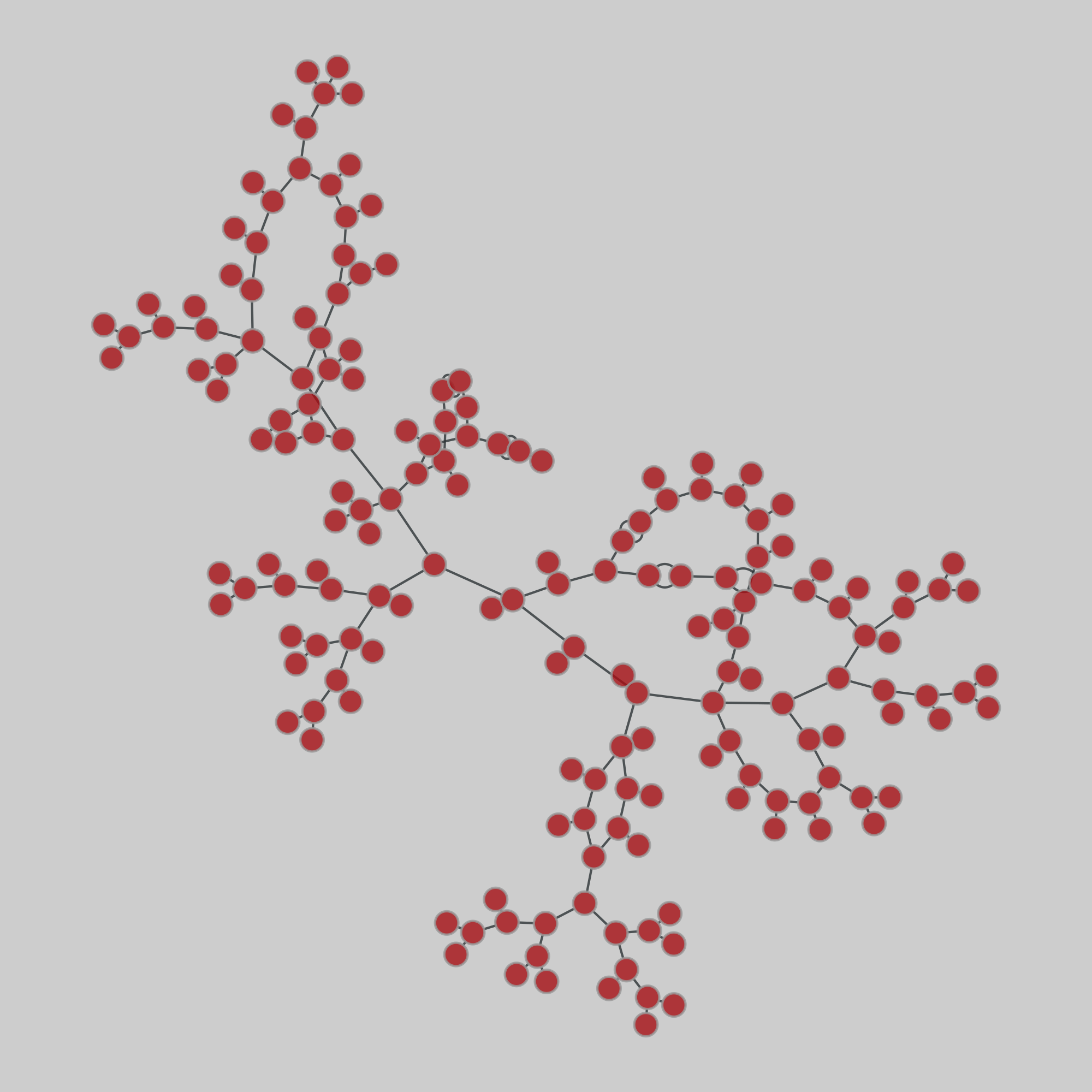}
        \caption{Irvine2}
    \end{subfigure}
    \caption{Topology of the three urban road networks from \texttt{netzschleuder}. Barcelona contains many short loops, Irvine2 is approximately tree-like, and Brasilia combines both patterns.}
    \label{fig:simul-topo}
\end{figure}

\begin{figure}[!htbp]
    \centering
    \begin{subfigure}{0.35\textwidth}
        \centering
        \includegraphics[width=\linewidth]{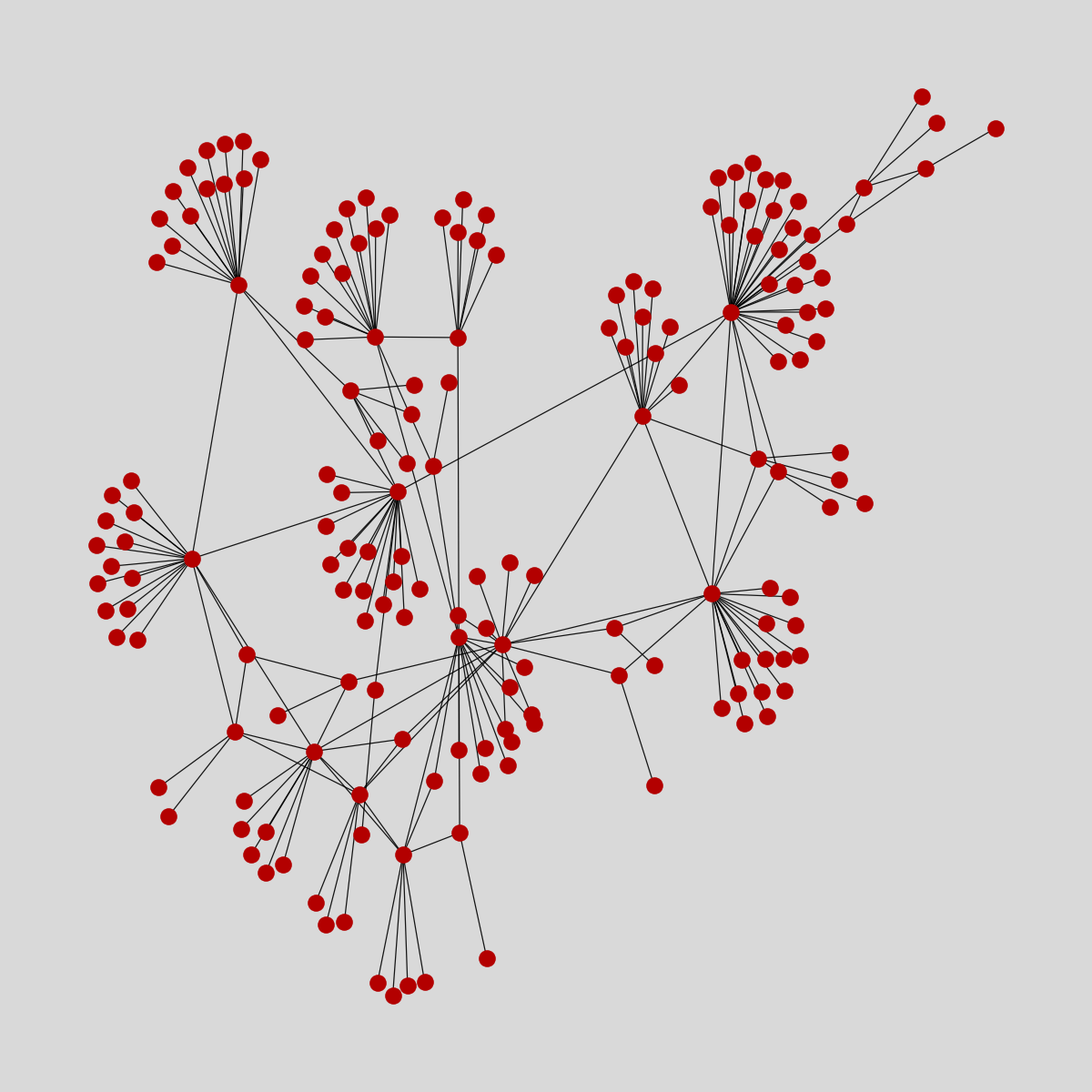}
        \caption{METR-LA}
    \end{subfigure}
    \begin{subfigure}{0.35\textwidth}
        \centering
        \includegraphics[width=\linewidth]{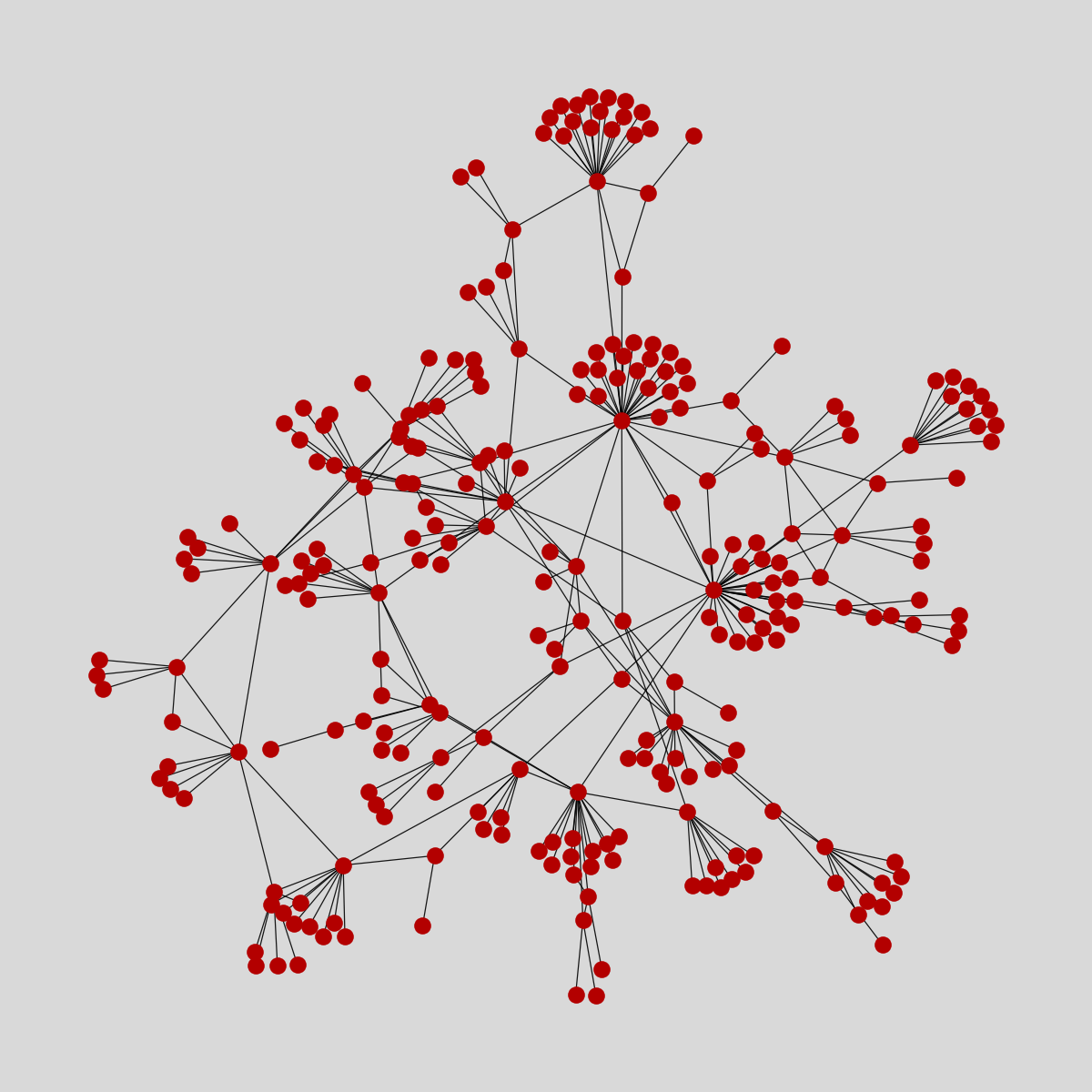}
        \caption{PEMS-BAY}
    \end{subfigure}
    \caption{Topology of the two real traffic graphs used in our case study. Each edge represents a sensor location that records traffic flow information.}
    \label{fig:real-topo}
\end{figure}

\paragraph{Data Generation.}
On the three \texttt{netzschleuder} graphs, each edge’s real cost $\mu_e$ is modeled as an independent Gaussian random variable $\mathcal{N}(50,10^2)$. We then construct a smooth simulator bias vector $b^\star$ in two steps. First, we solve
\[
    (I + \rho L)\,\tilde{b}^\star = \xi, \qquad \xi \sim \mathcal{N}(0,I_{|E|}),
\]
where $L$ is the Laplacian of the edge-similarity graph and $\rho>0$ controls the degree of smoothing. Second, we rescale to meet a prescribed Laplacian seminorm,
\[
    b^\star \;=\; \frac{B}{\sqrt{(\tilde{b}^\star)^\top L \tilde{b}^\star}}\,\tilde{b}^\star,
\]
so that $\|b^\star\|_L = B$. The parameters $\rho$ and $B$ therefore govern the smoothness and magnitude of the bias, respectively. Intuitively, $(I + \rho L)^{-1}$ acts like a low-pass filter on the white-noise vector $\xi$, so that for larger $\rho$ nearby edges in the similarity graph tend to share similar bias values, while $B$ simply scales this smooth bias pattern up or down, making the simulator weakly or strongly misspecified overall. In our experiments we test different choices of $\rho$ and $B$ to see their influence on the algorithm's performance. The choice of the similarity matrix $W$ in $L$ is the Heat-kernel (diffusion) similarity, which is defined in Appendix \ref{appx:experiment}.

Given $\mu_e$ and $b^\star_e$, we define the simulator mean as $\mu'_e = \mu_e - b^\star_e$. Since simulator evaluations are assumed inexpensive, we treat the simulator means as known by drawing enough samples to accurately estimate $\mu'_e$. For real observations, unless otherwise specified, each edge has $n_e = 20$ i.i.d.\ samples $c_{e,i} \sim \mathcal{N}(\mu_e,30^2)$, and $25\%$ of the edges are unobservable (i.e., $n_e = 0$ and no real samples are available) in the edge cost estimation experiments.

For the \texttt{METR-LA} and \texttt{PEMS-BAY} datasets, we construct paired synthetic and real data directly from the raw traffic time series. We select two three-hour windows per day with markedly different congestion patterns (6--9\,am and 3--6\,pm). For each edge, we take the empirical mean and variance of speeds in the afternoon window as the ``real'' cost distribution, and those from the morning window as the ``simulator'' distribution. This construction preserves the heterogeneity and spatial correlation present in real traffic while inducing a realistic, unconstrained bias pattern between simulator and real costs. Throughout the case study of edge cost estimation, we cap the number of real samples at 20 per edge and label $50\%$ of the edges as unobservable. More details on the real data generation can be found in Appendix \ref{appx:experiment}.

\paragraph{Baselines.}
In the edge cost estimation experiments, we compare our estimator with three natural baselines:
\begin{itemize}
    \item \textbf{SIM (Synthetic Only).} Uses the simulator means only, $\hat\mu^{\mathrm{SIM}}_e \equiv \bar c'_e$, and ignores all real observations.
    \item \textbf{REAL (Real Only with Synthetic Fallback).} Uses the empirical mean of real observations when available and falls back to the simulator otherwise:
    \[
        \hat\mu^{\mathrm{REAL}}_e =
        \begin{cases}
        \bar c_e, & n_e > 0,\\[2pt]
        \bar c'_e, & n_e = 0.
        \end{cases}
    \]
    \item \textbf{CONST (Global Shift Calibration).} Fits a single global shift $\gamma$ by weighted least squares,
    \[
        \gamma
        = \arg\min_{\tilde{\gamma}\in \mathbb{R}} \sum_e w_e\,(\bar c_e - \bar c'_e - \tilde{\gamma})^2
        = \frac{\sum_e w_e(\bar c_e - \bar c'_e)}{\sum_e w_e},
    \]
    and sets $\hat\mu = \bar c' + \gamma\,\mathbf 1$.
\end{itemize}

In the active learning experiments, edge costs are always estimated by our proposed calibration method. We compare \textsc{A-ESP} (Algorithm~\ref{alg:a-esp}) with a simple \textbf{Random} baseline that selects the next edge to query uniformly at random over all edges, rather than targeting the currently noisiest edge. Both methods are initialized with the same prior information: each edge on the simulator-optimal path $P^{\mathrm{sim}}$ is queried once for a real sample.

\subsection{Results: Edge Cost Estimation}

We first study edge cost estimation on the \texttt{netzschleuder} graphs.  Figure~\ref{fig:edge-real-obs} reports the Root Mean Squared Error (RMSE) of the calibrated edge means $\hat\mu$ as  we vary the fraction of edges with real observations while keeping the per-edge sample size fixed at $n_e=20$, thereby mimicking different total numbers of real observations/samples.
Figure~\ref{fig:edge-smoothness} sweeps over the bias smoothness parameter $\rho$ and magnitude $B$, controlling both the scale and spatial variation of the simulator-to-real bias $b^\star$. Figure~\ref{fig:edge-real-noise}  varies the noise level of real observations by changing the standard deviation of the sampling distribution. Finally, Figure~\ref{fig:edge-pathwise} expands the estimation granularity to the path-level, which illustrates the suboptimality by comparing the true cost gap between the estimated shortest path and the ground-truth one.  Unless otherwise stated, we always conduct 5 independent runs. The reported numbers are the means over the conducted runs with shaded areas indicating the standard deviation.

\begin{figure}[!htbp]
    \centering
    \begin{subfigure}[t]{0.32\textwidth}
        \centering
        \includegraphics[width=\linewidth]{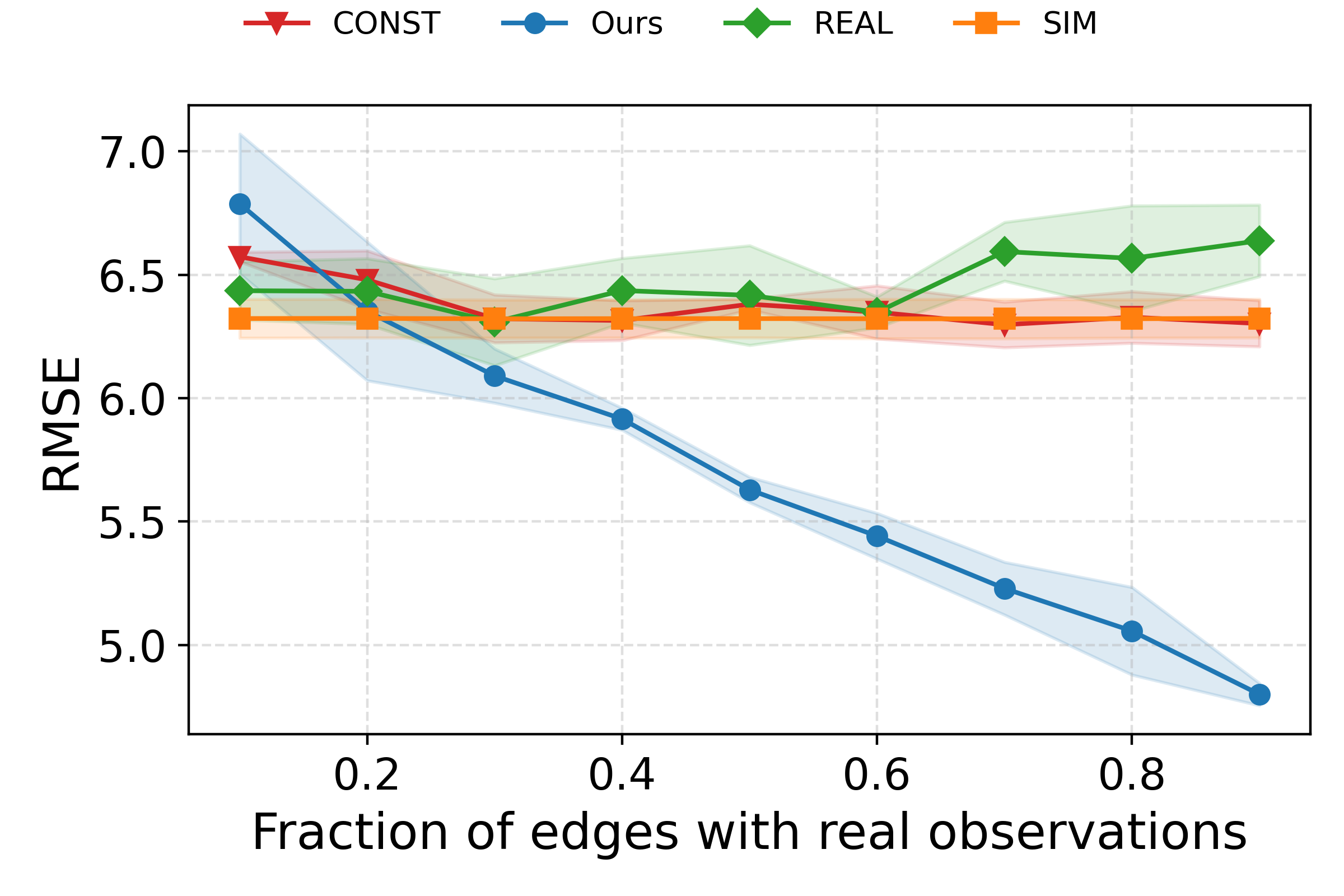}
        \caption{Barcelona}
    \end{subfigure}
    \hfill
    \begin{subfigure}[t]{0.32\textwidth}
        \centering
        \includegraphics[width=\linewidth]{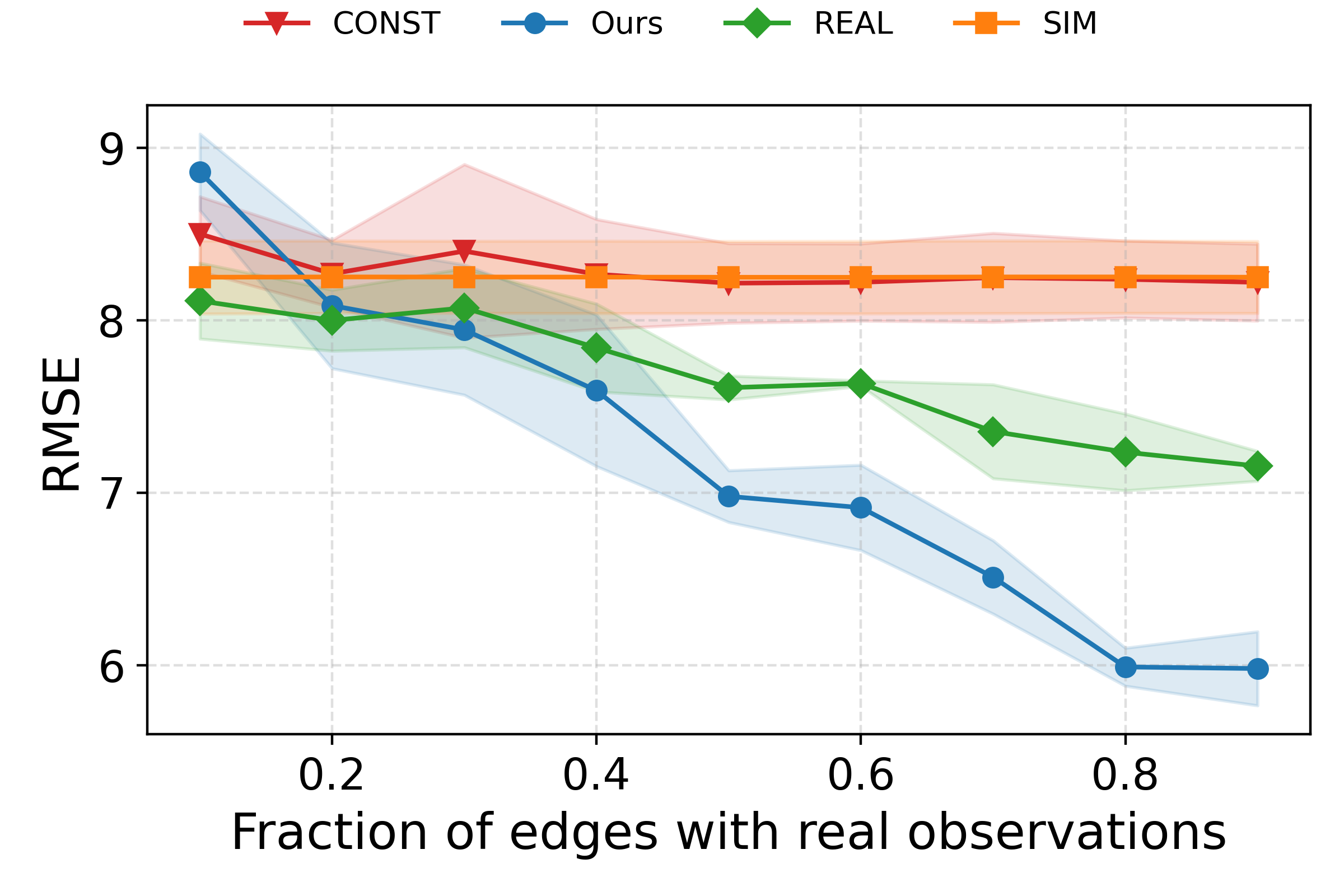}
        \caption{Brasilia}
    \end{subfigure}
    \hfill
    \begin{subfigure}[t]{0.32\textwidth}
        \centering
        \includegraphics[width=\linewidth]{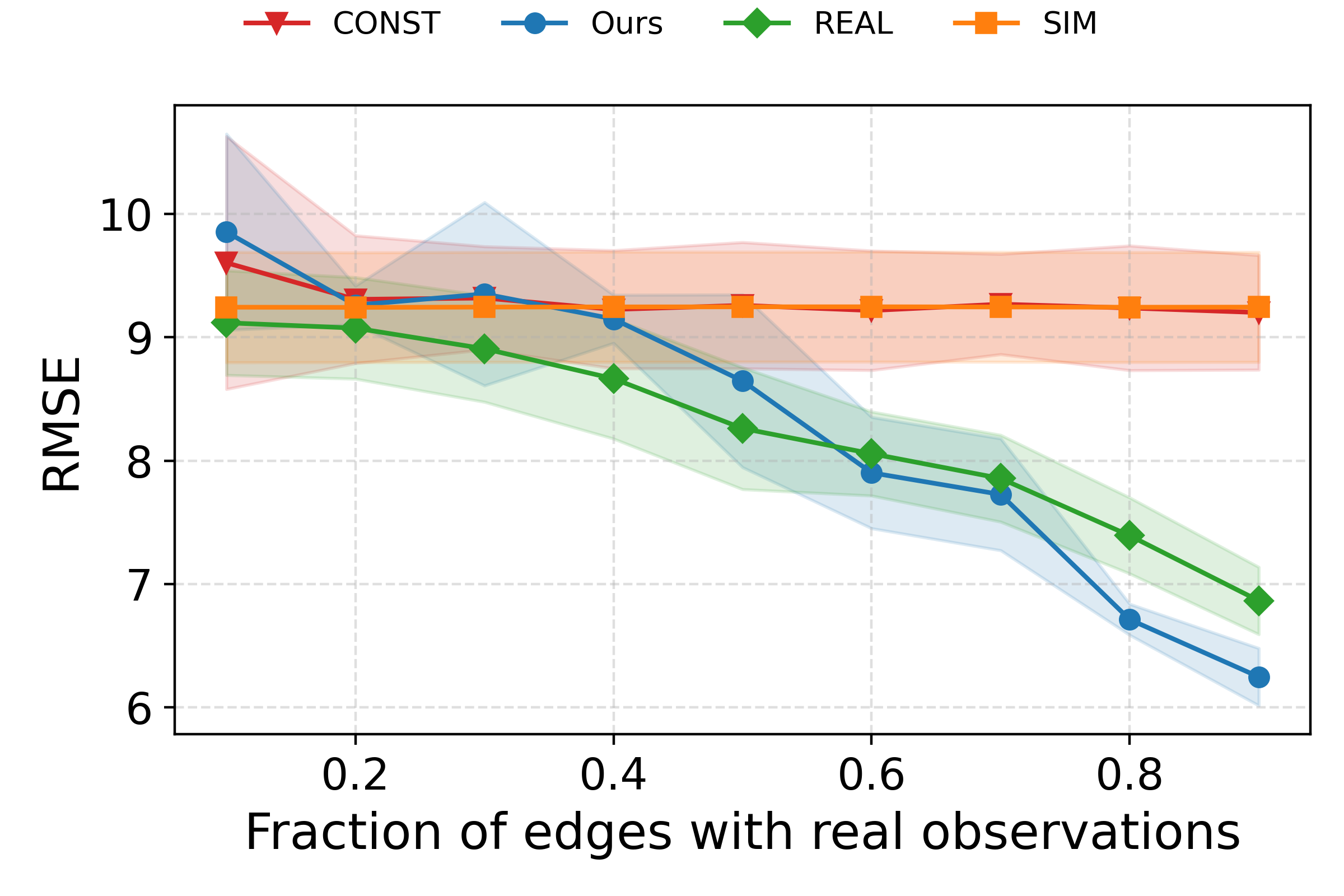}
        \caption{Irvine2}
    \end{subfigure}
    \caption{Effect of the number of real observations on the RMSE of $\hat\mu$ for the three \texttt{netzschleuder} graphs (left to right: Barcelona, Brasilia, Irvine2). The horizontal axis is the fraction of observable edges and the number of real samples per observable edge is fixed at $n_e = 20$, thus varying the observable fraction changes the total number of real samples. Results are averaged over multiple seeds (lower is better).}
    \label{fig:edge-real-obs}
\end{figure}

\begin{figure}[!htbp]
    \centering
    \begin{subfigure}{\textwidth}
        \centering
        \includegraphics[width=0.88\linewidth]{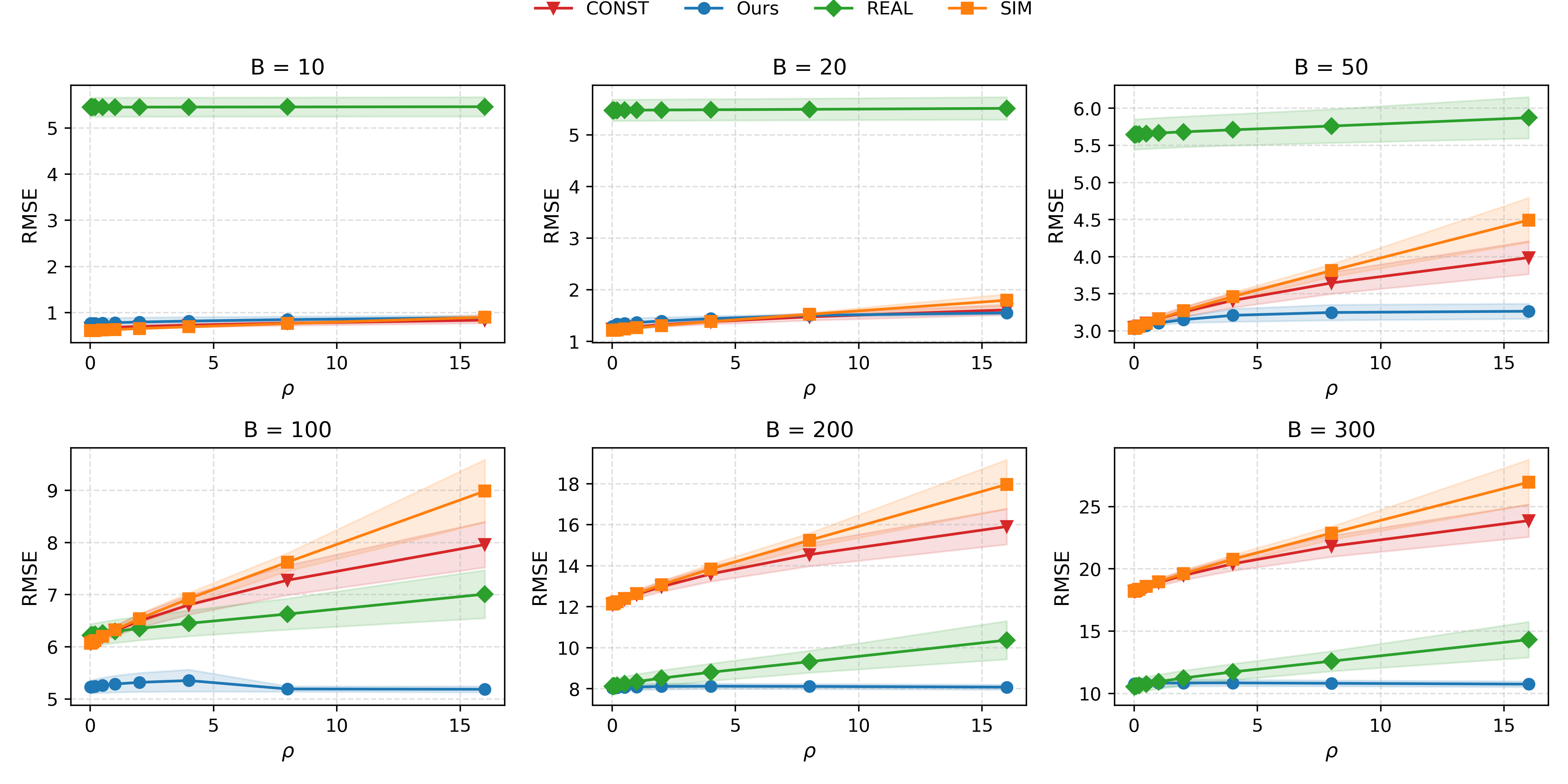}
        \caption{Barcelona}
    \end{subfigure}
    \medskip
    \begin{subfigure}{\textwidth}
        \centering
        \includegraphics[width=0.88\linewidth]{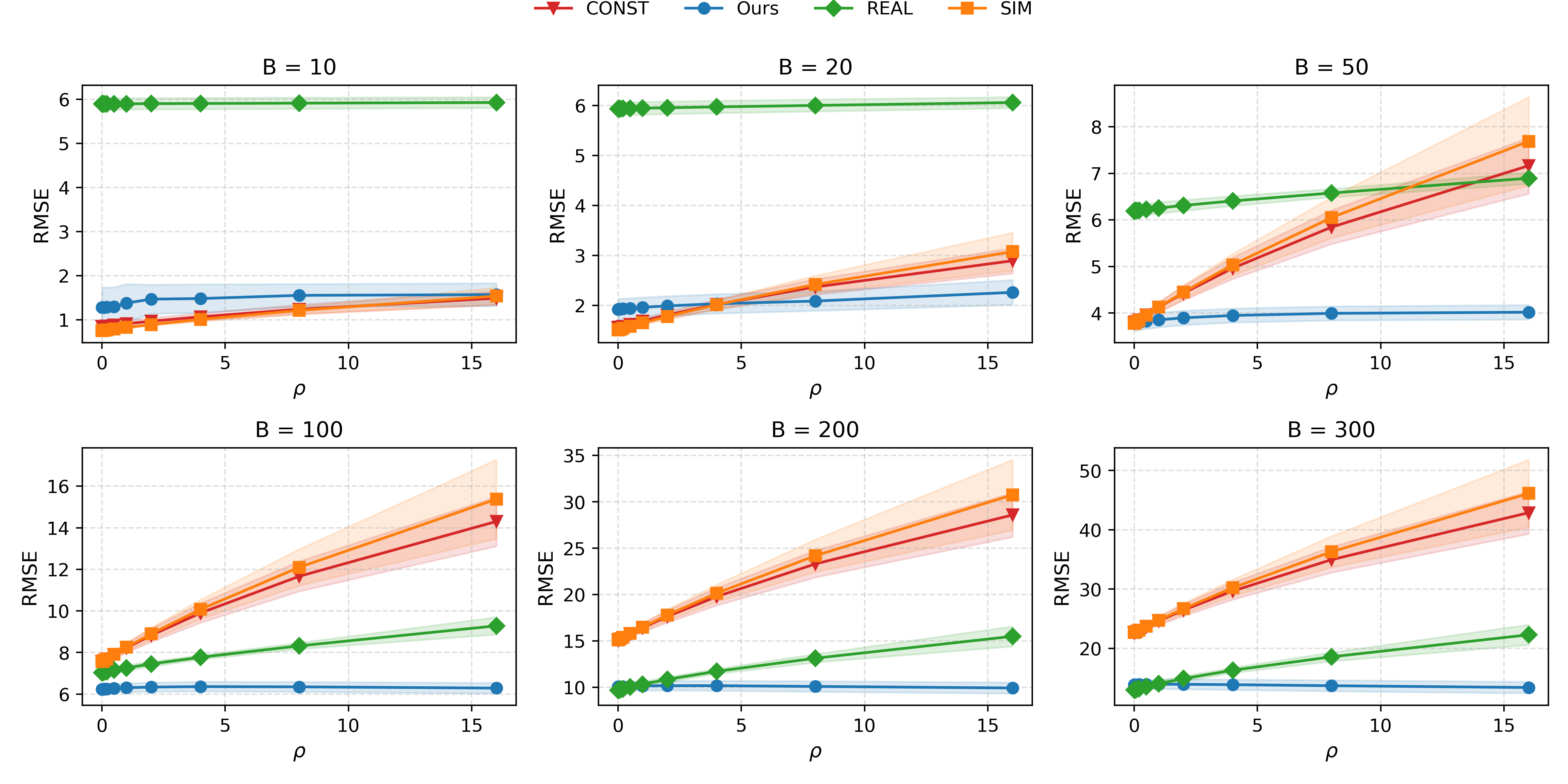}
        \caption{Brasilia}
    \end{subfigure}
    \medskip
    \begin{subfigure}{\textwidth}
        \centering
        \includegraphics[width=0.88\linewidth]{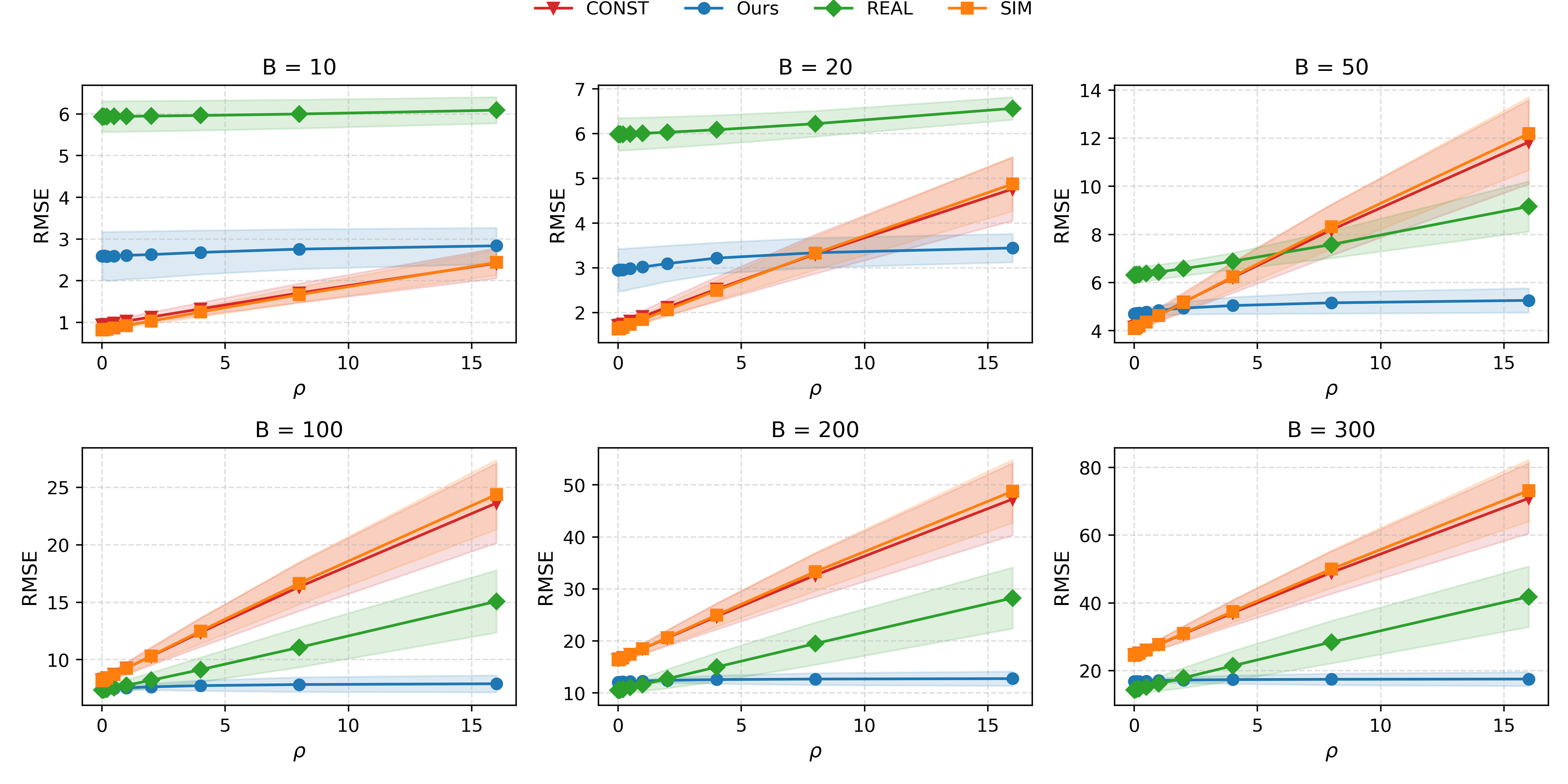}
        \caption{Irvine2}
    \end{subfigure}
    \caption{RMSE of $\hat\mu$ across simulator-bias smoothness and magnitude on the three \texttt{netzschleuder} graphs. Each panel sweeps the Laplacian seminorm target $\| b^\star\|_L$ (via $B$) and the smoothing parameter $\rho$ used to generate $b^\star$. Results are averaged over seeds (lower is better).}
    \label{fig:edge-smoothness}
\end{figure}

\begin{figure}[!htbp]
    \centering
    \begin{subfigure}[t]{0.32\textwidth}
        \centering
        \includegraphics[width=\linewidth]{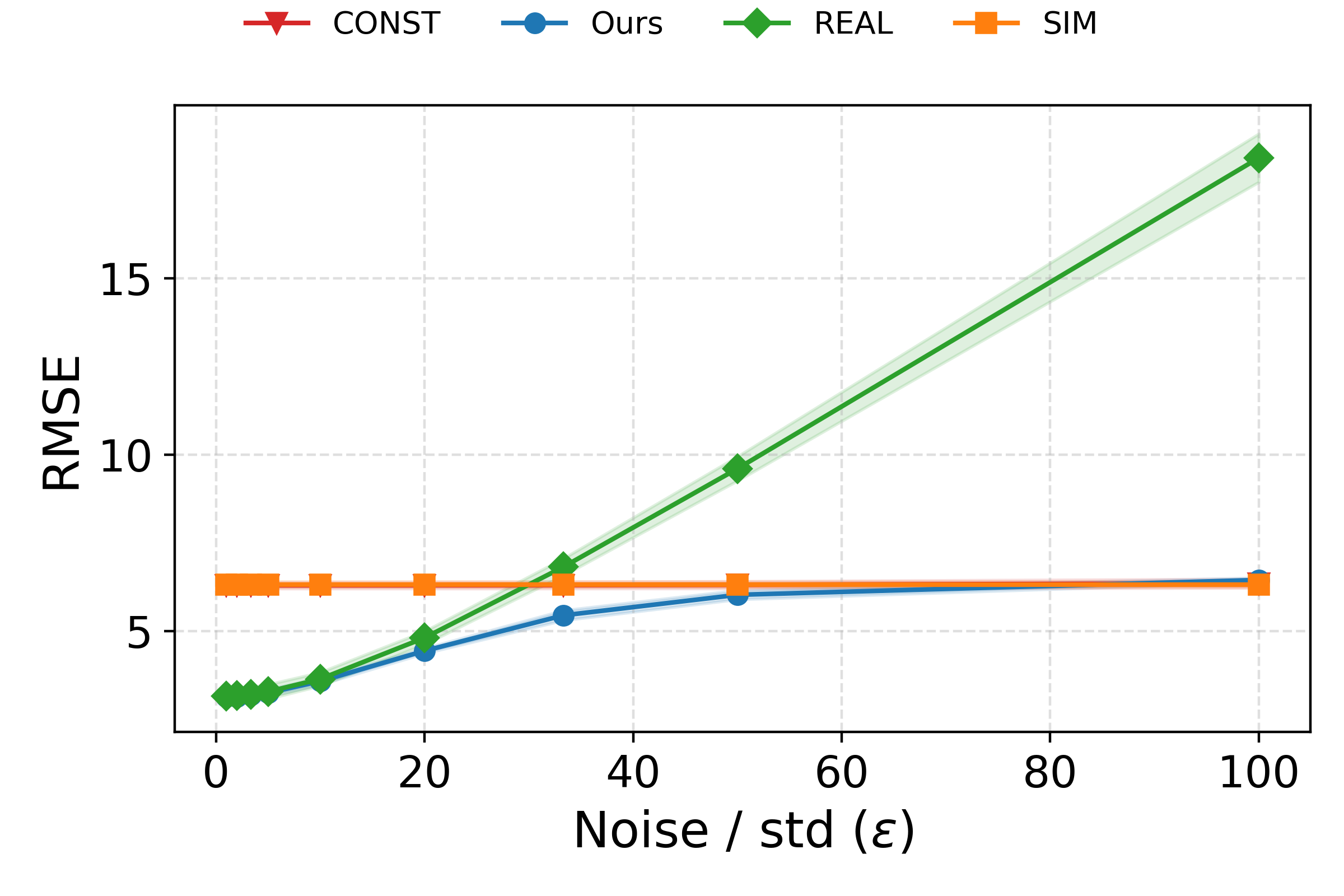}
        \caption{Barcelona}
    \end{subfigure}
    \hfill
    \begin{subfigure}[t]{0.32\textwidth}
        \centering
        \includegraphics[width=\linewidth]{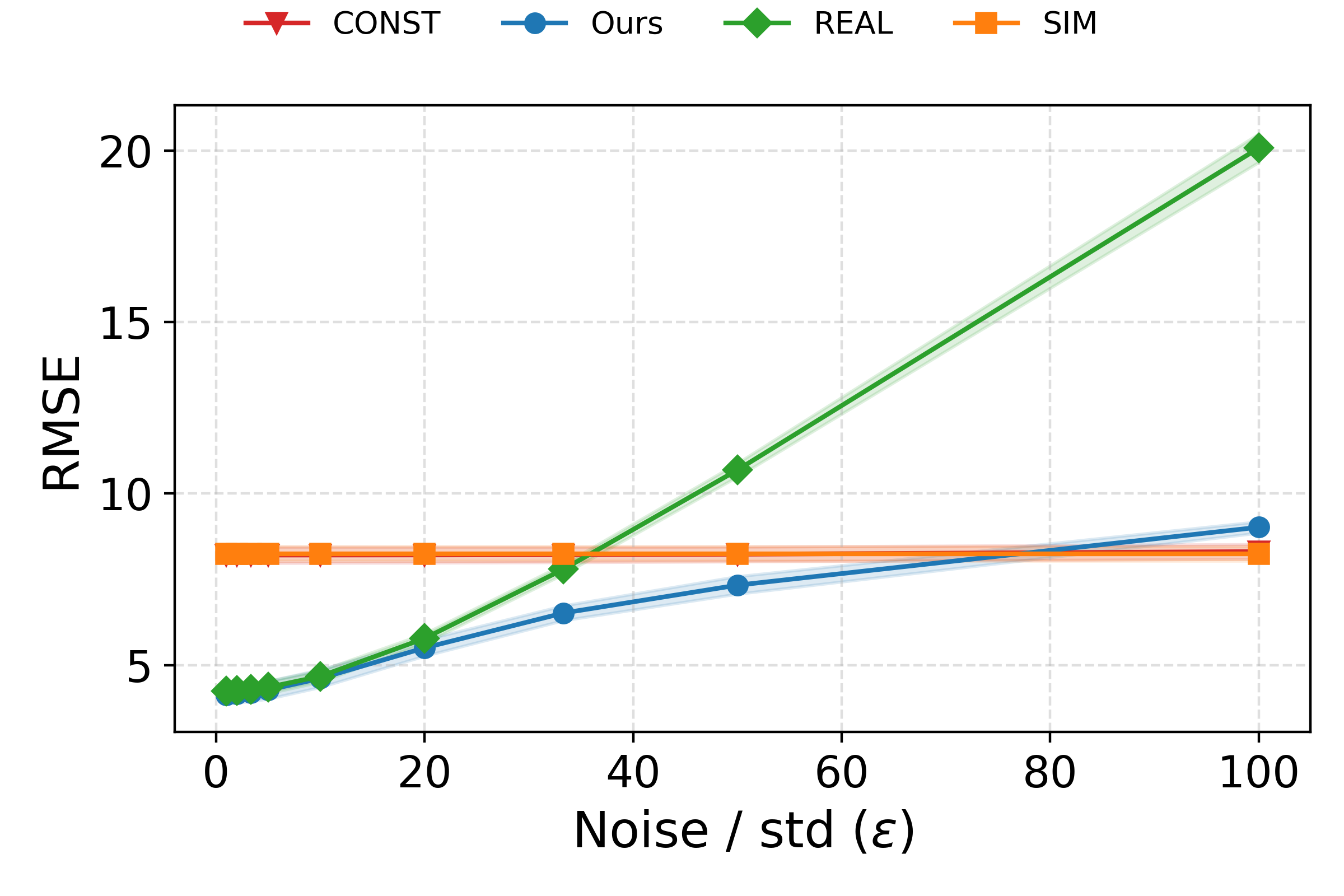}
        \caption{Brasilia}
    \end{subfigure}
    \hfill
    \begin{subfigure}[t]{0.32\textwidth}
        \centering
        \includegraphics[width=\linewidth]{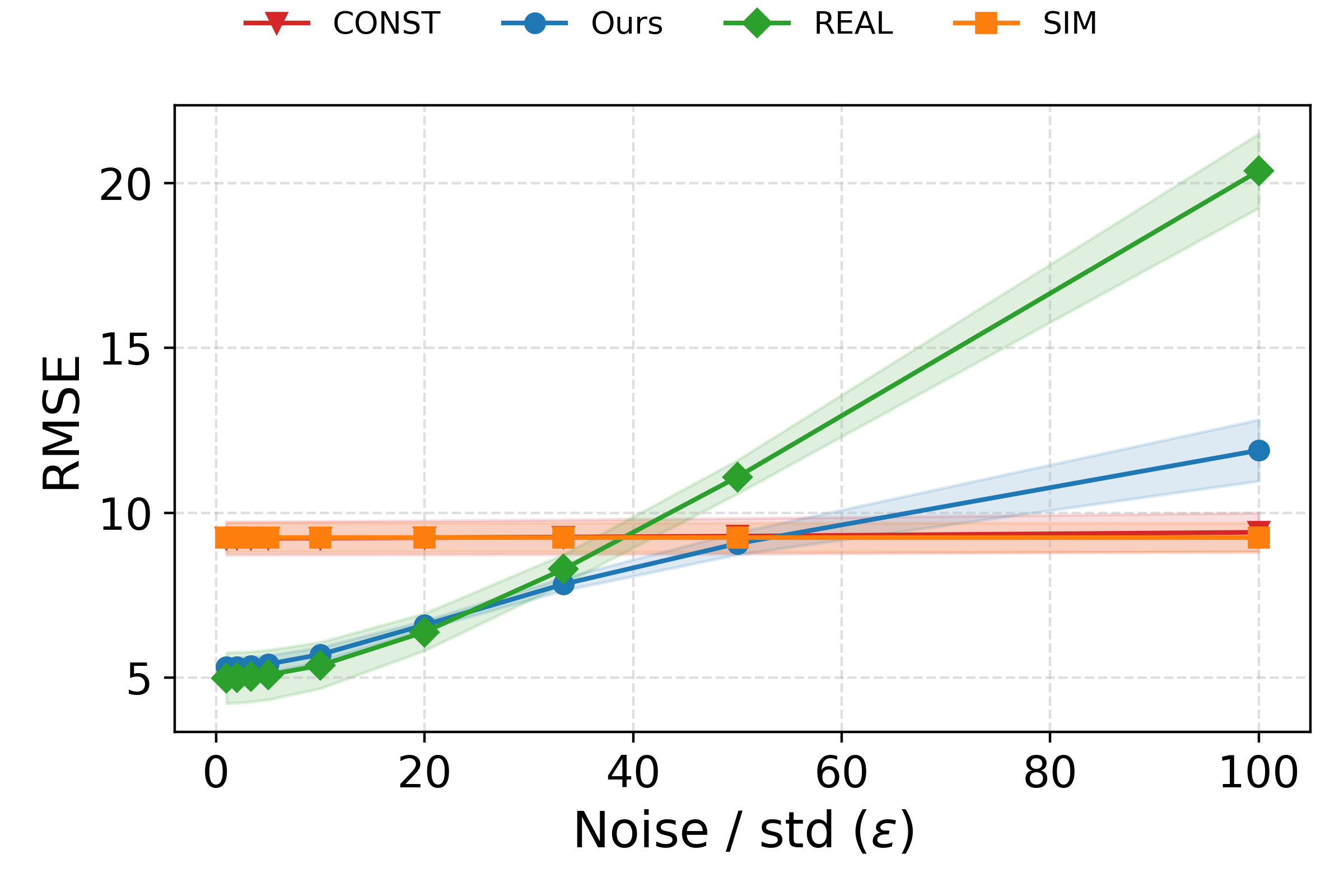}
        \caption{Irvine2}
    \end{subfigure}
    \caption{Effect of the real-observation noise level on the RMSE of the calibrated edge means $\hat\mu$ for the three \texttt{netzschleuder} graphs (left to right: Barcelona, Brasilia, Irvine2). Curves are averaged over multiple random seeds (lower is better). }
    \label{fig:edge-real-noise}
\end{figure}
\begin{figure}[!htbp]
    \centering
    \begin{subfigure}[t]{0.32\textwidth}
        \centering
        \includegraphics[width=\linewidth]{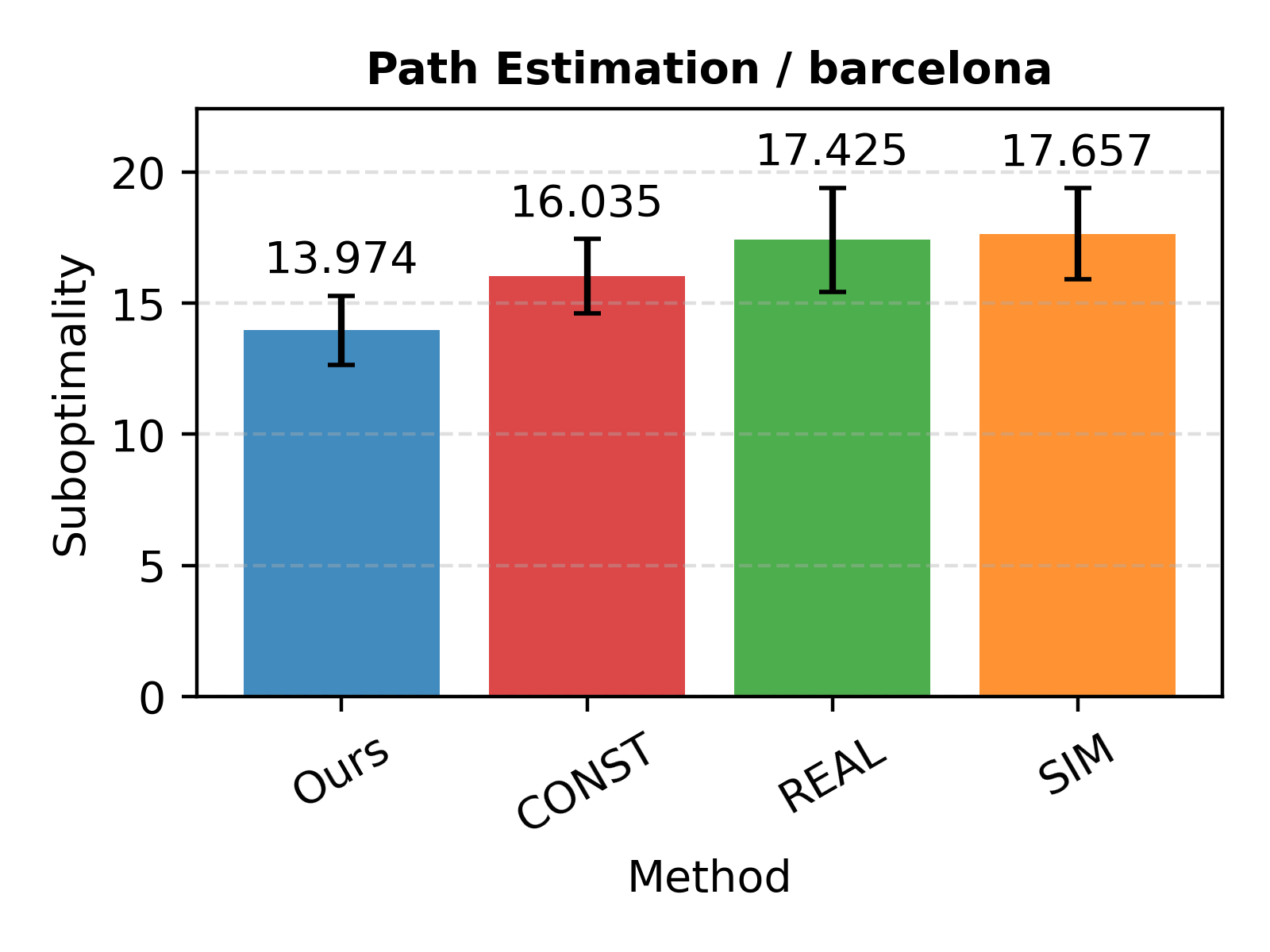}
        \caption{Barcelona}
    \end{subfigure}
    \hfill
    \begin{subfigure}[t]{0.32\textwidth}
        \centering
        \includegraphics[width=\linewidth]{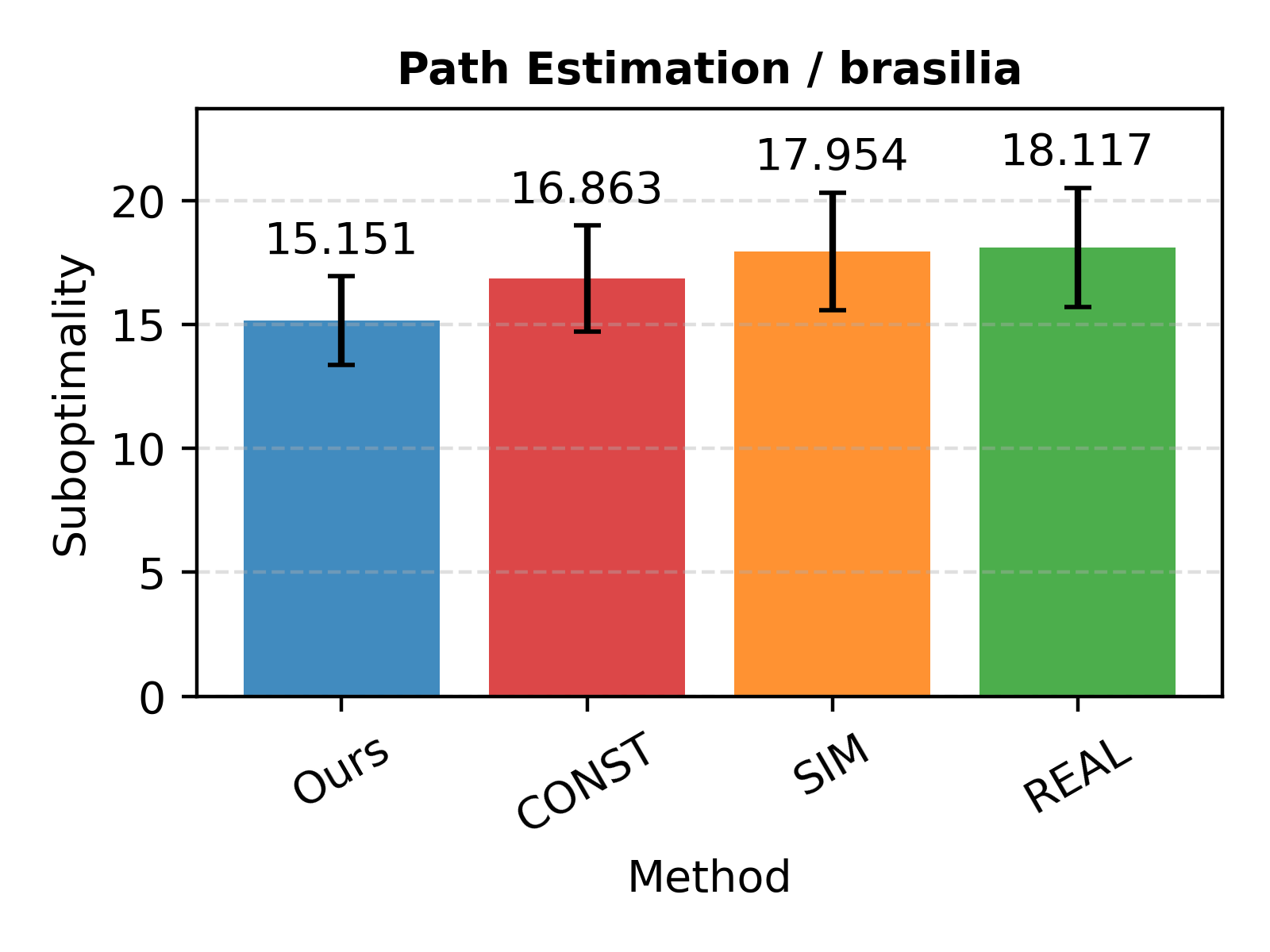}
        \caption{Brasilia}
    \end{subfigure}
    \hfill
    \begin{subfigure}[t]{0.32\textwidth}
        \centering
        \includegraphics[width=\linewidth]{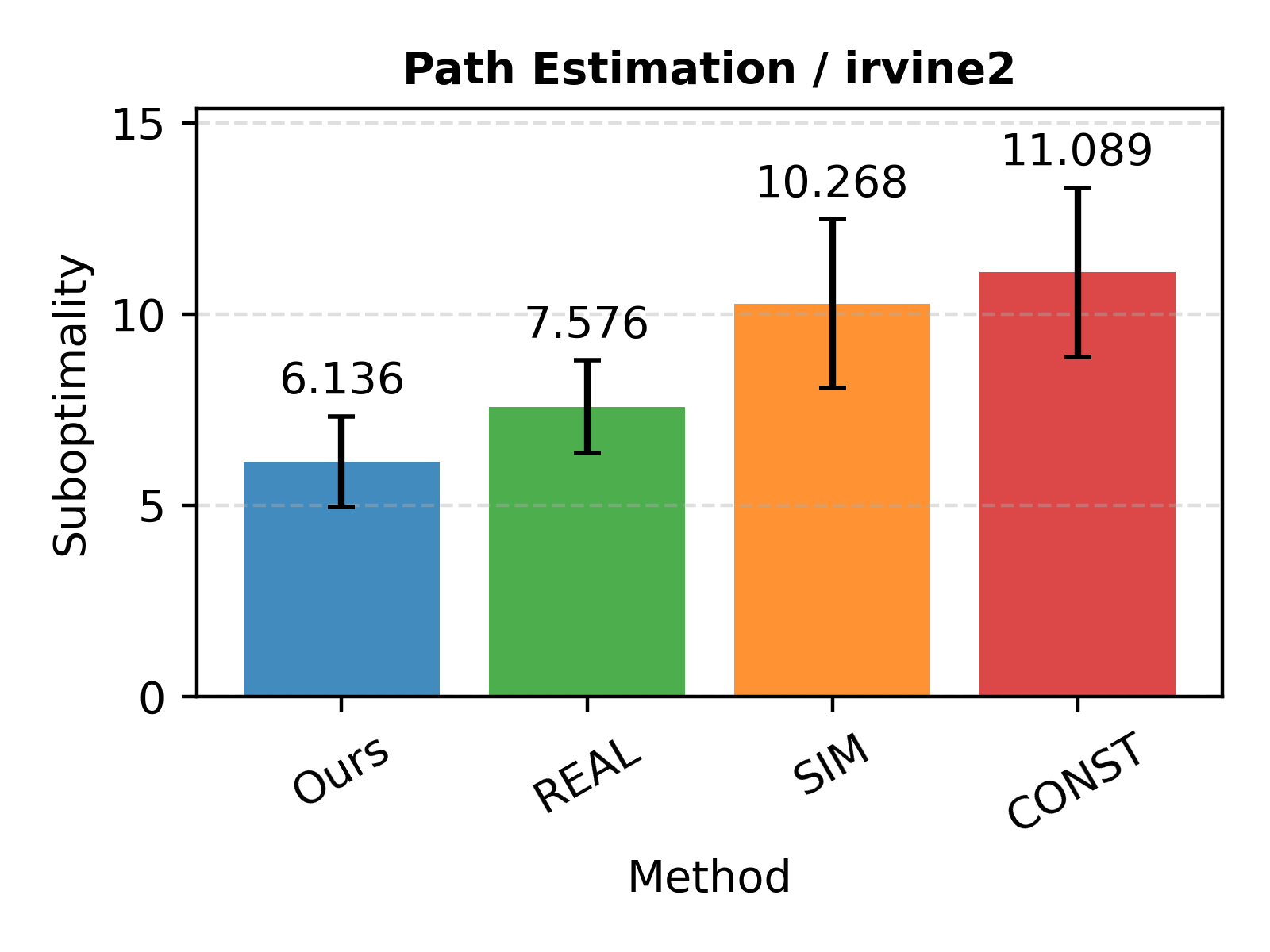}
        \caption{Irvine2}
    \end{subfigure}
    \caption{Path-level performance evaluation for the three \texttt{netzschleuder} graphs (left to right: Barcelona, Brasilia, Irvine2). Each random seed selects a new source-target pair to perform shortest path estimation. Curves are averaged over multiple random seeds (lower is better).}
    \label{fig:edge-pathwise}
\end{figure}

From Figures~\ref{fig:edge-real-obs}--\ref{fig:edge-pathwise}, we highlight several key observations:
\begin{itemize}
    \item \textbf{Value of structural regularization under limited coverage.} Figure~\ref{fig:edge-real-obs} shows that increasing the fraction of observable edges does not uniformly benefit REAL and CONST, since each edge still has few real samples. In contrast, our method propagates information across neighboring edges through the similarity graph and therefore continues to improve as either the coverage or the sample size increases. This desirable property also helps our method stand out in the path-level estimation, as illustrated by Figure~\ref{fig:edge-pathwise}.
     \item \textbf{Sensitivity of naive baselines to bias magnitude and smoothness.} Methods that rely exclusively on either synthetic data (SIM) or real data with a global shift (CONST), without any structural regularization, are highly sensitive to the magnitude and smoothness of the simulator bias as shown in Figure \ref{fig:edge-smoothness}. As the bias magnitude $B$ increases, the discrepancy between simulator and real costs grows, and synthetic-only predictions deteriorate markedly. When the bias is highly uneven, there exist edges with small bias where synthetic information remains reasonably accurate; however, as the bias becomes smoother (larger $\rho$), the simulator incurs a comparable shift on every edge, so synthetic information becomes systematically misleading. Our Laplacian-regularized estimator is much more robust to these changes.
      \item \textbf{Benefit of combining real and synthetic data under high noise.} When the real-data noise is large and the per-edge query budget is limited, purely real estimators (REAL) become unreliable, while purely simulator-based estimators (SIM) ignore valuable information from real observations. In these regimes, our method, which fuses real data, synthetic data, and structural information, achieves substantially lower RMSE. In particular, in the extremely noisy settings of Figure~\ref{fig:edge-real-noise}, the synthetic data act as a stable anchor, and our estimator leverages them to shrink estimation error rapidly as more edges become observable.
\end{itemize}

We next turn to the \texttt{METR-LA} and \texttt{PEMS-BAY} case study. Figure~\ref{fig:edge-baseline} reports baseline performance under the original data distribution described above. Figures~\ref{fig:edge-casestudy-ratio} and~\ref{fig:edge-casestudy-number} then perform a sensitivity analysis by varying, respectively, the fraction of observable edges and the number of real samples per observable edge. Figure~\ref{fig:edge-casestudy-path} also performs the path-level evaluation and computes the performance gap between the estimated and ground-truth shortest paths.

\begin{figure}[!htbp]
    \centering
    \begin{subfigure}{\textwidth}
        \centering
        \includegraphics[width=0.9\linewidth]{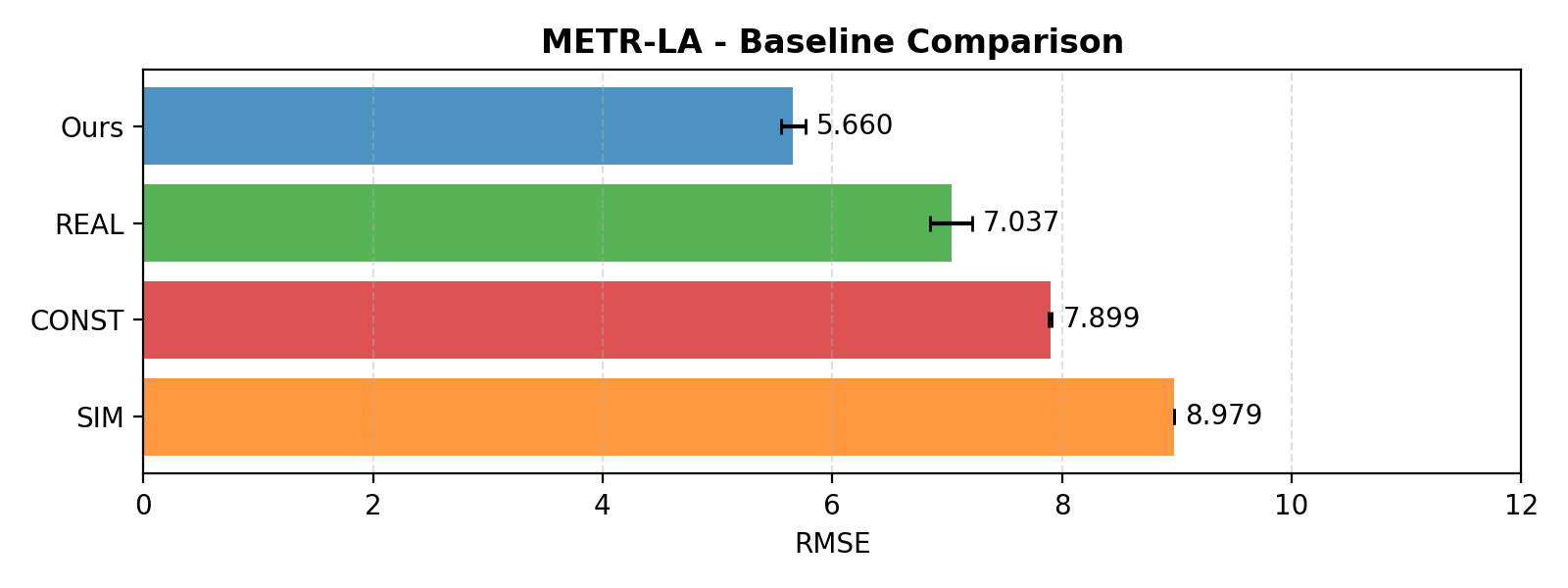}
        \caption{METR-LA}
    \end{subfigure}
    \medskip
    \begin{subfigure}{\textwidth}
        \centering
        \includegraphics[width=0.9\linewidth]{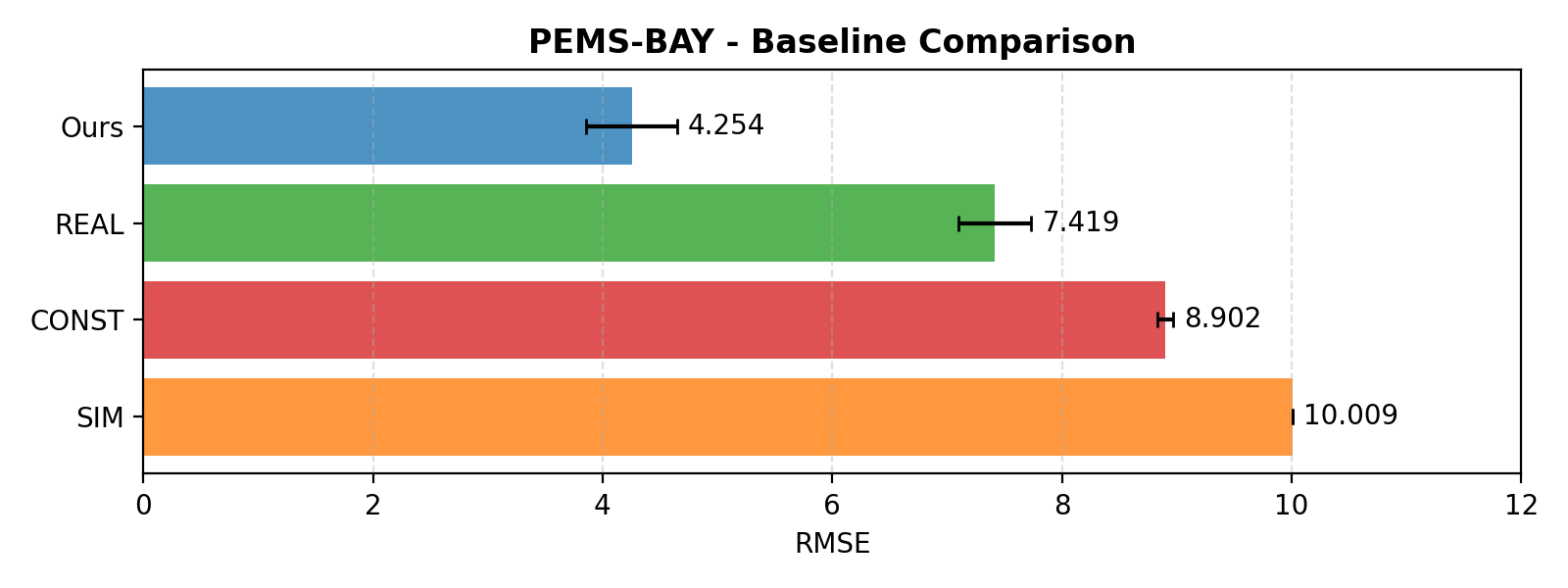}
        \caption{PEMS-BAY}
    \end{subfigure}
    \caption{Baseline comparison on METR-LA and PEMS-BAY under the original data distribution: RMSE of the calibrated edge means $\hat\mu$ for all methods, averaged over seeds (lower is better).}
    \label{fig:edge-baseline}
\end{figure}

\begin{figure}[!htbp]
    \centering
    \begin{subfigure}{0.49\textwidth}
        \centering
        \includegraphics[width=\linewidth]{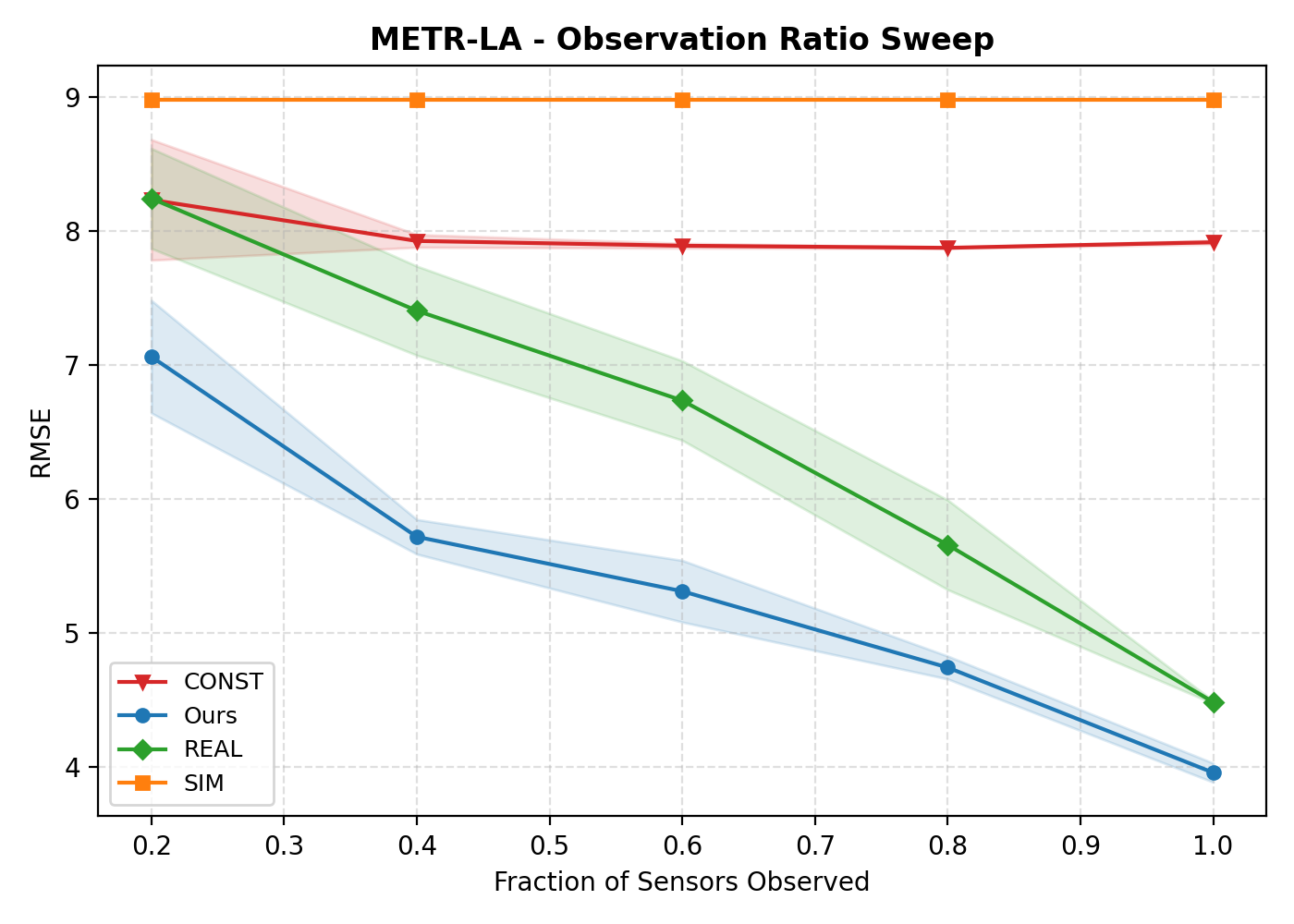}
        \caption{METR-LA}
    \end{subfigure}
    \begin{subfigure}{0.49\textwidth}
        \centering
        \includegraphics[width=\linewidth]{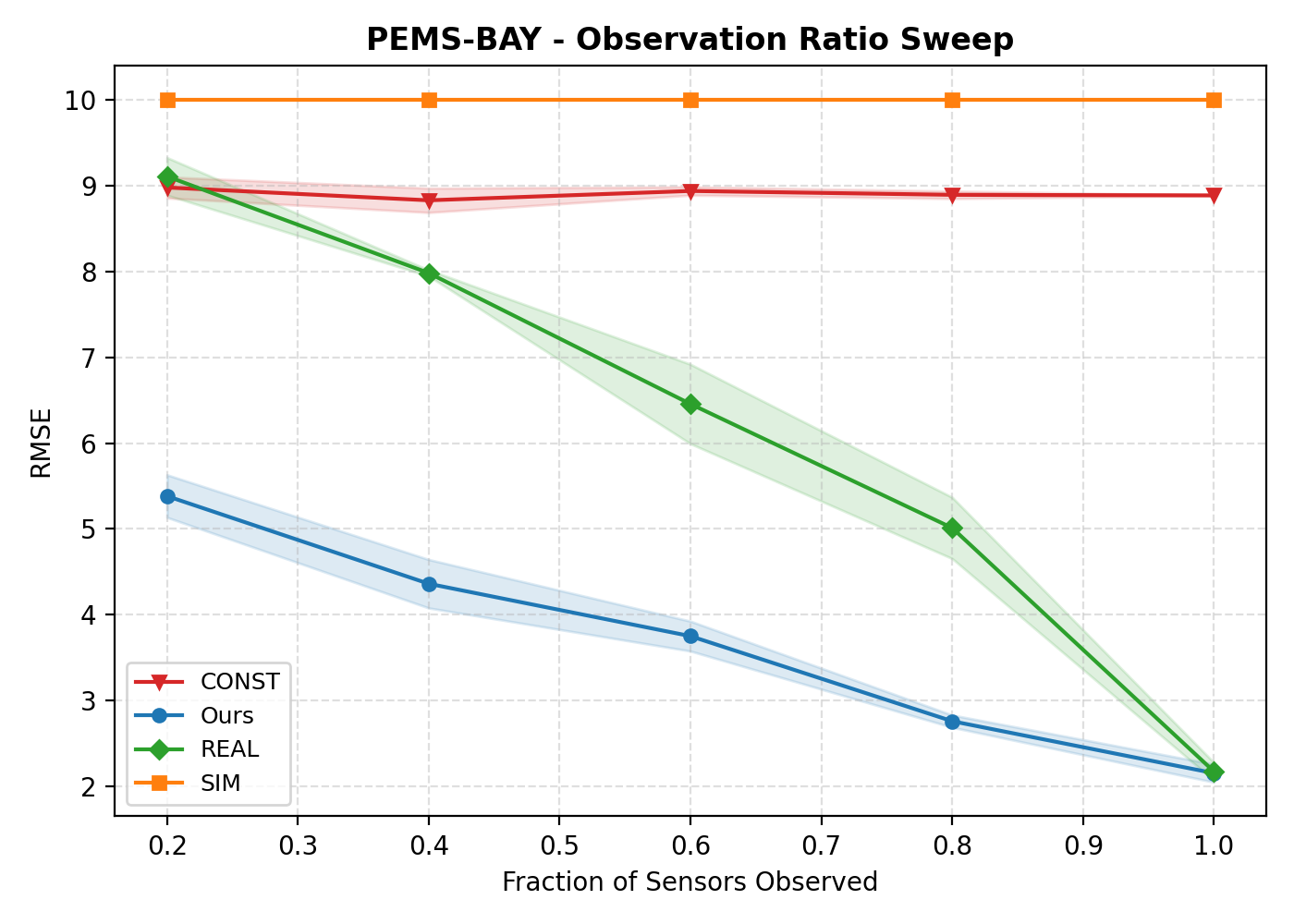}
        \caption{PEMS-BAY}
    \end{subfigure}
    \caption{Effect of the fraction of edges with real observations on RMSE of $\hat\mu$ for METR-LA and PEMS-BAY. The horizontal axis is the fraction of observable edges; the number of real samples per observable edge is fixed at $n_e = 20$. Results are averaged over seeds (lower is better).}
    \label{fig:edge-casestudy-ratio}
\end{figure}

\begin{figure}[!htbp]
    \centering
    \begin{subfigure}{\textwidth}
        \centering
        \includegraphics[width=0.8\linewidth]{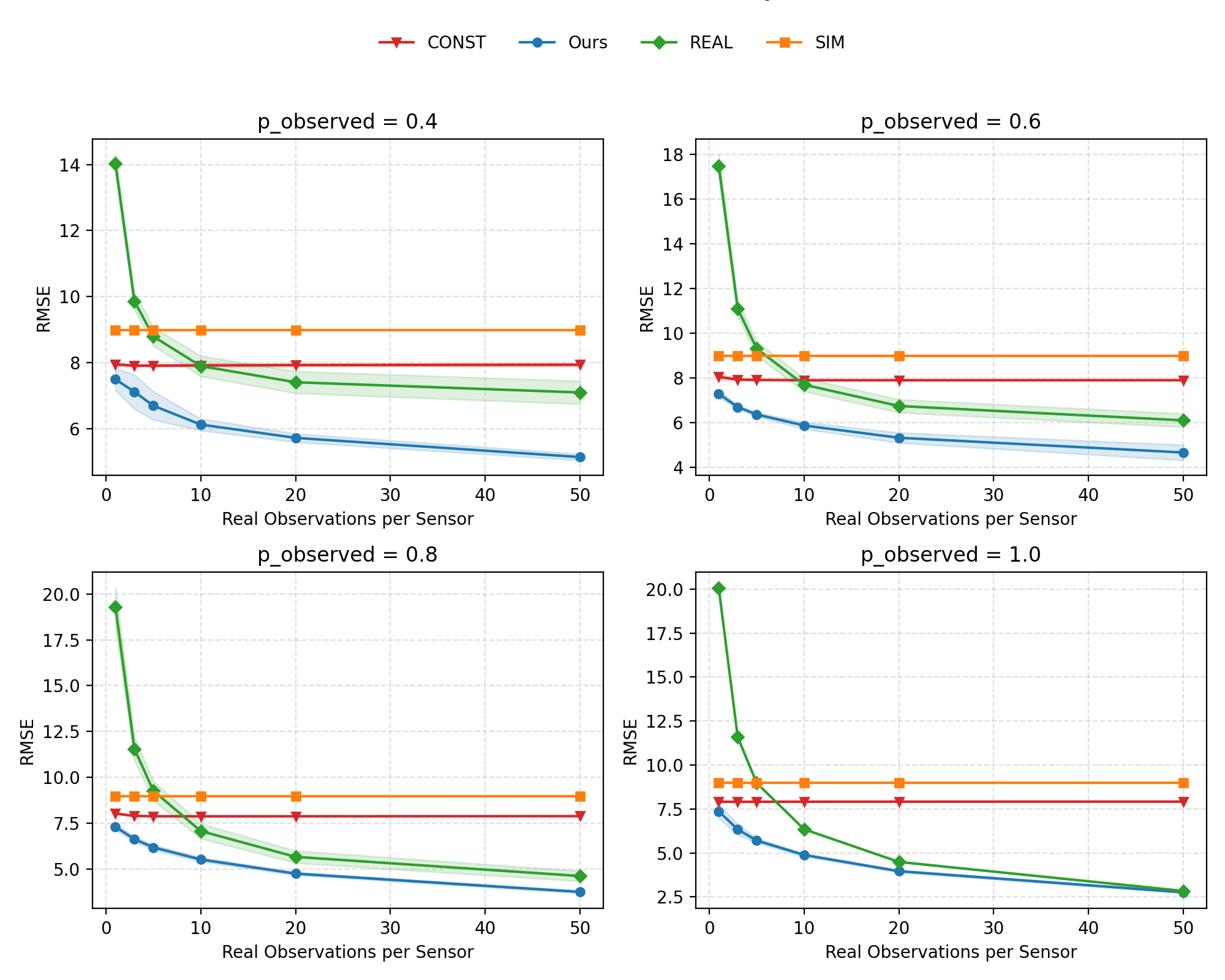}
        \caption{METR-LA}
    \end{subfigure}
    \medskip
    \begin{subfigure}{\textwidth}
        \centering
        \includegraphics[width=0.8\linewidth]{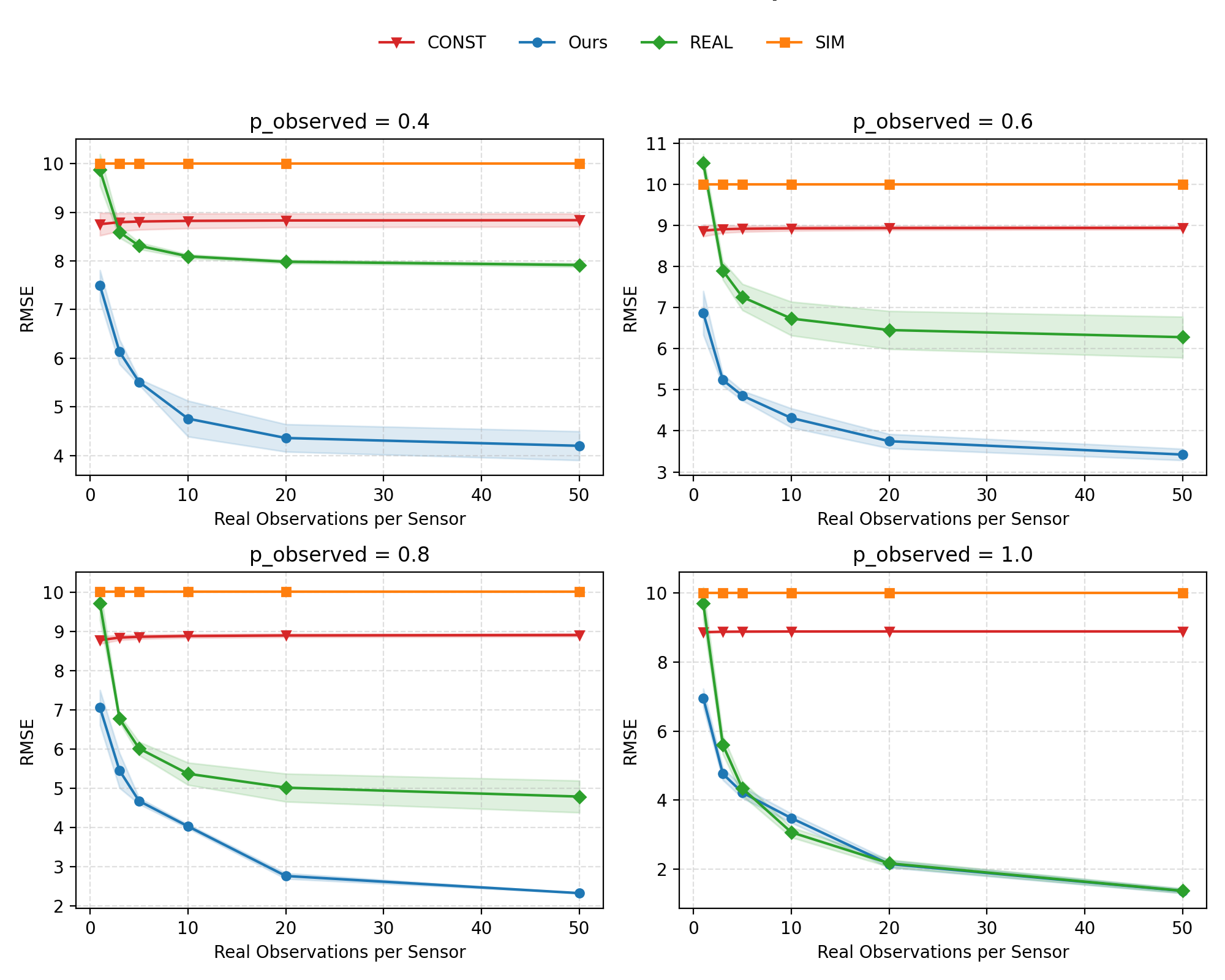}
        \caption{PEMS-BAY}
    \end{subfigure}
    \caption{Effect of the number of real samples per observable edge on RMSE of $\hat\mu$ for METR-LA and PEMS-BAY, where we use $p_\text{observed}$ to indicate the fraction of observable edges (e.g., $p_\text{observed}=0.4$ means $40\%$ edges have real samples, while others have none). Each curve corresponds to a different fraction of observable edges. Results are averaged over seeds (lower is better).}
    \label{fig:edge-casestudy-number}
\end{figure}

\begin{figure}[!htbp]
    \centering
    \begin{subfigure}{0.45\textwidth}
        \centering
        \includegraphics[width=\linewidth]{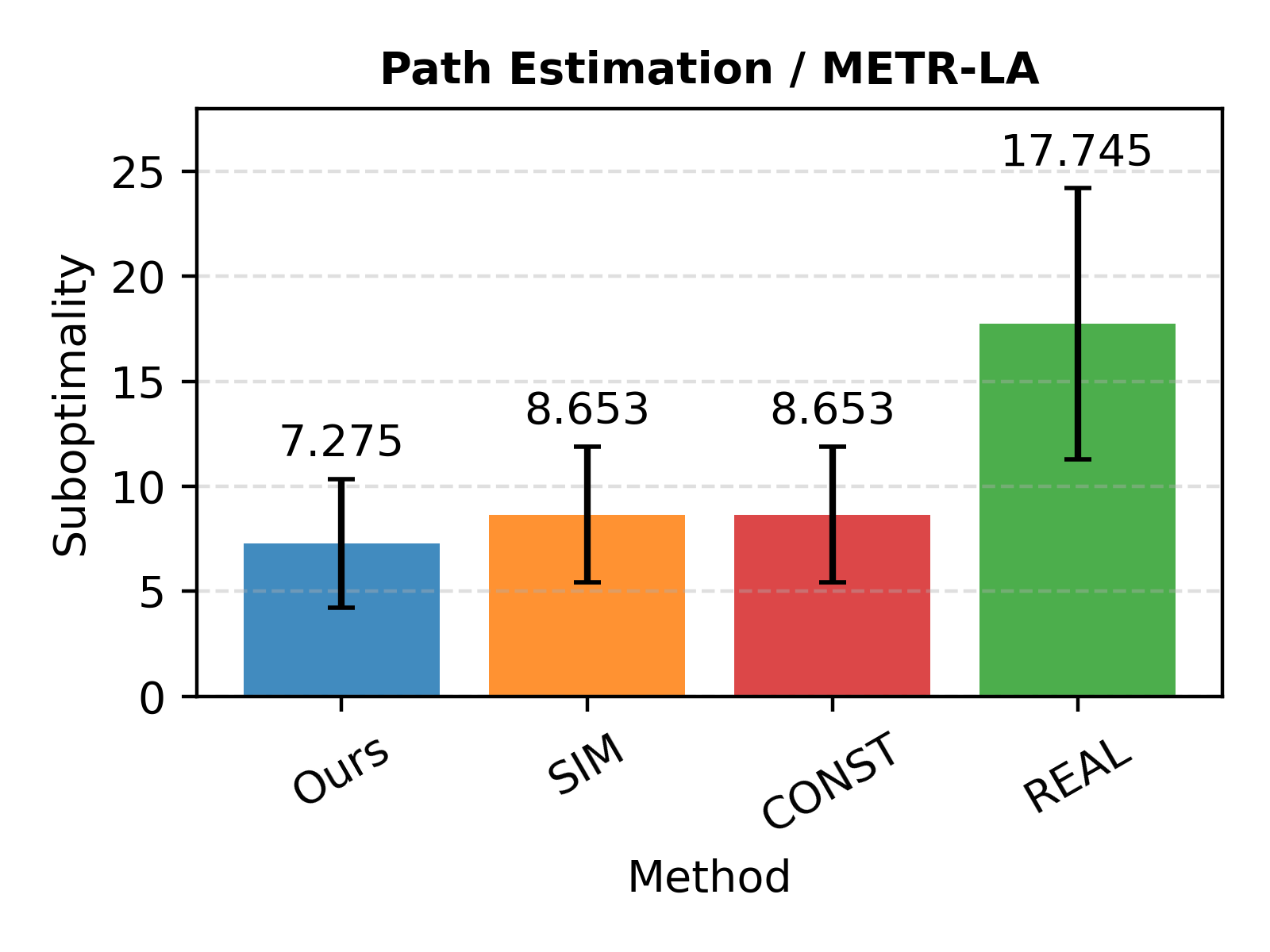}
        \caption{METR-LA}
    \end{subfigure}
    \begin{subfigure}{0.45\textwidth}
        \centering
        \includegraphics[width=\linewidth]{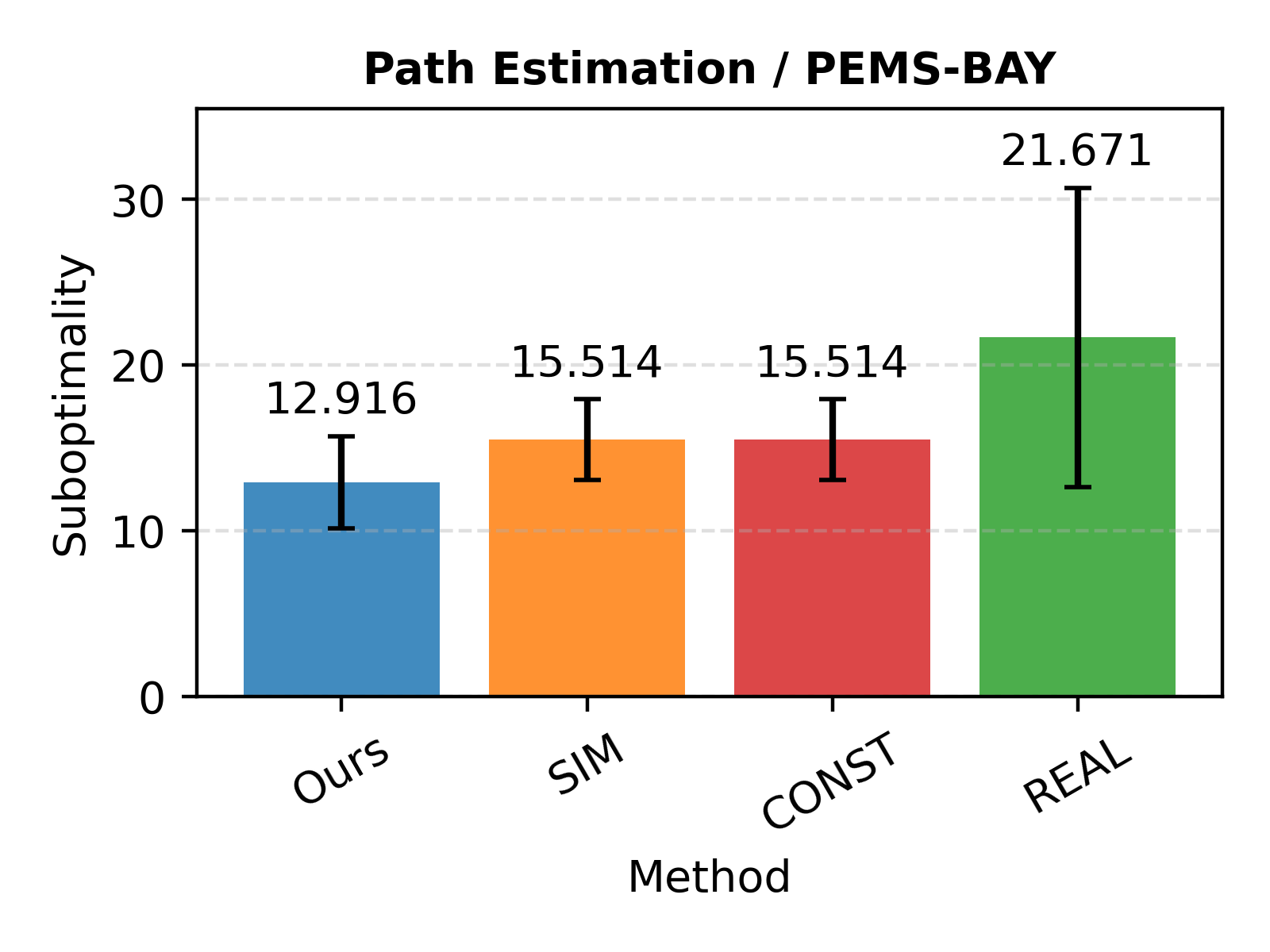}
        \caption{PEMS-BAY}
    \end{subfigure}
    \caption{Path-level performance evaluation for the METR-LA and PEMS-BAY under the original data distribution, averaged over seeds (lower is better).}
    \label{fig:edge-casestudy-path}
\end{figure}

The case study reinforces the conclusions from the synthetic graphs. Under the original data distribution (Figure~\ref{fig:edge-baseline}) where the smoothness constraint is implicitly determined by the dataset, our method achieves the lowest RMSE on both networks. In Figures~\ref{fig:edge-casestudy-ratio} and~\ref{fig:edge-casestudy-number}, our estimator smoothly interpolates between the synthetic and real regimes: when only a small fraction of edges has real data, it leans more heavily on the simulator while still borrowing structural information to outperform SIM, REAL, and CONST; as the quantity and coverage of real observations increase, it converges toward the real-only solution and can even improve upon REAL by regularizing noisy edges through the similarity graph. And the advantages in estimating single edges are further reflected in Figure~\ref{fig:edge-casestudy-path}, where our method achieves an overall better performance.

\subsection{Results: Shortest Path Estimation}

We then evaluate the active estimated shortest path algorithm \textsc{A-ESP} (Algorithm~\ref{alg:a-esp}) on METR-LA and PEMS-BAY. For each random seed, we sample a source-target pair such that there are at least ten distinct $v_\text{src}$--$v_\text{sink}$ paths in the underlying graph. All experiments start from the same initialization described above (one initial real query on each edge of $P^{\mathrm{sim}}$). Figure~\ref{fig:easp-path-cost} shows how the estimated cost of the current best path evolves as the number of real queries increases for a representative source-target pair. 
Figure~\ref{fig:easp-num-query} summarizes the average number of real queries required to recover the optimal path.  Unless otherwise stated, we always conduct 5 independent runs. The reported numbers are the means over the conducted runs with shaded areas indicating the standard deviation.

The results highlight the efficiency and reliability of \textsc{A-ESP}. Figure~\ref{fig:easp-path-cost} shows that, for a fixed query budget, \textsc{A-ESP} drives down the estimated path cost more quickly than the Random baseline by repeatedly querying the currently most uncertain edge. 
Consistent with this picture, Figure~\ref{fig:easp-num-query} demonstrates that, across confidence levels, \textsc{A-ESP} requires substantially fewer real queries than the Random baseline to achieve the same probability of identifying the optimal path.

\begin{figure}[!htbp]
    \centering
    \includegraphics[width=\linewidth]{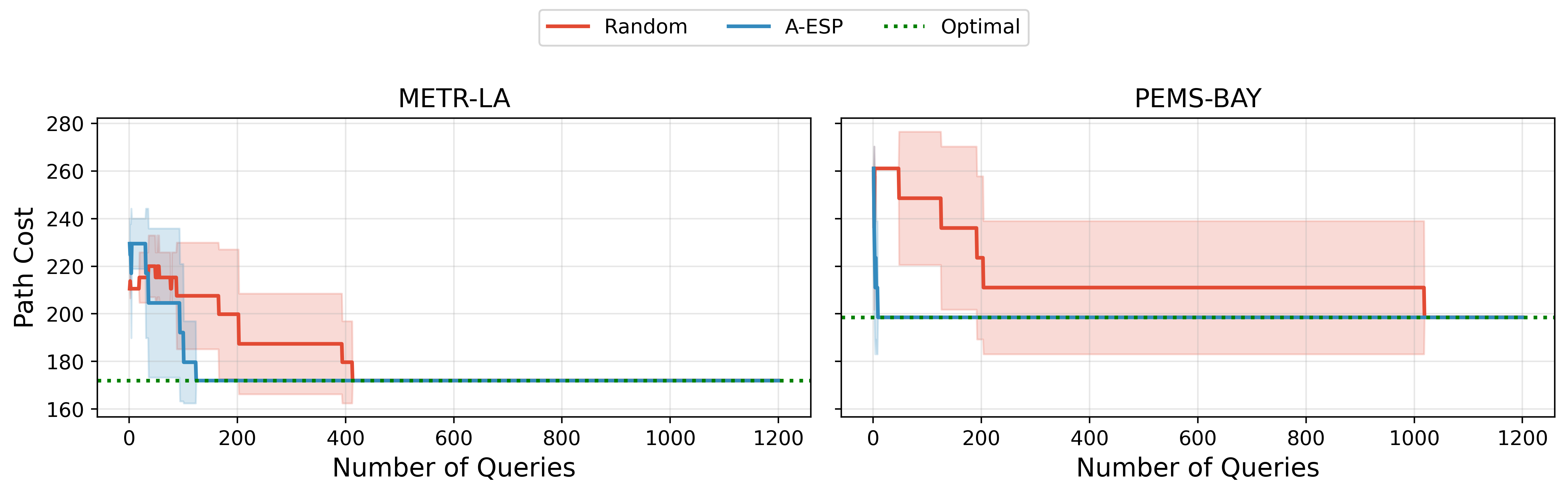}
    \caption{Evolution of the estimated shortest-path cost as the number of real queries increases, comparing \textsc{A-ESP} with the Random baseline on METR-LA and PEMS-BAY. Curves are averaged over multiple runs (lower is better).}
    \label{fig:easp-path-cost}
\end{figure}


\begin{figure}[!htbp]
    \centering
    \includegraphics[width=\linewidth]{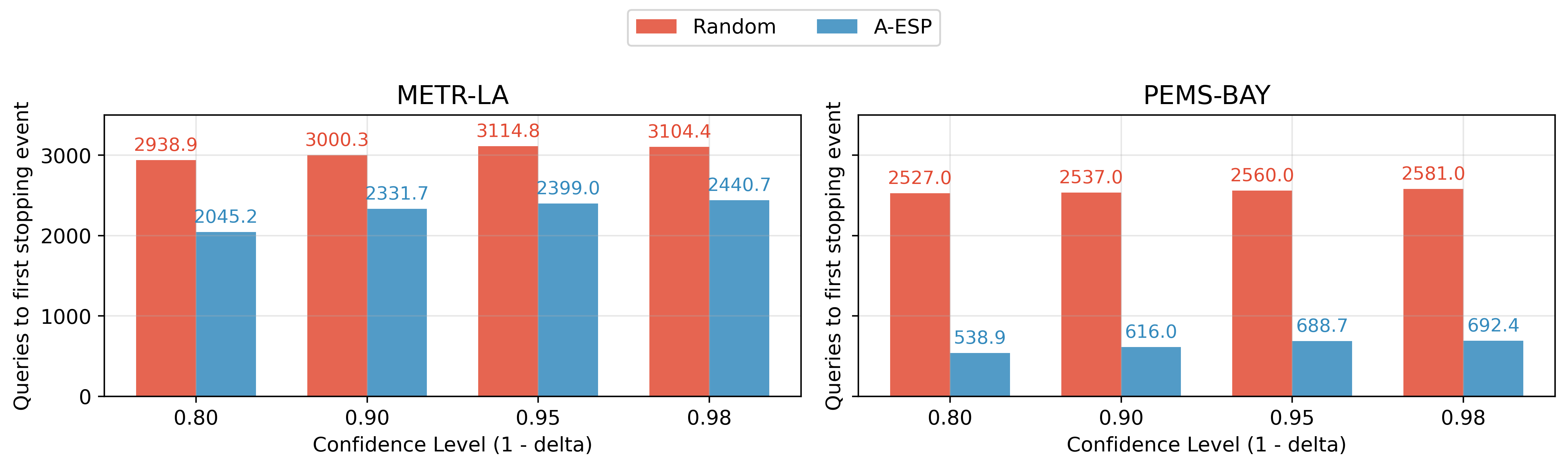}
    \caption{Average number of real queries required to recover the true shortest path with a given confidence level, comparing \textsc{A-ESP} with the Random baseline on METR-LA and PEMS-BAY.}
    \label{fig:easp-num-query}
\end{figure}

\section{Conclusion}

In this paper, we study a stochastic shortest-path problem in which the decision maker has access to a biased but inexpensive simulator and a limited number of costly real measurements. We model the simulator--real discrepancy as a smooth signal over an edge-similarity graph and develop a Laplacian-regularized estimator that calibrates synthetic costs with real observations. Our analysis provides finite-sample guarantees for edgewise estimation and path suboptimality, explicitly characterizing how performance depends on the smoothness and magnitude of the bias, the amount and quality of real data, and structural properties of the underlying network. Building on these estimators, we propose the Active Estimated Shortest Path algorithm, which uses pathwise confidence bounds to adaptively select edges to query and to certify an optimal (or near-optimal) path under the true costs, together with sample-complexity guarantees on the number of real measurements required.

We complement the theoretical results with numerical experiments on both synthetic graphs and real urban traffic networks. Across a range of noise levels and bias patterns, the proposed methods consistently outperform natural baselines that rely solely on real observations, solely on synthetic data, or on simple global calibration, especially when real data are sparse or highly noisy. The active learning procedure substantially reduces the number of real queries needed to identify high-quality routes compared to naive sampling rules, while adapting to heterogeneous edge uncertainties and varying simulator bias. 

Several directions for future research remain. One is to extend our framework to dynamic or time-varying networks, where both real and synthetic costs evolve over time. Another is to incorporate richer side information and learned similarity structures into the bias-regularization graph, and to study robust variants that relax the smoothness assumptions or account for misspecification of the simulator. More broadly, we believe the ideas of using biased simulators as structured priors and combining them with targeted real measurements can be applied to other network design and control problems beyond shortest-path routing.

\newpage
\bibliographystyle{plainnat}
\bibliography{main}
\appendix
\section{Data-driven tuning for $\lambda$}
\label{appx:lambda_tuning}
Here we provide two simple data‑driven rules in practice.
\begin{enumerate}
\item \emph{K‑fold cross‑validation over edges with \(n_e>0\).}
Partition \(\{e:\,n_e>0\}\) into \(K\) folds.
For a candidate \(\lambda\), fit \(\hat b\) by solving \eqref{eqn:b_hat} after \emph{removing} the fidelity terms for one fold at a time (i.e., set \(w_e=0\) for \(e\) in the held‑out fold), and score that fold by
\[
\sum_{e\ \text{in fold}} w_e \,\bigl(\hat b_e - (\bar c_e-\bar c'_e)\bigr)^2 .
\]
Average the score over folds and pick the \(\lambda\) with the smallest average. (A logarithmic grid over \(\lambda\) is convenient.)

\item \emph{Discrepancy principle.}
If the noise level is roughly known from replicates, choose the smallest \(\lambda\) such that the in‑sample discrepancy
\[
\sum_{e:\,n_e>0} w_e \,\bigl(\hat b_e - (\bar c_e-\bar c'_e)\bigr)^2
\]
is on the order of the total expected noise (approximately the number of edges with \(n_e>0\) when the weights \(w_e\) are inverse‑variance weights). This guards against overfitting the noise.
\end{enumerate}

Other techniques, such as Stein’s Unbiased Risk Estimate (SURE) \citep{jorion1986bayes,stein1986lectures} or Generalized Cross Validation (GCV) \citep{golub1979generalized}, can also be adopted by the reader's discretion. 
\section{Proofs}
\label{appx:proofs}

\subsection{Proof of Lemma \ref{lem:uniqueM}}
\begin{proof}
Write the objective as
\[
F(b)\;=\;(b-y)^\top M(b-y) + \lambda\, b^\top L b,
\qquad b\in\mathbb{R}^{|E|}.
\]
Since $M=\mathrm{diag}(w_e)\succeq 0$ and $L$ is a (combinatorial) graph Laplacian, $L\succeq 0$ and
\[
\nabla^2 F(b)\;=\;2(M+\lambda L)\;\succeq\;0.
\]
To show strict convexity (hence uniqueness), it suffices to prove $M+\lambda L\succ 0$. Suppose $v\neq 0$ satisfies $v^\top(M+\lambda L)v=0$. Then necessarily $v^\top M v=0$ and $v^\top L v=0$. The first equality gives $v_e=0$ for every edge $e$ with $w_e>0$; the second gives (using the identity $x^\top L x=\frac12\sum_{e,e'}W_{e,e'}(x_e-x_{e'})^2$) that $v$ is constant on every connected component of the edge-edge graph induced by $W$. If, on each connected component, at least one edge has $w_e>0$ (which holds, e.g., when the component containing all source-to-sink edges is connected and at least one such edge is observed), the two conditions force $v=0$, a contradiction. Hence $M+\lambda L\succ 0$, and $F$ is strictly convex.

The unique stationary point solves
\[
\nabla F(\hat b)=2M(\hat b-y)+2\lambda L\hat b \;=\;0
\quad\Longleftrightarrow\quad
(M+\lambda L)\hat b \;=\; M y,
\]
and, since $M+\lambda L$ is invertible,
\[
\hat b \;=\; (M+\lambda L)^{-1}M\,y.
\]
\end{proof}

\subsection{Proof of Theorem \ref{thm:edgewise}}
\begin{proof}
By Lemma~\ref{lem:uniqueM}, recalling the notation
\[
H \;=\; M^{-1/2} L M^{-1/2}\succeq 0,
\qquad
S_\lambda \;=\; (I+\lambda H)^{-1},
\]
we can rewrite the estimator as
\[
\hat b \;=\; (M+\lambda L)^{-1}M\,y \;=\; M^{-1/2}S_\lambda M^{1/2} y.
\]
Decompose the empirical bias vector as $y=b^\star+\xi$, where $\xi$ is the centered noise defined in Assumption~\ref{ass:noiseM}. Then
\begin{align*}
\hat b - b^\star
&= M^{-1/2}S_\lambda M^{1/2}\xi
     \;-\; \lambda\,M^{-1/2}S_\lambda H M^{1/2} b^\star .
\end{align*}
Fix any $e\in E^+$, and denote by $e_e$ the $e$-th standard basis vector. Taking the $e$-th coordinate and applying the triangle and Cauchy-Schwarz inequalities,
\begin{align*}
\bigl| \hat b_e - b^\star_e \bigr|
&\le
\underbrace{\lambda\,\bigl\|H^{1/2}S_\lambda M^{-1/2} e_e\bigr\|_2
            \cdot \bigl\|H^{1/2}M^{1/2} b^\star\bigr\|_2}_{\text{(bias)}}
\;+\;
\underbrace{\bigl| \langle S_\lambda M^{-1/2} e_e,\; M^{1/2}\xi \rangle \bigr|}_{\text{(variance)}} .
\end{align*}
\paragraph{Bias term.}
Since $H\succeq 0$ has nonnegative eigenvalues $\{\eta_j\}_{j}$, the operator norm of $H^{1/2}S_\lambda$ equals
\[
\|H^{1/2}S_\lambda\|_{\mathrm{op}}
=\max_{j}\frac{\sqrt{\eta_j}}{1+\lambda\eta_j}
\;\le\;\frac{1}{2\sqrt{\lambda}},
\]
where the bound follows by maximizing $x\mapsto \sqrt{x}/(1+\lambda x)$ over $x\ge 0$. Therefore
\[
\lambda\,\bigl\|H^{1/2}S_\lambda M^{-1/2} e_e\bigr\|_2
\;\le\;
\frac{\sqrt{\lambda}}{2}\,\|M^{-1/2}e_e\|_2
\;=\; \frac{\sqrt{\lambda}}{2}\,w_e^{-1/2}.
\]
Assumption~\ref{ass:smoothM} gives $\|H^{1/2}M^{1/2} b^\star\|_2=\sqrt{{b^\star}^\top L b^\star}\le B$, hence
\[
\text{(bias)}\ \le\ \frac{\sqrt{\lambda}}{2}\,B\,w_e^{-1/2}.
\]

\paragraph{Variance term.}
Let $a_e=S_\lambda M^{-1/2}e_e$ and write
\[
\langle a_e,M^{1/2}\xi\rangle=\sum_{j} \underbrace{\bigl(a_{e,j}\sqrt{w_j}\bigr)}_{=:c_j}\,\xi_j.
\]
Since $\{\xi_j\}$ are independent, centered, and sub-Gaussian with proxies $\{\nu_j^2\}$ (i.e., $\|\xi_j\|_{\psi_2}\le \nu_j$), then the linear form $S:=\sum_j c_j\xi_j$ is sub-Gaussian with proxy
\[
\sigma^2 \;\le\; \sum_j c_j^2 \nu_j^2 \;\le\; \kappa_+ \sum_j a_{e,j}^2 \;=\; \kappa_+ \,\|a_e\|_2^2,
\]
where we used $w_j\nu_j^2\le \kappa_+$.
The standard sub-Gaussian tail bound gives, for every $\delta\in(0,1)$,
\[
\mathbb{P}\!\left(
|S|\;\le\; \sigma \sqrt{2\log(2/\delta)}
\right)\ \ge\ 1-\delta.
\]
Thus we have
\[
\mathbb{P}\!\left(
\bigl| \langle a_e, M^{1/2}\xi \rangle \bigr|
\;\le\; \underbrace{\sqrt{\kappa_+}}_{=\,\sigma/\|a_e\|_2}\,\|a_e\|_2\,\sqrt{2\log(2/\delta)}
\right)\ \ge\ 1-\delta.
\]
See, e.g., \cite{vershynin2018high,wainwright2019high} for the sub-Gaussian norm and tail inequality for linear forms.

\paragraph{Uniform (maximal) bound.}
Applying a union bound over $e\in E^+$ and using $\alpha_e(\lambda)\le \alpha_\infty(\lambda)$ yields
\[
\mathbb{P}\!\left(
\max_{e\in E^+}\bigl|\hat b_e-b^\star_e\bigr|
\;\le\;
\frac{\sqrt{\lambda}}{2}\,B\,w_{\min}^{-1/2}
\;+\;
\,\sqrt{\kappa_+}\,\alpha_\infty(\lambda)\,\sqrt{2\log\!\bigl(2|E|/\delta\bigr)}
\right)\ \ge\ 1-\delta ,
\]
which completes the proof.
\end{proof}

\subsection{Proof of Theorem \ref{thm:two_path_bound}}
\begin{proof}
For any path \(P\in\mathcal P\), write
\[
\hat\mu(P)\coloneqq \sum_{e\in E(P)}\hat\mu_e,
\qquad
\mu(P)\coloneqq \sum_{e\in E(P)}\mu_e .
\]
By definition of \(\widehat P\) as a shortest path under \(\hat\mu\), we have
\(\hat\mu(\widehat P)\le \hat\mu(P^\star)\).

Moreover, for any path \(P\),
\[
\mu(P)
=\sum_{e\in E(P)}\mu_e
=\sum_{e\in E(P)}\hat\mu_e + \sum_{e\in E(P)}(\mu_e-\hat\mu_e)
=\hat\mu(P) + \sum_{e\in E(P)}(\mu_e-\hat\mu_e).
\]
Therefore,
\begin{align*}
\mu(\widehat P)-\mu(P^\star)
&=
\bigl[\hat\mu(\widehat P)-\hat\mu(P^\star)\bigr]
+
\sum_{e\in E(\widehat P)}(\mu_e-\hat\mu_e)
-
\sum_{e\in E(P^\star)}(\mu_e-\hat\mu_e)
\\
&\le
\sum_{e\in E(\widehat P)}|\mu_e-\hat\mu_e|
+
\sum_{e\in E(P^\star)}|\mu_e-\hat\mu_e|
\\
&\le
\sum_{e\in E(\widehat P)}\beta_e
+
\sum_{e\in E(P^\star)}\beta_e,
\end{align*}
where the last inequality uses the assumed edgewise bounds
\(|\hat\mu_e-\mu_e|\le \beta_e\) for all \(e\in E\).
This proves the first claim.

For the uniform bound, if \(\beta_e\le \bar\beta\) for all \(e\), then
\[
\mu(\widehat P)-\mu(P^\star)
\le
\bar\beta\,|E(\widehat P)| + \bar\beta\,|E(P^\star)|
\le 2\,\bar\beta\,L_{\max},
\]
since \(|E(P)|\le L_{\max}\) for every $v_\text{src}$--$v_\text{sink}$ path \(P\in\mathcal P\).
\end{proof}


\subsection{Proof of Theorem \ref{thm:aesp-correctness}}

\begin{proof}
Fix \(t\ge1\).
The estimator at round $t$ is
\[
\hat b_t=(M_t+\lambda L)^{-1}M_t\,y_t,\qquad
\hat\mu_e(t)=\bar c'_e+\hat b_{t,e}.
\]
Using \(M_t^{1/2}(I+\lambda H_t)M_t^{1/2}=M_t+\lambda L\) with \(H_t=M_t^{-1/2}LM_t^{-1/2}\), we can rewrite
\[
\hat b_t
= (M_t+\lambda L)^{-1}M_t\,y_t
= M_t^{-1/2}S_{\lambda,t} M_t^{1/2}\,y_t,
\qquad
S_{\lambda,t}=(I+\lambda H_t)^{-1}.
\]
Decompose the empirical bias vector as \(y_t=b^\star+\xi_t\), where \(b^\star_e=\mu_e-\mu'_e\) and \(\xi_{t,e}\) is the centered noise from Assumption~\ref{ass:noiseM_new}.
As in the static analysis,
\[
\hat b_t-b^\star
= M_t^{-1/2}S_{\lambda,t} M_t^{1/2}\,\xi_t
\;-\; \lambda\,M_t^{-1/2}S_{\lambda,t}H_t M_t^{1/2}\,b^\star.
\]
Taking the \(e\)-th coordinate and applying the triangle inequality and Cauchy--Schwarz,
\[
\bigl|\hat b_{t,e}-b^\star_e\bigr|
\ \le\ 
\underbrace{\lambda\bigl\|H_t^{1/2}S_{\lambda,t}M_t^{-1/2}e_e\bigr\|_2\,\|H_t^{1/2}M_t^{1/2}b^\star\|_2}_{\text{bias}}
\;+\;
\underbrace{\bigl|\langle S_{\lambda,t}M_t^{-1/2}e_e,\ M_t^{1/2}\xi_t\rangle\bigr|}_{\text{variance}}.
\]

For the bias term, we use the same spectral argument as in Theorem~\ref{thm:edgewise}.
The eigenvalues of \(\sqrt{\lambda} H_t^{1/2}S_{\lambda,t}\) are \(\sqrt{\lambda h}/(1+\lambda h)\in[0,1/2]\) for \(h\ge0\), hence
\[
\lambda\bigl\|H_t^{1/2}S_{\lambda,t}\bigr\|_{\mathrm{op}}\ \le\ \frac{1}{2}\sqrt{\lambda}.
\]
Together with \(\|M_t^{-1/2}e_e\|_2=w_e(t)^{-1/2}\) and \(\|H_t^{1/2}M_t^{1/2}b^\star\|_2=\|b^\star\|_L\le B\) (Assumption~\ref{ass:smoothM}), this yields
\[
\text{(bias)}\ \le\ \frac{\sqrt{\lambda}}{2}\,B\,w_e(t)^{-1/2}.
\]

For the variance term, Lemma~\ref{lem:anytime-sg} gives, with probability at least \(1-\delta\), simultaneously for all \(t\ge1\) and all \(e\in E\),
\[
\bigl|\langle S_{\lambda,t}M_t^{-1/2}e_e,\ M_t^{1/2}\xi_t\rangle\bigr|
\ \le\ \sqrt{\kappa_+}\,\alpha_e(\lambda;M_t)\,\sqrt{2\log\!\Bigl(\tfrac{2|E|\,\pi^2 t^2}{3\delta}\Bigr)}.
\]
Combining the two bounds and recalling that \(\hat\mu_e(t)-\mu_e=\hat b_{t,e}-b^\star_e\), we obtain
\[
\bigl|\hat\mu_e(t)-\mu_e\bigr|\ \le\ \beta_e(t)
\]
for all $t$ and \(e\) on an event of probability at least \(1-\delta\).
Summing over \(e\in E(P)\) yields
\[
\bigl|\hat\mu(P,t)-\mu(P)\bigr|\ \le\ \sum_{e\in E(P)}\beta_e(t)\ =\ \beta(P,t),
\]
so \(\mu(P)\in[\mathrm{LCB}(P,t),\mathrm{UCB}(P,t)]\) simultaneously for all \(P\) and $t$, proving the first claim.

For correctness: if \(\mathrm{UCB}(\widehat P_t,t)\le \mathrm{LCB}(\widetilde P_t,t)\), then for any \(P\in\mathcal P\),
\[
\mathrm{UCB}(\widehat P_t,t)\ \le\ \mathrm{LCB}(\widetilde P_t,t)\ \le\ \mathrm{LCB}(P,t).
\]
On \(\mathcal E_\delta\) we have \(\mu(P)\in[\mathrm{LCB}(P,t),\mathrm{UCB}(P,t)]\), hence
\(\mu(\widehat P_t)\le \mathrm{UCB}(\widehat P_t,t)\le \mathrm{LCB}(P,t)\le \mu(P)\) for all \(P\), i.e., \(\widehat P_t=P^\star\).
\end{proof}

\subsubsection{Lemma \ref{lem:alpha-weight} with Proof}
\label{appx:active-proofs}

\begin{lemma}[Leverage and weight]
\label{lem:alpha-weight}
Let \(H_t=M_t^{-1/2}LM_t^{-1/2}\succeq 0\) and \(S_{\lambda,t}=(I+\lambda H_t)^{-1}\).
For any edge \(e\) with \(w_e(t)>0\),
\[
\alpha_e(\lambda;M_t)\;=\;\bigl\|S_{\lambda,t}M_t^{-1/2}e_e\bigr\|_2\ \le\ \frac{1}{\sqrt{w_e(t)}}.
\]
\end{lemma}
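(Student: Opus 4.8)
The plan is to bound the operator norm of $S_{\lambda,t}$ and then exploit that $M_t^{-1/2}e_e$ has a clean Euclidean norm. First I would observe that $H_t = M_t^{-1/2}LM_t^{-1/2}$ is symmetric positive semidefinite, since $L\succeq 0$ and conjugation by the real symmetric matrix $M_t^{-1/2}$ preserves positive semidefiniteness. Writing a spectral decomposition $H_t = \sum_j \eta_j\, u_j u_j^\top$ with eigenvalues $\eta_j\ge 0$, the smoother becomes $S_{\lambda,t} = (I+\lambda H_t)^{-1} = \sum_j (1+\lambda\eta_j)^{-1} u_j u_j^\top$, so its eigenvalues are $(1+\lambda\eta_j)^{-1}\in(0,1]$. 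Hence $S_{\lambda,t}$ is a contraction: $\|S_{\lambda,t}\|_{\mathrm{op}}\le 1$.

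Next I would apply submultiplicativity of the operator norm:
\[
\alpha_e(\lambda;M_t) = \bigl\|S_{\lambda,t}M_t^{-1/2}e_e\bigr\|_2 \le \|S_{\lambda,t}\|_{\mathrm{op}}\,\bigl\|M_t^{-1/2}e_e\bigr\|_2 \le \bigl\|M_t^{-1/2}e_e\bigr\|_2.
\]
Since $M_t = \mathrm{diag}(w_e(t))$ is diagonal with strictly positive $e$-th entry (we are in the case $w_e(t)>0$), $M_t^{-1/2}e_e = w_e(t)^{-1/2}e_e$, so $\bigl\|M_t^{-1/2}e_e\bigr\|_2 = w_e(t)^{-1/2}$. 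Combining the two displays gives $\alpha_e(\lambda;M_t)\le w_e(t)^{-1/2}$, which is the claim.

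There is no real obstacle here; the only point requiring a touch of care is the well-definedness of $M_t^{-1/2}$, which is why the hypothesis $w_e(t)>0$ is imposed (and, as remarked in the main text, one would otherwise pass to a Moore--Penrose inverse or add a small $\delta>0$ to the diagonal). The argument is essentially the same one-line spectral estimate already used for the bias term in the proof of Theorem~\ref{thm:edgewise}; I would simply state the contraction property of $S_{\lambda,t}$ and the diagonal computation and conclude.
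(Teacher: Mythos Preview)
Your proposal is correct and matches the paper's own proof essentially line for line: both argue that $H_t\succeq 0$ forces $\|S_{\lambda,t}\|_{\mathrm{op}}\le 1$, then use $\|M_t^{-1/2}e_e\|_2=w_e(t)^{-1/2}$ together with submultiplicativity to conclude. The only difference is cosmetic---you spell out the spectral decomposition of $H_t$ explicitly, whereas the paper simply notes $0\preceq S_{\lambda,t}\preceq I$.
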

\begin{proof}
By definition, \(H_t\succeq 0\) implies \(0\preceq S_{\lambda,t}\preceq I\), hence \(\|S_{\lambda,t}\|_{\mathrm{op}}\le 1\).
Moreover \(M_t^{-1/2}e_e=e_e/\sqrt{w_e(t)}\), so
\[
\alpha_e(\lambda;M_t)\;=\;\bigl\|S_{\lambda,t}M_t^{-1/2}e_e\bigr\|_2
\ \le\ \|S_{\lambda,t}\|_{\mathrm{op}}\,\|M_t^{-1/2}e_e\|_2
\ \le\ \frac{1}{\sqrt{w_e(t)}}.
\]
\end{proof}

\subsubsection{Lemma \ref{lem:anytime-sg} with Proof}

\begin{lemma}[Anytime sub-Gaussian control]
\label{lem:anytime-sg}
Fix \(\delta\in(0,1)\) and an adapted process with per-edge noises as in Assumption~\ref{ass:noiseM_new}.
Let \(M_t=\mathrm{diag}(w_e(t))\), \(H_t=M_t^{-1/2}LM_t^{-1/2}\), and \(S_{\lambda,t}=(I+\lambda H_t)^{-1}\).
Write \(y_t=b^\star+\xi_t\), where \(b^\star_e=\mu_e-\mu'_e\) and \(\xi_{t,e}\) is the centered noise with conditional sub-Gaussian proxy \(\nu_e(t)\).
Then, with probability at least \(1-\delta\),
\[
\forall\,t\ge 1,\ \forall\,e\in E:\quad
\Bigl| \langle S_{\lambda,t}M_t^{-1/2}e_e,\ M_t^{1/2}\xi_t\rangle \Bigr|
\ \le\ \sqrt{\kappa_+}\,\alpha_e(\lambda;M_t)\,\sqrt{2\log\!\Bigl(\tfrac{2|E|\,\pi^2 t^2}{3\delta}\Bigr)}.
\]
\end{lemma}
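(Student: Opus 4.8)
The plan is to reduce the time-uniform bound to a union bound, over rounds $t$ and edges $e$, of the same fixed-design sub-Gaussian tail estimate used in the proof of Theorem~\ref{thm:edgewise}. First I would fix $t\ge1$ and $e\in E$ and write $a\coloneqq S_{\lambda,t}M_t^{-1/2}e_e$, so that the quantity to be controlled is the scalar linear form $\langle a,\,M_t^{1/2}\xi_t\rangle=\sum_{j\in E} a_j\sqrt{w_j(t)}\,\xi_{t,j}$. By Assumption~\ref{ass:noiseM_new} the coordinates $\xi_{t,j}$ are independent across edges, centered, and (conditionally) sub-Gaussian with proxy $\nu_j(t)^2$, while the predictable weights satisfy $w_j(t)\,\nu_j(t)^2\le\kappa_+$. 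Consequently this weighted sum is sub-Gaussian with variance proxy at most $\sum_{j} a_j^2\,w_j(t)\,\nu_j(t)^2\le\kappa_+\sum_j a_j^2=\kappa_+\|a\|_2^2=\kappa_+\,\alpha_e(\lambda;M_t)^2$, the last identity being just the definition of $\alpha_e(\lambda;M_t)$.

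Given the proxy bound, the standard sub-Gaussian tail inequality gives, for every $\delta'\in(0,1)$, that $|\langle a,M_t^{1/2}\xi_t\rangle|\le \sqrt{\kappa_+}\,\alpha_e(\lambda;M_t)\sqrt{2\log(2/\delta')}$ fails with probability at most $\delta'$. I would then allocate the confidence budget as $\delta'_{t,e}\coloneqq 3\delta/(\pi^2 t^2|E|)$, for which $2/\delta'_{t,e}=2|E|\pi^2 t^2/(3\delta)$ exactly reproduces the logarithmic term in the statement, and take a union bound over all $(t,e)$: since $\sum_{t\ge1}\sum_{e\in E}\delta'_{t,e}=(3\delta/\pi^2)\sum_{t\ge1}t^{-2}=(3\delta/\pi^2)(\pi^2/6)=\delta/2\le\delta$, the claimed inequality holds simultaneously for all $t\ge1$ and all $e\in E$ on an event of probability at least $1-\delta$.

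The step that requires care, and that I expect to be the only genuine subtlety, is the measurability/timing of the weight matrix $M_t$ relative to the noise vector $\xi_t$: because the fidelity weights $w_e(t)$ are merely predictable while $\xi_t$ aggregates the real samples already collected, conditioning naively on the full filtration $\mathcal F_{t-1}$ would make $\langle a,M_t^{1/2}\xi_t\rangle$ degenerate rather than sub-Gaussian. For the global greedy sampling rule of Algorithm~\ref{alg:a-esp} this difficulty disappears: once a deterministic tie-breaking order is fixed, the schedule $\{n_e(t)\}_{e\in E,\,t\ge1}$, and hence $M_t$, $H_t=M_t^{-1/2}LM_t^{-1/2}$, $S_{\lambda,t}$, and the coefficient vector $a$, are all deterministic, so $\langle a,M_t^{1/2}\xi_t\rangle$ is a fixed linear form in the independent centered sub-Gaussian real noises and the argument above applies verbatim (extending the process past the stopping time if needed). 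For a fully adaptive predictable weighting one would instead condition on the $\sigma$-field generated by the query decisions alone and replace the fixed-$t$ tail bound by a time-uniform self-normalized sub-Gaussian inequality; the resulting statement is identical, the $\pi^2 t^2$ factor again coming from the $\sum_t t^{-2}$ allocation. Finally, I would record the routine fact underlying the proxy $\nu_e(t)^2\le \sigma_e^2/n_e(t)+\sigma_e'^2/n'_e$, namely that it is the sum of the proxies of the $n_e(t)$-sample real average and the $n'_e$-sample simulator Monte-Carlo error, in agreement with Assumption~\ref{ass:noiseM_new}.
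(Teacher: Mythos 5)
Your proof follows the same route as the paper's: bound the inner product as a sub-Gaussian linear form with proxy at most $\kappa_+\,\alpha_e(\lambda;M_t)^2$ via the calibration $w_j(t)\nu_j(t)^2\le\kappa_+$, then union-bound over $(t,e)$ with the allocation $\delta_{t,e}=3\delta/(\pi^2 t^2\,|E|)$, which sums to $\delta/2\le\delta$. What you add, and it is a genuine contribution, is the correct diagnosis of the measurability issue that the paper elides. The paper's proof conditions on $\mathcal F_{t-1}$ (the full filtration of past queries and sample values) and then applies a conditional sub-Gaussian tail to $\xi_t$; but since $\bar c_e(t)$, and hence $\xi_t$, is built from samples already collected before round $t$, the inner product $\langle a,M_t^{1/2}\xi_t\rangle$ is itself $\mathcal F_{t-1}$-measurable, so the conditional distribution is degenerate and the argument as written is not quite rigorous. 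Your resolution for the global greedy rule is exactly right: the schedule $\{n_e(t)\}$ depends only on the fixed proxies $\{\sigma_e^2\}$, the deterministic initialization on $E(P^{\mathrm{sim}})$, and a fixed tie-break, so $M_t$, $S_{\lambda,t}$, and the coefficient vector $a$ are all deterministic, no conditioning is needed, and the unconditional fixed-design tail bound applies directly (extending the index $t$ past the random stopping time, as you observe). This fully covers the use of the lemma inside Theorem~\ref{thm:aesp-simple-sc}. One caveat: Theorem~\ref{thm:aesp-correctness} is stated for arbitrary adaptive sampling rules, and there your remedy (condition only on query decisions, or replace the fixed-$t$ tail with a time-uniform self-normalized inequality) is the right direction but is only sketched; in particular, a query rule that depends on observed sample values makes $n_e(t)$, and hence $M_t$, correlated with $\xi_t$, and that case would need to be carried through explicitly to justify the lemma at full generality.
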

\begin{proof}
The proof mirrors the static argument in Theorem~\ref{thm:edgewise} and uses a union bound over \((t,e)\).
Let \(\mathcal F_{t-1}\) be the \(\sigma\)-algebra generated by the past up to round \(t-1\).
Define
\[
a_{t,e}\;:=\;S_{\lambda,t}\,M_t^{-1/2}e_e\in\mathbb R^{|E|},
\qquad
Z_{t,e}\;:=\;\bigl\langle a_{t,e},\,M_t^{1/2}\,\xi_t\bigr\rangle
\;=\;\sum_{j\in E}\bigl(a_{t,e}\bigr)_j\sqrt{w_j(t)}\,\xi_{t,j}.
\]
Since \(M_t\) and \(S_{\lambda,t}\) are \(\mathcal F_{t-1}\)-measurable, the coefficient vector \(\{a_{t,e,j}\sqrt{w_j(t)}\}_j\) is fixed given \(\mathcal F_{t-1}\).
By Assumption~\ref{ass:noiseM_new}, each \(\xi_{t,j}\) is conditionally sub-Gaussian with proxy \(\nu_j(t)\), and
\[
\sigma_{t,e}^2\;:=\;\sum_{j\in E}\bigl(a_{t,e}\bigr)_j^2\,w_j(t)\,\nu_j(t)^2
\]
is a valid conditional variance proxy for \(Z_{t,e}\).
Using the calibration \(w_j(t)\nu_j(t)^2\le \kappa_+\) from Assumption~\ref{ass:noiseM_new}, we obtain
\begin{equation}\label{eq:proxy-upper}
\sigma_{t,e}^2\ \le\ \kappa_+\sum_{j}\bigl(a_{t,e}\bigr)_j^2
\ =\ \kappa_+\,\|a_{t,e}\|_2^2
\ =\ \kappa_+\,\alpha_e(\lambda;M_t)^2.
\end{equation}
Conditional on \(\mathcal F_{t-1}\), the standard sub-Gaussian tail bound therefore yields, for any \(\delta_{t,e}\in(0,1)\),
\begin{equation}\label{eq:fixed-t-e-final}
\Pr\!\left(\,\bigl|Z_{t,e}\bigr|
\ \le\ \sqrt{\kappa_+}\,\alpha_e(\lambda;M_t)\,\sqrt{2\log\frac{2}{\delta_{t,e}}}
\ \middle|\ \mathcal F_{t-1}\right)\ \ge\ 1-\delta_{t,e}.
\end{equation}

Now choose
\[
\delta_{t,e}\ :=\ \frac{3\delta}{\pi^2 t^2\,|E|}.
\]
We have \(\sum_{e\in E}\delta_{t,e}=3\delta/(\pi^2 t^2)\).
Applying a union bound over \(e\in E\) in~\eqref{eq:fixed-t-e-final} gives, for the whole vector at time $t$,
\begin{equation}\label{eq:union-over-e}
\Pr\!\left(\,\forall e\in E:\ \bigl|Z_{t,e}\bigr|
\ \le\ \sqrt{\kappa_+}\,\alpha_e(\lambda;M_t)\,\sqrt{2\log\!\Bigl(\tfrac{2|E|\,\pi^2 t^2}{3\delta}\Bigr)}
\ \middle|\ \mathcal F_{t-1}\right)\ \ge\ 1-\frac{3\delta}{\pi^2 t^2}.
\end{equation}

Finally, union bound over \(t=1,2,\dots\) using the weights \(3/(\pi^2 t^2)\):
\[
\sum_{t=1}^\infty \frac{3\delta}{\pi^2 t^2}\ =\ \frac{3\delta}{\pi^2}\sum_{t=1}^\infty \frac{1}{t^2}
\ =\ \frac{3\delta}{\pi^2}\cdot \frac{\pi^2}{6}\leq \delta.
\]
Thus, with probability at least \(1-\delta\), the event in \eqref{eq:union-over-e} holds simultaneously for all \(t\ge1\), which yields the stated bound.
\end{proof}

\subsection{Proof of Theorem \ref{thm:aesp-simple-sc}}

\begin{proof}[Proof of Theorem~\ref{thm:aesp-simple-sc}]
For simplicity of notation, we denote $m=|E|$ as the total number of edges in the given graph. Work on the event \(\mathcal E_\delta\) from Theorem~\ref{thm:aesp-correctness}, i.e.,
\(\mu(P)\in[\mathrm{LCB}(P,t),\mathrm{UCB}(P,t)]\) for all paths \(P\) and all rounds $t$.

At round $t$, let
\[
\widehat P_t\in\arg\min_{P\in\mathcal P} \hat\mu(P,t)
\quad\text{and}\quad
\tilde P_t\in\arg\min_{P\in\mathcal P\setminus\{\widehat P_t\}} \mathrm{LCB}(P,t)
\]
denote the empirical best path and the most optimistic challenger, and set
\(S_t:=E(\widehat P_t)\cup E(\tilde P_t)\).

The stopping test is \(\mathrm{UCB}(\widehat P_t,t)\le \mathrm{LCB}(\tilde P_t,t)\), equivalently
\[
\hat\mu(\tilde P_t,t)-\hat\mu(\widehat P_t,t)\ \ge\ \beta(\widehat P_t,t)+\beta(\tilde P_t,t).
\]
On \(\mathcal E_\delta\), we also have
\[
\hat\mu(\tilde P_t,t)-\hat\mu(\widehat P_t,t)
\ \ge\ \mu(\tilde P_t)-\mu(\widehat P_t)-\beta(\widehat P_t,t)-\beta(\tilde P_t,t),
\]
so it suffices to enforce
\begin{equation}
2\bigl[\beta(\widehat P_t,t)+\beta(\tilde P_t,t)\bigr]\ \le\ \mu(\tilde P_t)-\mu(\widehat P_t).
\label{A1-new}
\end{equation}

\medskip\noindent\textbf{Step 1: A sufficient \(\ell_2\) condition.}
Since each edge in \(S_t\) can appear in at most two of the two path-sums,
\[
\beta(\widehat P_t,t)+\beta(\tilde P_t,t)
=\sum_{e\in E(\widehat P_t)}\beta_e(t)+\sum_{e\in E(\tilde P_t)}\beta_e(t)
\ \le\ 2\sum_{e\in S_t}\beta_e(t).
\]
By Cauchy--Schwarz and \(|S_t|\le |E(\widehat P_t)|+|E(\tilde P_t)|\le 2L_{\max}\),
\[
\sum_{e\in S_t}\beta_e(t)
\ \le\
\sqrt{|S_t|}\,\Bigl(\sum_{e\in S_t}\beta_e(t)^2\Bigr)^{1/2}
\ \le\
\sqrt{2L_{\max}}\,\Bigl(\sum_{e\in S_t}\beta_e(t)^2\Bigr)^{1/2}.
\]
Hence,
\[
\beta(\widehat P_t,t)+\beta(\tilde P_t,t)
\ \le\
2\sqrt{2L_{\max}}\,
\Bigl(\sum_{e\in S_t}\beta_e(t)^2\Bigr)^{1/2}.
\]
Therefore a sufficient condition for~\eqref{A1-new} is
\begin{equation}
\sum_{e\in S_t}\beta_e(t)^2\ \le\ \frac{\bigl[\mu(\tilde P_t)-\mu(\widehat P_t)\bigr]^2}{32L_{\max}}.
\label{A2-new}
\end{equation}

\medskip\noindent\textbf{Step 2: Reducing to the gap \(\Delta\) without assuming \(\widehat P_t=P^\star\).}
We claim that, on \(\mathcal E_\delta\), the stronger condition
\begin{equation}
\sum_{e\in S_t}\beta_e(t)^2\ \le\ \frac{\Delta^2}{32L_{\max}}
\label{A3-new}
\end{equation}
already implies that the algorithm stops at round $t$.

Indeed, under~\eqref{A3-new}, we have
\[
\beta(\widehat P_t,t)+\beta(\tilde P_t,t)
\ \le\
2\sqrt{2L_{\max}}\cdot \frac{\Delta}{\sqrt{32L_{\max}}}
\ =\ \frac{\Delta}{2}.
\]
Also,
\[
\beta(\tilde P_t,t)
\le \sum_{e\in E(\tilde P_t)}\beta_e(t)
\le \sqrt{L_{\max}}\Bigl(\sum_{e\in S_t}\beta_e(t)^2\Bigr)^{1/2}
\le \frac{\Delta}{\sqrt{32}}
<\frac{\Delta}{2}.
\]

If \(\widehat P_t\neq P^\star\), then \(P^\star\) is feasible in the definition of \(\tilde P_t\), so
\[
\mathrm{LCB}(\tilde P_t,t)\ \le\ \mathrm{LCB}(P^\star,t)\ \le\ \mu(P^\star),
\]
where the last inequality uses \(\mathrm{LCB}(P,t)=\hat\mu(P,t)-\beta(P,t)\le \mu(P)\) on \(\mathcal E_\delta\).
On the other hand, still on \(\mathcal E_\delta\),
\[
\mathrm{LCB}(\tilde P_t,t)=\hat\mu(\tilde P_t,t)-\beta(\tilde P_t,t)\ \ge\ \mu(\tilde P_t)-2\beta(\tilde P_t,t).
\]
Combining yields
\[
\mu(\tilde P_t)-\mu(P^\star)\ \le\ 2\beta(\tilde P_t,t)\ <\ \Delta,
\]
which forces \(\tilde P_t=P^\star\) by the definition of the gap \(\Delta\).

But if \(\tilde P_t=P^\star\) and \(\widehat P_t\) is the empirical best, then
\(\hat\mu(\widehat P_t,t)\le \hat\mu(P^\star,t)\), and on \(\mathcal E_\delta\),
\[
\mu(\widehat P_t)-\beta(\widehat P_t,t)\ \le\ \hat\mu(\widehat P_t,t)
\ \le\ \hat\mu(P^\star,t)\ \le\ \mu(P^\star)+\beta(P^\star,t).
\]
Thus \(\mu(\widehat P_t)-\mu(P^\star)\le \beta(\widehat P_t,t)+\beta(P^\star,t)=\beta(\widehat P_t,t)+\beta(\tilde P_t,t)\le \Delta/2\),
contradicting \(\mu(\widehat P_t)-\mu(P^\star)\ge \Delta\) when \(\widehat P_t\neq P^\star\).
Therefore \(\widehat P_t=P^\star\).

With \(\widehat P_t=P^\star\), we have \(\mu(\tilde P_t)-\mu(\widehat P_t)\ge \Delta\) (since \(\tilde P_t\neq \widehat P_t\)), and since
\(\beta(\widehat P_t,t)+\beta(\tilde P_t,t)\le \Delta/2\), condition~\eqref{A1-new} holds and the stopping rule is satisfied.

Consequently, it suffices to drive~\eqref{A3-new}.

\medskip\noindent\textbf{Step 3: Upper bound \(\sum_{e\in S_t}\beta_e(t)^2\).}
Let \(L_t:=\log\!\bigl(2m\pi^2 t^2/(3\delta)\bigr)\) as in~\eqref{eq:edge-ci-active}.
Decompose \(\beta_e(t)=b_e(t)+v_e(t)\), where
\[
b_e(t)=\sqrt{\tfrac{\lambda}{2}}\,\frac{B}{\sqrt{w_e(t)}},
\qquad
v_e(t)=\sqrt{\kappa_+}\,\alpha_e(\lambda;M_t)\,\sqrt{2L_t}.
\]
Using \((x+y)^2\le 2x^2+2y^2\), Lemma~\ref{lem:alpha-weight}, and \(\alpha_e(\lambda;M_t)^2\le 1/w_e(t)\), we have
\[
\beta_e(t)^2\ \le\ 2b_e(t)^2+2v_e(t)^2
\ \le\ \frac{\lambda B^2}{w_e(t)} + \frac{4\kappa_+L_t}{w_e(t)}
\;=\;\bigl(\lambda B^2+4\kappa_+L_t\bigr)\,\frac{1}{w_e(t)}.
\]
Summing over \(e\in S_t\) gives
\begin{equation}
\sum_{e\in S_t}\beta_e(t)^2\ \le\
\Bigl(\lambda B^2+4\kappa_+L_t\Bigr)\,
\sum_{e\in S_t}\frac{1}{w_e(t)}.
\label{A4-new}
\end{equation}

Under Assumption~\ref{ass:negligible-sim}, Assumption~\ref{ass:noiseM_new} implies that for all \(e\) with \(w_e(t)>0\),
\[
\frac{1}{w_e(t)}\ \le\ \frac{1}{\kappa_-}\,\nu_e(t)^2
\ \le\ \frac{1}{\kappa_-}\,\frac{\sigma_e^2}{n_e(t)}.
\]
Plugging into~\eqref{A4-new} yields
\begin{equation}
\sum_{e\in S_t}\beta_e(t)^2\ \le\
\frac{\lambda B^2+4\kappa_+L_t}{\kappa_-}\,
\sum_{e\in S_t}\frac{\sigma_e^2}{n_e(t)}.
\label{A5-new}
\end{equation}

\medskip\noindent\textbf{Step 4: Bounding the allocation term via Lemma~\ref{lem:balance-Dt}.}
Let \(T_E(t):=\sum_{e\in E}n_e(t)\) be the total number of real samples collected by round $t$.
In Algorithm~\ref{alg:a-esp}, the global rule \(e_t\in\arg\max_{e\in E}\sigma_e^2/n_e(t)\) is exactly the greedy
variance-balancing allocation on \(D=E\).
Thus, Lemma~\ref{lem:balance-Dt} applied with \(D=E\) and \(K=m\) gives, for all rounds with \(T_E(t)\ge 2m\),
\[
\sum_{e\in E}\frac{\sigma_e^2}{n_e(t)}\ \le\ \frac{2m\,\Sigma^2}{T_E(t)}.
\]
Moreover, since the initialization queries each edge of \(P^{\mathrm{sim}}\) once and then exactly one additional edge per round,
we have \(T_E(t)=|E(P^{\mathrm{sim}})|+(t-1)\ge t\), hence
\begin{equation}
\sum_{e\in S_t}\frac{\sigma_e^2}{n_e(t)}
\ \le\
\sum_{e\in E}\frac{\sigma_e^2}{n_e(t)}
\ \le\
\frac{2m\,\Sigma^2}{T_E(t)}
\ \le\
\frac{2m\,\Sigma^2}{t}.
\label{A6-new}
\end{equation}

Combining~\eqref{A5-new} and~\eqref{A6-new} gives
\[
\sum_{e\in S_t}\beta_e(t)^2\ \le\
\frac{\lambda B^2+4\kappa_+L_t}{\kappa_-}\,\frac{2m\,\Sigma^2}{t}.
\]
Thus a sufficient condition for~\eqref{A3-new} is
\[
\frac{\lambda B^2+4\kappa_+L_t}{\kappa_-}\,\frac{2m\,\Sigma^2}{t}
\ \le\ \frac{\Delta^2}{32L_{\max}},
\]
that is,
\[
t\ \ge\ C_0\cdot \frac{m\,L_{\max}}{\kappa_-\,\Delta^2}\,
\Bigl(\lambda B^2+4\kappa_+L_t\Bigr)\,\Sigma^2,
\]
for a suitable universal constant \(C_0>0\).

Finally, recall \(L_t=\log(2m\pi^2 t^2/(3\delta))=O(\log(2m/\delta)+\log t)\).
A standard self-consistency argument yields a universal constant \(C>0\) such that
\[
T_\delta\ \le\
C\cdot
\frac{m\,L_{\max}}{\kappa_-\,\Delta^2}\,
\Bigl(\lambda B^2 + \kappa_+ \log\frac{2m}{\delta}\Bigr)\,
\Sigma^2.
\]
On \(\mathcal E_\delta\), this \(T_\delta\) suffices to ensure \(\mathrm{UCB}(\widehat P_t,t)\le\mathrm{LCB}(\tilde P_t,t)\) and thus termination with \(\widehat P_t=P^\star\) by Theorem~\ref{thm:aesp-correctness}.
This completes the proof.
\end{proof}

\subsubsection{Lemma \ref{lem:balance-Dt} with Proof}

\begin{lemma}[Greedy variance balancing on a fixed edge set]
\label{lem:balance-Dt}
Let \(D\subseteq E\) be a nonempty set of edges and write \(K\coloneqq|D|\).
For each \(e\in D\), let \(\sigma_e^2>0\) be the sub-Gaussian proxy of the real-data noise on edge \(e\).

Consider the greedy allocation on \(D\) defined as follows:
start from integer counts \(n_e(1)\in\{0,1\}\) (with the convention \(\sigma_e^2/0=+\infty\)) and for
\(s=2,3,\dots\) choose
\[
e_s \in \arg\max_{e\in D}\ \frac{\sigma_e^2}{\,n_e(s-1)\,},
\qquad
\text{and set }\,n_{e_s}(s)=n_{e_s}(s-1)+1,\ \ n_{e}(s)=n_e(s-1)\ \text{for }e\neq e_s.
\]
For each \(t\ge1\), let \(n_e(t)\) be the resulting counts and define the \emph{total} number of real samples
allocated to \(D\) up to time $t$ as
\[
T_D(t)\coloneqq \sum_{e\in D} n_e(t).
\]
Also define
\[
\Sigma_D^2\coloneqq\sum_{e\in D}\sigma_e^2,
\qquad
\Sigma^2\coloneqq\sum_{e\in E}\sigma_e^2.
\]

Then, for every integer \(t\ge1\) such that \(T_D(t)>K\),
\begin{equation}
\label{eq:Dt-alloc-bound}
\sum_{e\in D}\frac{\sigma_e^2}{n_e(t)}
\ \le\
\frac{K\,\Sigma_D^2}{\,T_D(t)-K\,}
\ \le\
\frac{K\,\Sigma^2}{\,T_D(t)-K\,}.
\end{equation}
In particular, if \(T_D(t)\ge 2K\) then
\[
\sum_{e\in D}\frac{\sigma_e^2}{n_e(t)}
\ \le\
\frac{2K\,\Sigma_D^2}{T_D(t)}
\ \le\
\frac{2K\,\Sigma^2}{T_D(t)}.
\]
\end{lemma}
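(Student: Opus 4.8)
The plan is to reduce the sum to $K$ times the largest normalised per-edge ratio and then bound that ratio by a monotonicity-plus-counting argument that exploits the greedy rule. Set $r(t)\coloneqq \max_{e\in D}\sigma_e^2/n_e(t)$. The trivial inequality $\sum_{e\in D}\sigma_e^2/n_e(t)\le K\,r(t)$ reduces the whole statement to proving $r(t)\le \Sigma_D^2/(T_D(t)-K)$ whenever $T_D(t)>K$; the remaining claims ($\Sigma_D^2\le\Sigma^2$ because $D\subseteq E$ and $\sigma_e^2>0$, and the $T_D(t)\ge 2K$ form via $T_D(t)-K\ge T_D(t)/2$) are then immediate.

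First I would dispose of the degeneracies created by the convention $\sigma_e^2/0=+\infty$. Since $n_e(1)\in\{0,1\}$ we have $T_D(1)\le K$, so $T_D(t)>K$ forces $t\ge 2$. Moreover, as long as some edge still has count $0$, the greedy rule selects such an edge (its ratio is $+\infty$), so all initially uncounted edges are exhausted within the first $K-T_D(1)$ rounds after round $1$, i.e.\ by the round at which $T_D$ first reaches $K$. Hence $T_D(t)>K$ implies $n_e(t)\ge1$ for every $e\in D$, and in particular $r(t)<\infty$.

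The core of the argument is that $r(s)$ is non-increasing in $s$: at round $s$ the selected edge $e_s$ is, by the greedy rule, one that attains $r(s-1)$, and incrementing its count strictly lowers its ratio while leaving all other ratios untouched, so the maximum cannot increase. Granting this, fix $e\in D$ and let $s_e\le t$ be the last round at which $e$ was selected (if such a round exists). Then $e$ attains the maximum at round $s_e$, so $\sigma_e^2/n_e(s_e-1)=r(s_e-1)\ge r(t)$, whence $n_e(t)-1=n_e(s_e-1)\le \sigma_e^2/r(t)$; if instead $e$ is never selected after round $1$, then $n_e(t)=n_e(1)\le1\le \sigma_e^2/r(t)+1$. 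In either case $n_e(t)\le \sigma_e^2/r(t)+1$. Summing over $e\in D$ gives $T_D(t)\le \Sigma_D^2/r(t)+K$, i.e.\ $r(t)\le \Sigma_D^2/(T_D(t)-K)$, which together with the reduction above proves \eqref{eq:Dt-alloc-bound}.

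I expect the only delicate part to be the bookkeeping around the $+\infty$ convention: one must verify that $r(t)$ is genuinely finite under the hypothesis $T_D(t)>K$, and that the monotonicity step and the per-edge inequality still go through (possibly only vacuously, since then $n_e(s_e-1)=0$) when the edge selected at some round had count $0$. Everything past that is elementary algebra, so I would keep the write-up short.
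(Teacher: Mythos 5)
Your argument is correct, and it reaches the same key inequality $n_e(t)-1\le \sigma_e^2/r(t)$ as the paper, but it gets there through a slightly different intermediate lemma. The paper's Step~1 establishes a \emph{pairwise dominance} relation $\sigma_e^2/n_e(t)\le \sigma_f^2/(n_f(t)-1)$ for any $e,f$ with $n_f(t)\ge 2$, proved by contradiction at the last round $f$ was incremented, and then specializes this with $e=e^\star$ (the edge attaining the max ratio) before summing over $f$. You instead isolate the global fact that the running maximum $r(s)=\max_{e\in D}\sigma_e^2/n_e(s)$ is non-increasing, and compare $r(t)$ directly to $r$ at the last round each $e$ was selected. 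Both routes exploit the same combinatorial insight (greedy keeps the ratios balanced, so the count at the last selection time bounds the later count), but your monotone-max framing is arguably cleaner and avoids stating a pairwise inequality that is only ever used with one argument fixed. Your treatment of the degenerate cases (edges never selected after round~1, the $+\infty$ convention, finiteness of $r(t)$ when $T_D(t)>K$) matches the paper's, and the final summation and algebra are identical. No gap.
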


\begin{proof}
For convenience, set \(r_e(s)\coloneqq \sigma_e^2/n_e(s)\) whenever \(n_e(s)>0\), with the convention
\(r_e(s)=+\infty\) if \(n_e(s)=0\).
The greedy rule picks at each step an index \(e_s\in\arg\max_{e\in D} r_e(s-1)\).

Fix $t$ with \(T_D(t)>K\). Under the initialization \(n_e(1)\in\{0,1\}\) and the convention
\(\sigma_e^2/0=+\infty\), the greedy rule samples any edge with \(n_e(s)=0\) before ever re-sampling an edge
with \(n_e(s)>0\). Hence \(T_D(t)>K\) implies \(n_e(t)\ge 1\) for all \(e\in D\), so all ratios below are finite.

\medskip\noindent\textbf{Step 1 (Pairwise dominance at termination).}
Take any two edges \(e,f\in D\) with \(n_f(t)\ge 2\).
We claim that
\begin{equation}
\label{eq:pairwise}
\frac{\sigma_e^2}{n_e(t)}\ \le\ \frac{\sigma_f^2}{\,n_f(t)-1\,}.
\end{equation}
Suppose, towards a contradiction, that
\(\sigma_e^2/n_e(t) > \sigma_f^2/(n_f(t)-1)\).
Let \(s^\star\) be the round when \(f\) was last incremented, so
\(n_f(s^\star-1)=n_f(t)-1\) and \(n_f(s^\star)=n_f(t)\).
Since counts only increase in $t$, we have \(n_e(s^\star-1)\le n_e(t)\), hence
\[
r_e(s^\star-1)
=\frac{\sigma_e^2}{n_e(s^\star-1)}
\ \ge\ \frac{\sigma_e^2}{n_e(t)}
\ >\ \frac{\sigma_f^2}{n_f(t)-1}
=\frac{\sigma_f^2}{n_f(s^\star-1)}
=r_f(s^\star-1).
\]
Thus \(r_e(s^\star-1) > r_f(s^\star-1)\), contradicting the fact that the greedy rule chose
\(f\) (rather than \(e\)) at round \(s^\star\).
Hence~\eqref{eq:pairwise} holds.

\medskip\noindent\textbf{Step 2 (Bounding the maximum ratio).}
Let
\[
\lambda_t\coloneqq \max_{e\in D} \frac{\sigma_e^2}{n_e(t)},
\]
and choose \(e^\star\in D\) such that
\(\lambda_t = \sigma_{e^\star}^2/n_{e^\star}(t)\).
Applying~\eqref{eq:pairwise} with \(e=e^\star\) and any \(f\in D\) with \(n_f(t)\ge 2\) yields
\[
\frac{\sigma_{e^\star}^2}{n_{e^\star}(t)}\ \le\ \frac{\sigma_f^2}{n_f(t)-1},
\]
which we rearrange as
\[
n_f(t)-1\ \le\ \frac{\sigma_f^2}{\sigma_{e^\star}^2}\,n_{e^\star}(t)
\qquad\text{for all }f\in D\text{ with }n_f(t)\ge 2.
\]
For edges with \(n_f(t)=1\) the inequality also holds trivially, since then \(n_f(t)-1=0\).
Therefore, for every \(f\in D\),
\[
n_f(t)-1\ \le\ \frac{\sigma_f^2}{\sigma_{e^\star}^2}\,n_{e^\star}(t).
\]

Summing this inequality over \(f\in D\) gives
\[
T_D(t)-K
=\sum_{f\in D}\bigl(n_f(t)-1\bigr)
\ \le\ \frac{n_{e^\star}(t)}{\sigma_{e^\star}^2}\sum_{f\in D}\sigma_f^2
\ =\ \frac{n_{e^\star}(t)}{\sigma_{e^\star}^2}\,\Sigma_D^2.
\]
Rearranging yields
\[
\lambda_t
=\frac{\sigma_{e^\star}^2}{n_{e^\star}(t)}
\ \le\ \frac{\Sigma_D^2}{T_D(t)-K}.
\]

\medskip\noindent\textbf{Step 3 (Bounding the objective).}
Finally,
\[
\sum_{e\in D}\frac{\sigma_e^2}{n_e(t)}
\ \le\ K\,\lambda_t
\ \le\ \frac{K\,\Sigma_D^2}{T_D(t)-K},
\]
which is the first inequality in~\eqref{eq:Dt-alloc-bound}.
The second inequality in~\eqref{eq:Dt-alloc-bound} follows from \(\Sigma_D^2\le\Sigma^2\).

If \(T_D(t)\ge 2K\), then \(T_D(t)-K\ge T_D(t)/2\), and we obtain
\[
\sum_{e\in D}\frac{\sigma_e^2}{n_e(t)}
\ \le\ \frac{K\,\Sigma_D^2}{T_D(t)-K}
\ \le\ \frac{2K\,\Sigma_D^2}{T_D(t)}
\ \le\ \frac{2K\,\Sigma^2}{T_D(t)},
\]
which yields the ``in particular'' statement.
\end{proof}
\section{Experiment Details}
\label{appx:experiment}
This appendix collects additional implementation details for the numerical experiments.

\subsection{Additional Details on \texttt{METR-LA} and \texttt{PEMS-BAY} \citep{j49q-ch56-25}}

These two datasets originate from California's transportation monitoring systems. \texttt{METR-LA} is collected from the traffic network in Los Angeles, while \texttt{PEMS-BAY} is derived from the Bay Area. Both datasets contain detailed traffic speed records aggregated at a 5-minute frequency. \texttt{METR-LA} covers data from March 1st to June 30th, 2012, across 207 sensors, whereas \texttt{PEMS-BAY} includes 325 sensors from January 1st to May 31st, 2017. In all, each dataset consists of a value matrix, which is in the shape of $[34272, 207]$ for \texttt{METR-LA} and $[52116, 325]$ for \texttt{PEMS-BAY}, and an adjacency matrix describing the edge connections. We defer the detailed data processing and generation steps to Appendix \ref{appx:real_data_gen}.

\subsection{Construction of the Similarity Matrix $W$}

Throughout the experiments, we apply the following constructions of the edge-similarity matrix \(W \in \mathbb{R}^{|E|\times |E|}\).

\begin{itemize}
    \item \textbf{Heat-kernel (diffusion) similarity.} 
    As a smoother alternative, we apply a diffusion process on the line-graph adjacency. Given the 1-hop adjacency \(A\) and its associated Laplacian \(L_{\mathrm{line}} = \mathrm{diag}(A\mathbf{1}) - A\), we define the heat kernel on the line graph as
    \[
        W_{\mathrm{diff}} = \exp\!\bigl(-t\, L_{\mathrm{line}}\bigr),
    \]
    where \(t>0\) controls the diffusion time. For small $t$, this kernel closely reflects the 1-hop structure, while larger $t$ spreads similarity across edges that are further apart in the line graph. After computing the matrix exponential, we symmetrize the result, zero out the diagonal, and threshold entries below a small cutoff (e.g., \(10^{-6}\)) to maintain sparsity. The diffusion kernel provides a continuous analogue of the multi-hop constructions and naturally weighs paths by both their length and multiplicity, making it well suited for graphs with complex or irregular topology. We adopt \(t = 0.5\) in all experiments.
\end{itemize}

We also test the following two hop-based alternatives. However,  empirically, the diffusion-based similarity matrix consistently outperforms these two constructions in our experiments. Thus, all results reported in the main paper use the diffusion kernel for \(W\).

\begin{itemize}
      \item \textbf{1-hop structural similarity.} 
    For two distinct edges \(e=(u,v)\) and \(e'=(u',v')\) in the underlying graph, we set
    \[
        W_{e,e'} = 
        \begin{cases}
        1, & \text{if } \{u,v\} \cap \{u',v'\} \neq \emptyset, \\
        0, & \text{otherwise}.
        \end{cases}
    \]
    Thus \(W_{e,e'} = 1\) whenever \(e\) and \(e'\) share an endpoint, encoding that incident edges are expected to have similar costs. We always set \(W_{e,e}=0\) on the diagonal.

    \item \textbf{2-hop structural similarity.} 
    To capture longer-range dependencies, we augment the 1-hop construction by including edges at distance two in the line graph. Let \(A\) denote the 1-hop adjacency matrix defined above, so that \(A_{e,e'} = 1\) if and only if \(e\) and \(e'\) share a node. The matrix \(A^2\) identifies pairs of edges connected by a 2-hop walk in the line graph, i.e., edges \(e\) and \(e'\) that do not share a node but each share a node with a common intermediate edge. We then define
    \[
        W_{e,e'} = 
        \begin{cases}
        1, & \text{if } A_{e,e'} > 0, \\
        \alpha, & \text{if } A_{e,e'} = 0 \text{ and } (A^2)_{e,e'} > 0, \\
        0, & \text{otherwise},
        \end{cases}
    \]
    where \(\alpha \in (0,1)\) is a decay parameter that discounts 2-hop connections relative to direct (1-hop) adjacency. We use \(\alpha = 0.5\) in all experiments. This construction allows biases to propagate over a wider neighborhood while still prioritizing immediate structural neighbors, improving imputation on edges with sparse or no real data.
\end{itemize}

\subsection{Additional Details on Real-Data Generation}
\label{appx:real_data_gen}

For the METR-LA and PEMS-BAY datasets, we work directly with the original traffic measurements and the provided sensor adjacency information to construct paired real and simulator cost distributions, as well as a road-network topology.

\paragraph{Real and simulated cost distributions.}
For each dataset, we first randomly select a contiguous one-week window of observations per seed. We focus on two time-of-day regimes whose traffic patterns differ substantially: a morning peak (6--9\,am) and an afternoon peak (3--6\,pm). This yields a sample size of 252 for each period. For every sensor \(e\) and each regime, we compute the empirical mean and variance of the recorded measurements over the selected weekly samples. We treat:
\begin{itemize}
    \item the afternoon (3--6\,pm) mean and variance as the real mean cost \(\mu_e\) and the real observation-noise variance, and
    \item the morning (6--9\,am) mean and variance as the simulator mean cost \(\mu'_e\) and the simulator observation-noise variance.
\end{itemize}
This construction induces systematic simulator bias from genuine day-night differences in the real traffic data, without imposing any explicit smoothness assumptions on the bias pattern. 

\paragraph{Reconstructing the road-network topology.}
The METR-LA and PEMS-BAY datasets do not provide a complete road-intersection graph, but they do include a sensor adjacency matrix based on pairwise road distances. We reconstruct an intersection-level graph with one edge per sensor as follows.
\begin{enumerate}
    \item \emph{Sensor-sensor adjacency.}  
    Let \(A^{\text{sens}}\) denote the provided sensor adjacency matrix, which encodes a Gaussian kernel of road distance between sensors. We first symmetrize this matrix and zero out entries below a small threshold (set to \(0.01\)). Two sensors are declared directly adjacent if their (symmetrized) weight exceeds this threshold; these adjacencies define an undirected ``sensor-sensor'' graph.
    \item \emph{Triangular loops (3-cliques).}  
    We enumerate all 3-cliques in the sensor-sensor graph and apply a triplet rule: whenever three sensors are mutually adjacent, we arrange their corresponding road segments to form a triangular loop, so that each pair of sensors in the triplet shares exactly one intersection node.
    \item \emph{Remaining adjacent pairs.}  
    For all remaining directly adjacent sensor pairs that are not part of any 3-clique, we enforce that the two sensors share exactly one endpoint node, so that they form a simple chain in the resulting intersection-level graph.
    \item \emph{Completion of endpoints.}  
    Any remaining free endpoints are then matched to newly created intersection nodes so that each sensor has exactly two endpoints. Finally, for each sensor \(e\), we connect its two assigned endpoints by an undirected edge labeled with that sensor’s index.
\end{enumerate}
The resulting intersection-level graph \(G = (V,E)\) is sparse, respects the sensor-adjacency structure, and contains exactly one undirected edge per sensor. This is the graph on which all experiments for METR-LA and PEMS-BAY are conducted.

\subsection{Implementation Details: Laplacian Calibration}
\label{appx:laplace_sim_impl}

This subsection provides the concrete numerical realization of the estimator in Section~\ref{sec:edge_cost_est} as used in all experiments.

\paragraph{Data-fidelity targets and weights.}
For each edge \(e\), we compute empirical means \(\bar c_e\) and \(\bar c'_e\) and (when available) unbiased sample variances \(s^2_{e}\), \(s^{\prime 2}_{e}\). The discrepancy vector is
\[
y_e=\bar c_e-\bar c'_e .
\]
We implement inverse-variance weights in \eqref{eqn:b_hat} as follows. For edges with \(n_e>0\),
\[
w_e \;=\;\min\!\left\{\frac{1}{\widehat{\mathrm{Var}}(y_e)},\ w_{\max}\right\},\qquad
\widehat{\mathrm{Var}}(y_e)=\widehat{\mathrm{Var}}(\bar c_e)+\widehat{\mathrm{Var}}(\bar c'_e),
\]
where \(\widehat{\mathrm{Var}}(\bar c_e)=s^2_e/n_e\) if \(n_e\ge 2\) and otherwise \(\sigma_e^2/n_e\); analogously for synthetic data. A small floor \(\widehat{\mathrm{Var}}(y_e)\leftarrow \max\{\widehat{\mathrm{Var}}(y_e),10^{-8}\}\) avoids division by zero. We set \(w_e=0\) when \(n_e=0\), so unobserved edges contribute only via Laplacian smoothing. We cap weights at \(w_{\max}=10^6\) for numerical stability.

\paragraph{Solving \((M+\lambda L)\hat b = My\).}
Let \(M=\mathrm{diag}(w)\). The estimator \(\hat b\) is obtained by solving the normal equations implied by \eqref{eqn:b_hat}:
\[
(M+\lambda L)\,\hat b = M y .
\]
Because some similarity components may have no real data, we solve component-wise. Let \(\{C\}\) be the connected components of the similarity graph induced by \(W\). For each component \(C\):
\begin{itemize}
    \item If \(w_e=0\) for all \(e\in C\), the component is uninformed; we set \(\hat b_e=0\) on \(C\), i.e., we fall back to the synthetic anchor.
    \item Otherwise, we restrict to \(C\) and solve \((M_C+\lambda L_C)\hat b_C=M_C y_C\) using a direct sparse linear solver. If the restricted system is numerically rank-deficient, we instead compute a least-squares solution via an iterative Krylov method \citep{li2025krylov}.
\end{itemize}
When \(\lambda = 0\), we skip smoothing and set \(\hat b_e=y_e\) for observed edges and \(0\) for unobserved edges. The calibrated mean estimate is always \(\hat\mu_e=\bar c'_e+\hat b_e\).

\paragraph{\(\lambda\) tuning in simulations.}
Unless a fixed \(\lambda\) is specified, we tune \(\lambda\) over a preset grid ranging from 0 to 100 and scaling in logarithmic order $(0, 0.0001, 0.001,\ldots, 1, 5, 10, 20, 50, 100)$, by minimizing Stein’s Unbiased Risk Estimate (SURE) score. Fix a candidate \(\lambda>0\). Let \(A_\lambda:=M+\lambda L\) and \(\hat b(\lambda)=A_\lambda^{-1}My\) denote the Laplacian-regularized bias estimator. The SURE score is computed as
\[
\mathrm{SURE}(\lambda)
\;=\;
(\hat b(\lambda)-y)^\top M(\hat b(\lambda)-y)
\;+\;
2\,\mathrm{df}(\lambda),
\qquad
\mathrm{df}(\lambda):=\mathrm{tr}\!\bigl(A_\lambda^{-1}M\bigr),
\]
where the first term is the weighted residual fit and \(\mathrm{df}(\lambda)\) is the degrees of freedom of the linear smoother \(\hat b(\lambda)=A_\lambda^{-1}My\). We evaluate \(\mathrm{SURE}(\lambda)\) for each \(\lambda\) in the grid and pick the minimizer \(\hat\lambda\). As the involved experiments in this paper have $|E|$ at a few hundreds, we form a dense representation of \(A_\lambda\) and use a Moore-Penrose inverse to obtain \(A_\lambda^{-1}M\), then take \(\mathrm{tr}(A_\lambda^{-1}M)\) exactly.

\subsection{Implementation Details: Active Learning}
\label{appx:active_esp_impl}

This subsection describes the numerical realization of Algorithm~\ref{alg:a-esp} and its confidence computations.

\paragraph{Weights and bias estimation.}
At round $t$, we form fidelity weights using the negligible-simulator-variance regime (Assumption~\ref{ass:negligible-sim}):
\[
w_e(t)=\kappa_-\,\frac{n_e(t)}{\sigma_e^2},
\]
The discrepancy vector is \(y_e(t)=\bar c_e(t)-\bar c'_e\), and we apply the Laplacian calibration procedure above to obtain \(\hat b(t)\) and \(\hat\mu(t)=\bar c'+\hat b(t)\).

\paragraph{Confidence radii and matrix inverses.}
To compute \(\beta_e(t)\), we implement the smoother
\[
S_\lambda(M_t)=(I+\lambda H_t)^{-1},\qquad
H_t=M_t^{-1/2} L M_t^{-1/2},
\]
where \(M_t=\mathrm{diag}(w(t))\). Since \(|E|\) is at most a few hundred in our datasets, we compute \(S_\lambda(M_t)\) explicitly from a dense representation of \(I+\lambda H_t\), using a Moore-Penrose inverse and standard numerical fallbacks if the matrix is ill-conditioned. This yields stable estimates of \(\alpha_e(\lambda;M_t)=\|S_\lambda(M_t)M_t^{-1/2}e_e\|_2\) and then \(\beta_e(t)\) via equation~(\ref{eq:edge-ci-active}). For unobserved edges with \(w_e(t)=0\), we avoid infinite radii by using a conservative finite fallback based on the smallest observed weight and largest observed \(\alpha_e\); we inflate this fallback by a factor of 10 to preserve correctness of the stopping test under finite precision.

\paragraph{Shortest paths with potentially negative weights.}
Because \(\hat\mu_e(t)\) and the lower-confidence costs \(\hat\mu_e(t)-\beta_e(t)\) may become negative (e.g., due to calibration noise), we explicitly handle negative weights. When all edge costs on a call are nonnegative, we compute shortest paths using a Dijkstra algorithm. If any edge cost is negative, we compute shortest paths using the Bellman--Ford algorithm; if a negative cycle is detected on a route, we conservatively treat the corresponding path cost as \(+\infty\).

\paragraph{Challenger path computation.}
The ``most optimistic challenger'' is computed as the minimum-LCB path distinct from the empirical best path. If all LCB edge costs are nonnegative, we enumerate simple paths in increasing LCB order and take the first path that differs from the empirical best path. If some LCB edge costs are negative, such enumeration is not valid and we avoid shifting all edges by a constant (which can change ordering when paths have different lengths). Instead, we (i) compute the LCB-shortest path via Bellman--Ford; (ii) if it differs from the best path we return it; else (iii) we compute the second LCB-shortest \emph{simple} path using the \(K=2\) case of Yen's algorithm with Bellman--Ford on spur graphs \citep{yen1970algorithm}. If no alternative simple path exists, the challenger defaults to the best path.

\paragraph{Query rule and stopping.}
After updating \(\hat\mu(t)\) and \(\beta(t)\), A-ESP checks the stopping condition \(\mathrm{UCB}(\widehat P_t,t)\le \mathrm{LCB}(\widetilde P_t,t)\). If it fails, we query the globally most uncertain edge,
\[
e_t\in\arg\max_{e\in E}\frac{\sigma_e^2}{n_e(t)}\quad\text{with}\quad \sigma_e^2/0:=+\infty,
\]
where unqueried edges are treated as having infinite ratio.

\end{document}